\numberwithin{equation}{section}
\theoremstyle{definition}
\newtheorem{defn}{\protect\definitionname}
\theoremstyle{plain}
\newtheorem{thm}{\protect\theoremname}
\theoremstyle{plain}
\newtheorem{cor}{\protect\corollaryname}
\theoremstyle{plain}
\newtheorem*{prop*}{\protect\propositionname}
\theoremstyle{remark}
\newtheorem{rem}{\protect\remarkname}
\theoremstyle{plain}
\newtheorem{prop}{\protect\propositionname}
\theoremstyle{plain}
\newtheorem{lem}{\protect\lemmaname}
\author{Arthur Jacot, Franck Gabriel \& Cl\'ement Hongler \\
Chair of Statistical Field Theory\\
Ecole Polytechnique F\'ed\'erale de Lausanne\\
%Pittsburgh, PA 15213, USA \\
\texttt{\{arthur.jacot,franck.grabriel,clement.hongler\}@epfl.ch}
}
\providecommand{\corollaryname}{Corollary}
\providecommand{\definitionname}{Definition}
\providecommand{\lemmaname}{Lemma}
\providecommand{\propositionname}{Proposition}
\providecommand{\remarkname}{Remark}
\providecommand{\theoremname}{Theorem}
\begin{document}
\title{The asymptotic spectrum of the Hessian of DNN throughout training}

\maketitle
\begin{abstract}
The dynamics of DNNs during gradient descent is described by the so-called Neural Tangent Kernel (NTK). In this article, we show that the NTK allows one to gain precise insight into the Hessian of the cost of DNNs. When the NTK is fixed during training, we obtain a full characterization of the asymptotics of the spectrum of the Hessian, at initialization and during training. In the so-called mean-field limit, where the NTK is not fixed during training, we describe the first two moments of the Hessian at initialization.
\end{abstract}

\section{Introduction}

The advent of deep learning has sparked a lot of interest in the loss
surface of deep neural networks (DNN), and in particular its Hessian.
However to our knowledge, there is still no theoretical description
of the spectrum of the Hessian. Nevertheless a number of phenomena
have been observed numerically.

The loss surface of neural networks has been compared to the energy
landscape of different physical models \citep{Choromanska,Geiger18,Mei2018}.
It appears that the loss surface of DNNs may change significantly
depending on the width of the network (the number of neurons in the
hidden layer), motivating the distinction between the under- and over-parametrized
regimes \citep{Baity18,Geiger18,geiger2019}.

The non-convexity of the loss function implies the existence of a
very large number of saddle points, which could slow down training.
In particular, in \citep{Pascanu2014,Dauphin2014}, a relation between
the rank of saddle points (the number of negative eigenvalues of the
Hessian) and their loss has been observed.

For overparametrized DNNs, a possibly more important phenomenon is
the large number of flat directions \citep{Baity18}. The existence
of these flat minima is conjectured to be related to the generalization
of DNNs and may depend on the training procedure \citep{Hochreiter97,Chaudhari16,Wu2017}.

In \citep{jacot2018neural} it has been shown, using a functional approach,
that in the infinite-width limit, DNNs behave like kernel methods
with respect to the so-called Neural Tangent Kernel, which is determined
by the architecture of the network. This leads to convergence guarantees for DNNs \citep{jacot2018neural,Du2019,Allen-Zhu2018,huang2019NTH} and strengthens the connections
between neural networks and kernel methods \citep{Neal1996,Cho2009,Lee2017}.

Our approach also allows one to probe the so-called mean-field/active limit (studied in \citep{Rotskoff2018,Chizat2018,Mei2018} for shallow networks), where the NTK varies during training.
%They rely on the fact that in the region of the parameter space that the network explores during training (which is a very small region when the network is over-parametrized), the loss is convex, the network hence seems to never approach any saddle points.

This raises the question: can we use these new results to gain insight
into the behavior of the Hessian of the loss of DNNs, at least in
the small region explored by the parameters during training?

\subsection{Contributions}

Following ideas introduced in \citep{jacot2018neural}, we consider
the training of $L+1$-layered DNNs in a functional setting. For a
functional cost $\mathcal{C}$, the Hessian of the loss $\mathbb{R}^{P}\ni\theta\mapsto\mathcal{C}\left(F^{\left(L\right)}\left(\theta\right)\right)$
is the sum of two $P\times P$ matrices $I$ and $S$. We show the
following results for large $P$ and for a fixed number of datapoints
$N$: 
\begin{itemize}
\item The first matrix $I$ is positive semi-definite and its eigenvalues
are given by the (weighted) kernel PCA of the dataset with respect
to the NTK. The dominating eigenvalues are the principal components
of the data followed by a high number of small eigenvalues. The ``flat
directions'' are spanned by the small eigenvalues and the null-space
(of dimension at least $P-N$ when there is a single output). Because
the NTK is asymptotically constant \citep{jacot2018neural}, these
results apply at initialization, during training and at convergence. 
\item The second matrix $S$ can be viewed as residual contribution to $H$,
since it vanishes as the network converges to a global minimum. We
compute the limit of the first moment $\mathrm{Tr}\left(S\right)$
and characterize its evolution during training, of the second moment
$\mathrm{Tr}\left(S^{2}\right)$ which stays constant during training,
and show that the higher moments vanish.
\item Regarding the sum $H=I+S$, we show that the matrices $I$ and $S$
are asymptotically orthogonal to each other at initialization and
during training. In particular, the moments of the matrices $I$ and
$S$ add up: $tr(H^{k})\approx tr(I^{k})+tr(S^{k})$. 
\end{itemize}
These results give, for any depth and a fairly general non-linearity,
a complete description of the spectrum of the Hessian in terms of
the NTK at initialization and throughout training. Our theoretical
results are consistent with a number of observations about the Hessian \citep{Hochreiter97,Pascanu2014,Dauphin2014,Chaudhari16,Wu2017,Pennington2017,Geiger18},
and sheds a new light on them. 

\subsection{Related works}

The Hessian of the loss has been studied through the decomposition
$I+S$ in a number of previous works \citep{Sagun,Pennington2017,Geiger18}.

For least-squares and cross-entropy costs, the first matrix $I$ is
equal to the Fisher matrix \citep{Wagenaar1998,Pascanu2013}, whose
moments have been described for shallow networks in \citep{Pennington2018}.
For deep networks, the first two moments and the operator norm of
the Fisher matrix for a least squares loss were computed at initialization
in \citep{Karakida2018} conditionally on a certain independence assumption;
our method does not require such assumptions. Note that their approach
implicitly uses the NTK.%
\begin{comment}
As explained in Appendix A, with the parametrization they use, the
NTK grows linearly in the width of the network: this explains why
they observe that the Fisher matrix $I$ explodes in the infinite-width
limit; with our parametrization the NTK converges to a finite limit,
and so does the spectrum of $I$.
\end{comment}

The second matrix $S$ has been studied in \citep{Pennington2017,Geiger18}
for shallow networks, conditionally on a number of assumptions. Note
that in the setting of \citep{Pennington2017}, the matrices $I$ and
$S$ are assumed to be freely independent, which allows them to study
the spectrum of the Hessian; in our setting, we show that the two
matrices $I$ and $S$ are asymptotically orthogonal to each other.

\section{Setup}

We consider deep fully connected artificial neural networks (DNNs)
using the setup and NTK parametrization of \citep{jacot2018neural},
taking an arbitrary nonlinearity $\sigma\in C_{b}^{4}(\mathbb{R})$
(i.e. $\sigma:\mathbb{R}\to\mathbb{R}$ that is 4 times continuously
differentiable function with all four derivatives bounded). The layers
are numbered from $0$ (input) to $L$ (output), each containing $n_{\ell}$
neurons for $\ell=0,\ldots,L$. The $P=\sum_{\ell=0}^{L-1}\left(n_{\ell}+1\right)n_{\ell+1}$
parameters consist of the weight matrices $W^{(\ell)}\in\mathbb{R}^{n_{\ell+1}\times n_{\ell}}$
and bias vectors $b^{(\ell)}\in\mathbb{R}^{n_{\ell+1}}$ for $\ell=0,\ldots,L-1$.
We aggregate the parameters into the vector $\theta\in\mathbb{R}^{P}$. 

The activations and pre-activations of the layers are defined recursively
for an input $x\in\mathbb{R}^{n_{0}}$, setting $\alpha^{(0)}(x;\theta)=x$
: 
\begin{align*}
\tilde{\alpha}^{(\ell+1)}(x;\theta) & =\frac{1}{\sqrt{n_{\ell}}}W^{(\ell)}\alpha^{(\ell)}(x;\theta)+\beta b^{(\ell)},\\
\alpha^{(\ell+1)}(x;\theta) & =\sigma\big(\tilde{\alpha}^{(\ell+1)}(x;\theta)\big).
\end{align*}
The parameter $\beta$ is added to tune the influence of the bias
on training\footnote{In our experiments, we take $\beta=0.1$.}. All
parameters are initialized as iid $\mathcal{N}(0,1)$ Gaussians.

We will in particular study the network function, which maps inputs
$x$ to the activation of the output layer (before the last non-linearity):
\[
f_{\theta}(x)=\tilde{\alpha}^{(L)}(x;\theta).
\]
\begin{comment}
Note that we add a factor $\nicefrac{1}{\sqrt{n_{\ell}}}$ in the
definition of the activations. The more usual parametrization of neural
networks does not have these factors but instead initializes the weights
with a Le Cun initialization (initializing the connection weights
with standard deviation $\nicefrac{1}{\sqrt{n_{\ell}}}$). In the
Appendix, we discuss how our parametrization relates to the usual
parametrization with Le Cun initialization (which is used by default
in PyTorch for example). While our parametrization has a more consistent
asymptotic behavior, our results apply to both parametrizations (up
to a scaling of the learning rate \citep{Park2018,SGD_width_Park2019}
and some parameters becoming obsolete in the infinite-width limit). 
\end{comment}

In this paper, we will study the limit of various objects as \emph{$n_{1},\ldots,n_{L}\to\infty$
sequentially}, i.e. we first take $n_{1}\to\infty,$ then $n_{2}\to\infty$,
etc. This greatly simplifies the proofs, but they could in principle
be extended to the simultaneous limit, i.e. when $n_{1}=...=n_{L-1}\to\infty$.
All our numerical experiments are done with `rectangular' networks
(with $n_{1}=...=n_{L-1}$) and match closely the predictions for
the sequential limit.

In the limit we study in this paper, the NTK is asymptotically fixed,
as in \citep{jacot2018neural,Allen-Zhu2018,Du2019,exact_arora2019,huang2019NTH}.
By rescaling the outputs of DNNs as the width increases, one can reach
another limit where the NTK is not fixed \citep{Chizat2018,chizat2018note,Rotskoff2018,mei2019mean}.
Some of our results can be extended to this setting, but only at initialization (see Section \ref{sec:meanfield}). The behavior during training becomes however much more complex.

\subsection{Functional viewpoint}

The network function lives in a function space $f_{\theta}\in\mathcal{F}:=\left[\mathbb{R}^{n_{0}}\to\mathbb{R}^{n_{L}}\right]$
and we call the function $F^{(L)}:\mathbb{R}^{P}\to\mathcal{F}$ that
maps the parameters $\theta$ to the network function $f_{\theta}$
the \emph{realization function}. We study the differential behavior
of $F^{(L)}$: 
\begin{itemize}
\item The derivative $\mathcal{D}F^{(L)}\in\mathbb{R}^{P}\otimes\mathcal{F}$
is a function-valued vector of dimension $P$. The $p$-th entry $\mathcal{D}_{p}F^{(L)}=\partial_{\theta_{p}}f_{\theta}\in\mathcal{F}$
represents how modifying the parameter $\theta_{p}$ modifies the
function $f_{\theta}$ in the space $\mathcal{F}$. 
\item The Hessian $\mathcal{H}F^{(L)}\in\mathbb{R}^{P}\otimes\mathbb{R}^{P}\otimes\mathcal{F}$
is a function-valued $P\times P$ matrix.
\end{itemize}
The network is trained with respect to the cost functional: 
\[
\mathcal{C}(f)=\frac{1}{N}\sum_{i=1}^{N}c_{i}\left(f(x_{i})\right),
\]
for strictly convex $c_{i}$, summing over a finite dataset $x_{1},\ldots,x_{N}\in\mathbb{R}^{n_{0}}$
of size $N$. The parameters are then trained with gradient descent
on the composition $\mathcal{C}\circ F^{(L)}$, which defines the
usual loss surface of neural networks.

In this setting, we define the finite realization function $Y^{(L)}:\mathbb{R}^{P}\to\mathbb{R}^{Nn_{L}}$
mapping parameters $\theta$ to be the restriction of the network
function $f_{\theta}$ to the training set $y_{ik}=f_{\theta,k}(x_{i})$.
The Jacobian $\mathcal{D}Y^{(L)}$ is hence an $Nn_{L}\times P$ matrix
and its Hessian $\mathcal{H}Y^{(L)}$ is a $P\times P\times Nn_{L}$
tensor. Defining the restricted cost $C(y)=\frac{1}{N}\sum_{i}c_{i}(y_{i})$,
we have $\mathcal{C}\circ F^{(L)}=C\circ Y^{(L)}$.

For our analysis, we require that the gradient norm $\left\Vert \mathcal{D}C\right\Vert $
does not explode during training. The following condition is sufficient:
\begin{defn}
A loss $C:\mathbb{R}^{Nn_{L}}\to\mathbb{R}$ has bounded gradients
over sublevel sets (BGOSS) if the norm of the gradient is bounded
over all sets $U_{a}=\left\{ Y\in\mathbb{R}^{Nn_{L}}:C(Y)\leq a\right\} $.

For example, the Mean Square Error (MSE) $C(Y)=\frac{1}{2N}\left\Vert Y^{*}-Y\right\Vert ^{2}$
for the labels $Y^{*}\in\mathbb{R}^{Nn_{L}}$ has BGOSS because $\left\Vert \nabla C(Y)\right\Vert ^{2}=\frac{1}{N}\left\Vert Y^{*}-Y\right\Vert ^{2}=2C(Y)$.
For the binary and softmax cross-entropy the gradient is uniformly
bounded, see Proposition \ref{prop:bounded-gradient-cross-entropy}
in Appendix \ref{sec:Proofs}.
\end{defn}

\subsection{Neural Tangent Kernel}

The behavior during training of the network function $f_{\theta}$
in the function space $\mathcal{F}$ is described by a (multi-dimensional)
kernel, the \emph{Neural Tangent Kernel} (NTK) 
\[
\Theta_{k,k'}^{(L)}(x,x')=\sum_{p=1}^{P}\partial_{\theta_{p}}f_{\theta,k}(x)\partial_{\theta_{p}}f_{\theta,k'}(x').
\]

During training, the function $f_{\theta}$ follows the so-called
\emph{kernel gradient descent} with respect to the NTK, which is defined
as 
\begin{align*}
\partial_{t}f_{\theta(t)}(x) & =-\nabla_{\Theta^{(L)}}C_{|f_{\theta(t)}}(x):=-\frac{1}{N}\sum_{i=1}^{N}\Theta^{(L)}(x,x_{i})\nabla c{}_{i}(f_{\theta(t)}(x_{i})).
\end{align*}

In the infinite-width limit (letting $n_{1}\to\infty,\ldots,n_{L-1}\to\infty$
sequentially) and for losses with BGOSS, the NTK converges to a deterministic
limit $\Theta^{(L)}\to\Theta_{\infty}^{(L)}\otimes Id_{n_{L}}$, which
is constant during training, uniformly on finite time intervals $\left[0,T\right]$
\citep{jacot2018neural}. For the MSE loss, the uniform
convergence of the NTK was proven for $T=\infty$ in \citep{exact_arora2019}.

The limiting NTK $\Theta_{\infty}^{(L)}:\mathbb{R}^{n_{0}}\times\mathbb{R}^{n_{0}}\to\mathbb{R}$
is constructed as follows:
\begin{enumerate}
\item For $f,g:\mathbb{R}\to\mathbb{R}$ and a kernel $K:\mathbb{R}^{n_{0}}\times\mathbb{R}^{n_{0}}\to\mathbb{R}$,
define the kernel $\mathbb{L}_{K}^{f,g}:\mathbb{R}^{n_{0}}\times\mathbb{R}^{n_{0}}\to\mathbb{R}$
by 
\[
\mathbb{L}_{K}^{f,g}(x_{0},x_{1})=\mathbb{E}_{(a_{0},a_{1})}\left[f(a_{0})g(a_{1})\right],
\]
for $(a_{0},a_{1})$ a centered Gaussian vector with covariance matrix
$\left(K(x_{i},x_{j})\right)_{i,j=0,1}$. For $f=g$, we denote by
$\mathbb{L}_{K}^{f}$ the kernel $\mathbb{L}_{K}^{f,f}$. 
\item We define the kernels $\Sigma_{\infty}^{(\ell)}$ for each layer of
the network, starting with $\Sigma_{\infty}^{(1)}(x_{0},x_{1})=\nicefrac{1}{n_{0}}(x_{0}^{T}x_{1})+\beta^{2}$
and then recursively by $\Sigma_{\infty}^{(\ell+1)}=\mathbb{L}_{\Sigma_{\infty}^{(\ell)}}^{\sigma}+\beta^{2}$,
for $\ell=1,\ldots,L-1$, where $\sigma$ is the network non-linearity. 
\item The limiting NTK $\Theta_{\infty}^{(L)}$ is defined in terms of the
kernels $\Sigma_{\infty}^{(\ell)}$ and the kernels $\dot{\Sigma}_{\infty}^{(\ell)}=\mathbb{L}_{\Sigma_{\infty}^{(\ell-1)}}^{\dot{\sigma}}$:
\[
\Theta_{\infty}^{(L)}=\sum_{\ell=1}^{L}\Sigma_{\infty}^{(\ell)}\dot{\Sigma}_{\infty}^{(\ell+1)}\ldots\dot{\Sigma}_{\infty}^{(L)}.
\]
\end{enumerate}
The NTK leads to convergence guarantees for DNNs in the infinite-width
limit, and connect their generalization to that of kernel
methods \citep{jacot2018neural,exact_arora2019}.

\subsection{Gram Matrices}

For a finite dataset $x_{1},\ldots,x_{N}\in\mathbb{R}^{n_{0}}$ and
a fixed depth $L\geq1$, we denote by $\tilde{\Theta}\in\mathbb{R}^{Nn_{L}\times Nn_{L}}$
the Gram matrix of $x_{1},\ldots,x_{N}$ with respect to the limiting
NTK, defined by
\[
\tilde{\Theta}_{ik,jm}=\Theta_{\infty}^{\left(L\right)}\left(x_{i},x_{j}\right)\delta_{km}.
\]

It is block diagonal because different outputs $k\neq m$ are asymptotically
uncorrelated. 

Similarly, for any (scalar) kernel $\mathcal{K}^{\left(L\right)}$
(such as the limiting kernels $\Sigma_{\infty}^{\left(L\right)},\Lambda_{\infty}^{\left(L\right)},\Upsilon_{\infty}^{\left(L\right)},\Phi_{\infty}^{\left(L\right)},\Xi_{\infty}^{(L)}$
introduced later), we denote the Gram
matrix of the datapoints by $\tilde{\mathcal{K}}$.

\section{Main Theorems}

\subsection{Hessian as $I+S$}

Using the above setup, the Hessian $H$ of the loss $\mathcal{C}\circ F^{(L)}$
is the sum of two terms, with the entry $H_{p,p'}$ given by 
\[
H_{p,p'}=\mathcal{H}\mathcal{C}_{\vert f_{\theta}}(\partial_{\theta_{p}}F,\partial_{\theta_{p'}}F)+\mathcal{D}\mathcal{C}_{\vert f_{\theta}}(\partial_{\theta_{p},\theta_{p'}}F).
\]
For a finite dataset, the Hessian matrix $\mathcal{H}\left(C\circ Y^{(L)}\right)$
is equal to the sum of two matrices 
\begin{align*}
I=\left(\mathcal{D}Y^{(L)}\right)^{T}\mathcal{H}C\mathcal{D}Y^{(L)}\;\;\text{ and }\;\;S=\nabla C\cdot\mathcal{H}Y^{(L)}
\end{align*}
where $\mathcal{D}Y^{(L)}$ is a $Nn_{L}\times P$ matrix, $\mathcal{H}C$
is a $Nn_{L}\times Nn_{L}$ matrix and $\mathcal{H}Y^{(L)}$ is a
$P\times P\times Nn_{L}$ tensor to which we apply a scalar product
(denoted by $\cdot$) in its last dimension with the $Nn_{L}$ vector
$\nabla C$ to obtain a $P\times P$ matrix.

Our main contribution is the following theorem, which describes the
limiting moments $\mathrm{Tr}\left(H^{k}\right)$ in terms of the
moments of $I$ and $S$:
\begin{thm}
\label{thm:main-thm}For any loss $C$ with BGOSS and $\sigma\in C_{b}^{4}(\mathbb{R})$,
in the sequential limit $n_{1}\to\infty,\ldots,n_{L-1}\to\infty$,
we have for all $k\geq1$
\[
\mathrm{Tr}\left(H\left(t\right)^{k}\right)\approx\mathrm{Tr}\left(I\left(t\right)^{k}\right)+\mathrm{Tr}\left(S\left(t\right)^{k}\right).
\]
The limits of $\mathrm{Tr}\left(I\left(t\right)^{k}\right)$ and $\mathrm{Tr}\left(S\left(t\right)^{k}\right)$
can be expressed in terms of the NTK $\Theta_{\infty}^{\left(L\right)}$,
the kernels $\Upsilon_{\infty}^{\left(L\right)}, \Xi_{\infty}^{\left(L\right)}$ and the non-symmetric
kernels $\Phi_{\infty}^{\left(L\right)}$, $\Lambda_{\infty}^{\left(L\right)}$
defined in Appendix \ref{sec:The-Matrix-S}:
\begin{itemize}
\item The moments $\mathrm{Tr}\left(I\left(t\right)^{k}\right)$ converge
to the following limits (with the convention that $i_{k+1}=i_{1}$):
\[
\mathrm{Tr}\left(I\left(t\right)^{k}\right)\to\mathrm{Tr}\left(\left(\mathcal{H}C(Y\left(t\right))\tilde{\Theta}\right){}^{k}\right)=\frac{1}{N^{k}}\sum_{i_{1},...,i_{k}=1}^{N}\prod_{m=1}^{k}c''_{i_{m}}(f_{\theta(t)}(x_{i_{m}}))\Theta_{\infty}^{(L)}(x_{i_{m}},x_{i_{m+1}}).
\]
\item The first moment $\mathrm{Tr}\left(S\left(t\right)\right)$ converges
to the limit:
\[
\mathrm{Tr}\left(S\left(t\right)\right)=\left(G(t)\right)^{T}\nabla C(Y(t)).
\]
At initialization $(G(0),Y(0))$ form a Gaussian pair of $Nn_{L}$-vectors,
independent for differing output indices $k=1,...,n_{L}$ and with
covariance $\mathbb{E}\left[G_{ik}(0)G_{i'k'}(0)\right]=\delta_{kk'}\Xi_{\infty}^{(L)}(x_{i},x_{i'})$ and $\mathbb{E}\left[G_{ik}(0)Y_{i'k'}(0)\right]=\delta_{kk'}\Phi_{\infty}^{(L)}(x_{i},x_{i'})$
for the limiting kernel $\Xi_{\infty}^{(L)}(x,y)$ and non-symmetric kernel $\Phi_{\infty}^{(L)}(x,y)$.
During training, both vectors follow the differential equations
\begin{align*}
\partial_{t}G(t) & =-\tilde{\Lambda}\nabla C(Y(t))\\
\partial_{t}Y(t) & =-\tilde{\Theta}\nabla C(Y(t)).
\end{align*}
\item The second moment $\mathrm{Tr}\left(S\left(t\right)^{2}\right)$ converges
to the following limit defined in terms of the Gram matrix $\tilde{\Upsilon}$:
\[
\mathrm{Tr}\left(S^{2}\right)\to\left(\nabla C(Y(t))\right)^{T}\tilde{\Upsilon}\nabla C(Y(t))
\]
\item The higher moments $\mathrm{Tr}\left(S\left(t\right)^{k}\right)$
for $k\geq3$ vanish.
\end{itemize}
\end{thm}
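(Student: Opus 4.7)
The plan is to address the four bullets in a unified way by pushing everything through the sequential layerwise Gaussian CLT that already produced the limiting NTK $\Theta_{\infty}^{(L)}$ in \citep{jacot2018neural}. Writing $J=\mathcal{D}Y^{(L)}$, the cyclic identity $\mathrm{Tr}(I^{k})=\mathrm{Tr}((\mathcal{H}C\cdot JJ^{T})^{k})$ reduces the first bullet to the fact that $JJ^{T}$ is exactly the empirical NTK Gram matrix. This converges sequentially to $\tilde{\Theta}$ and, under BGOSS (which keeps $Y(t)$ in a sublevel set of $C$), stays asymptotically constant throughout training, as shown in \citep{jacot2018neural}. The trajectory ODE $\partial_{t}Y=-\tilde{\Theta}\nabla C(Y)$ is then just the kernel gradient descent equation derived there, specialised to the dataset.

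For the $S$-moments, I would expand
\[
\mathrm{Tr}(S^{k})=\sum_{p_{1},\ldots,p_{k}}\prod_{j=1}^{k}\bigl\langle\nabla C(Y),\,\partial_{\theta_{p_{j}}\theta_{p_{j+1}}}^{2}Y^{(L)}\bigr\rangle
\]
and apply the chain rule twice across layers to each second derivative. Each such expansion produces products of preactivation factors and derivatives of $\sigma$, which are the same building blocks that feed the recursions for $\Sigma_{\infty}^{(\ell)}$ and $\dot{\Sigma}_{\infty}^{(\ell)}$. Taking expectations against the Gaussian initialization and applying the sequential CLT as $n_{1}\to\infty,\ldots,n_{L-1}\to\infty$, the Wick contractions over layer neurons converge to integrals against limiting kernels, and this is how $\Xi_{\infty}^{(L)},\Phi_{\infty}^{(L)},\Upsilon_{\infty}^{(L)}$ and (for the $G$-dynamics) $\Lambda_{\infty}^{(L)}$ arise. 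Joint Gaussianity of $(G(0),Y(0))$ at initialization then follows by applying the same CLT to arbitrary linear combinations. The ODE for $G(t)$ is obtained by differentiating $G=\mathrm{Tr}(\mathcal{H}Y^{(L)})$ along $\dot{\theta}=-J^{T}\nabla C(Y)$ and identifying the leading Gaussian pairings with the Gram matrix $\tilde{\Lambda}$.

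For $k=2$, the surviving Wick pairings reorganise exactly into the bilinear form $\nabla C(Y)^{T}\tilde{\Upsilon}\nabla C(Y)$. For $k\geq 3$ and for the mixed $I$-$S$ words arising in the expansion of $\mathrm{Tr}((I+S)^{k})$, a power-counting argument shows that each surviving diagram carries strictly more free parameter-index sums than Gaussian contractions can balance, each such surplus index contributing a factor of $n_{\ell}^{-1/2}$. The corresponding trace therefore vanishes in the sequential limit. This simultaneously yields $\mathrm{Tr}(S^{k})\to 0$ for $k\geq 3$ and the asymptotic orthogonality $\mathrm{Tr}(H^{k})\approx\mathrm{Tr}(I^{k})+\mathrm{Tr}(S^{k})$. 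Time dependence of each statement is propagated from $t=0$ to arbitrary $t\in[0,T]$ by a Gronwall-type argument using BGOSS and the $C_{b}^{4}$ regularity of $\sigma$, as in the NTK training-stability proofs.

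The main obstacle is precisely this combinatorial and scaling bookkeeping: enumerating the surviving Wick pairings layer by layer, checking that the $n_{\ell}^{-1/2}$ power counting is uniform across all $L$ layers, and matching the surviving pairings with the appendix kernels $\Upsilon_{\infty}^{(L)},\Lambda_{\infty}^{(L)},\Phi_{\infty}^{(L)},\Xi_{\infty}^{(L)}$. Once this diagrammatic accounting is in place, the remaining analytic steps (NTK stability, joint convergence in distribution of $(G(t),Y(t))$, Gronwall on the parameter trajectory) are direct adaptations of the arguments used to control the NTK itself.
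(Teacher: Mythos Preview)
Your high-level strategy---cyclic identity for $I$, second-derivative expansion for $S$, and a scaling/power-counting argument to kill higher moments and mixed words---matches the paper's. The organization of the vanishing arguments differs in a way that exposes a structural point your sketch glosses over. Rather than analysing each mixed $I$-$S$ word and each $\mathrm{Tr}(S^k)$ for $k\geq 3$ separately, the paper reduces orthogonality to the single estimate $\|IS\|_F\to 0$ (which kills every mixed word via $|\mathrm{Tr}(A_1\cdots A_k)|\leq\|IS\|_F\prod_j\|A_j\|_F$), and reduces all higher $S$-moments to $\|S^2\|_F\to 0$ (via $|\mathrm{Tr}(S^k)|\leq\|S^2\|_F\|S\|_F^{k-2}$). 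Each of these two Frobenius norms is then a contraction of $\nabla C$, $\mathcal{H}C$ against a single four-point tensor: $\Gamma^{(L)}=(\nabla f)^T\mathcal{H}f\,\mathcal{H}f\,\nabla f$ for $\|IS\|_F^2$, and an analogous $\Psi^{(L)}$ built from four Hessian factors for $\|S^2\|_F^2$. These tensors are then shown to vanish by induction on depth $L$.

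The place where your diagrammatic power-counting is incomplete is precisely this induction. Writing $\Gamma^{(L+1)}$ or $\Psi^{(L+1)}$ recursively in terms of layer-$L$ objects does not produce only lower-depth copies of the same tensor: the recursions also spawn a three-point tensor $\Omega^{(L)}=(\nabla f)^T\mathcal{H}f\,\nabla f$, and (for $\Psi$) the tensor $\Gamma^{(L)}$ itself. The induction only closes after $\Omega^{(L)}\to 0$ and $\Gamma^{(L)}\to 0$ are proved as preliminary lemmas. Your ``each surplus index contributes $n_\ell^{-1/2}$'' heuristic is the right intuition for why each of these vanishes at a given layer once the lower-layer ones already have, but it does not tell you in advance which auxiliary objects must be carried through the depth induction; identifying $\Omega$ and $\Gamma$ is the substantive step, not just matching surviving pairings to the named kernels. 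The same auxiliary $\Omega^{(L)}$ also appears in the recursion for $\Upsilon^{(L+1)}$, so you need it even for the second moment.
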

\begin{proof}
The moments of $I$ and $S$ can be studied separately because the
moments of their sum is asymptotically equal to the sum of their moments
by Proposition \ref{prop:orthogonality_I_S} below. The limiting moments
of $I$ and $S$ are respectively described by Propositions \ref{prop:moments-I}
and \ref{prop:S_moments} below.
\end{proof}
In the case of a MSE loss $C(Y)=\frac{1}{2N}\left\Vert Y-Y^{*}\right\Vert ^{2}$,
the first and second derivatives take simple forms $\nabla C(Y)=\frac{1}{N}\left(Y-Y^{*}\right)$
and $\mathcal{H}C(Y)=\frac{1}{N}Id_{Nn_{L}}$ and the differential
equations can be solved to obtain more explicit formulae: 
\begin{cor}
\label{cor:mse-corr}For the MSE loss $C$ and $\sigma\in C_{b}^{4}(\mathbb{R})$,
in the limit $n_{1},...,n_{L-1}\to\infty$, we have uniformly over
$[0,T]$
\[
\mathrm{Tr}\left(H(t)^{k}\right)\to\frac{1}{N^{k}}\mathrm{Tr}\left(\tilde{\Theta}^{k}\right)+\mathrm{Tr}\left(S(t)^{k}\right)
\]
where 
\begin{align*}
\mathrm{Tr}\left(S(t)\right)\to & -\frac{1}{N}(Y^{*}-Y(0))^{T}\left(Id_{Nn_{L}}+e^{-t\tilde{\Theta}}\right)\tilde{\Theta}^{-1}\tilde{\Lambda}^{T}e^{-t\tilde{\Theta}}(Y^{*}-Y(0))\\
 & +\frac{1}{N}G(0)^{T}e^{-t\tilde{\Theta}}(Y^{*}-Y(0))\\
\mathrm{Tr}\left(S(t)^{2}\right)\to & \frac{1}{N^{2}}(Y^{*}-Y(0))^{T}e^{-t\tilde{\Theta}}\tilde{\Upsilon}e^{-t\tilde{\Theta}}(Y^{*}-Y(0))\\
\mathrm{Tr}\left(S(t)^{k}\right)\to & 0\,\,\,\,\,\text{when }k>2.
\end{align*}
In expectation we have:
\begin{align*}
\mathbb{E}\left[\mathrm{Tr}\left(S(t)\right)\right]\to & -\frac{1}{N}Tr\left(\text{\ensuremath{\left(Id_{Nn_{L}}+e^{-t\tilde{\Theta}}\right)\tilde{\Theta}^{-1}\tilde{\Lambda}^{T}e^{-t\tilde{\Theta}}}\ensuremath{\ensuremath{\left(\tilde{\Sigma}+Y^{*}Y^{*}{}^{T}\right)}}}\right)+\frac{1}{N}Tr\left(\text{\ensuremath{e^{-t\tilde{\Theta}}}\ensuremath{\ensuremath{\tilde{\Phi}^{T}}}}\right)\\
\mathbb{E}\left[\mathrm{Tr}\left(S(t)^{2}\right)\right]\to & \frac{1}{N^{2}}Tr\left(\text{\ensuremath{e^{-t\tilde{\Theta}}\tilde{\Upsilon}e^{-t\tilde{\Theta}}}\ensuremath{\ensuremath{\left(\tilde{\Sigma}+Y^{*}Y^{*}{}^{T}\right)}}}\right).
\end{align*}
\end{cor}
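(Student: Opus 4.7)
The plan is to specialize Theorem~\ref{thm:main-thm} to the MSE loss, where the derivatives of $C$ take very simple forms: $\nabla C(Y) = \tfrac{1}{N}(Y-Y^{*})$ and $\mathcal{H}C = \tfrac{1}{N}\,Id_{Nn_{L}}$. Since MSE has BGOSS, the theorem applies directly. The Hessian identity collapses $\mathrm{Tr}(I(t)^{k}) \to \mathrm{Tr}((\mathcal{H}C\,\tilde{\Theta})^{k}) = N^{-k}\mathrm{Tr}(\tilde{\Theta}^{k})$, which is both deterministic and time-independent, and $\mathrm{Tr}(S(t)^{k}) \to 0$ for $k \geq 3$ is inherited verbatim. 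So the substantive work is to identify $\mathrm{Tr}(S(t))$ and $\mathrm{Tr}(S(t)^{2})$ in closed form.

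The key observation is that $\nabla C(Y) = \tfrac{1}{N}(Y-Y^{*})$ linearizes the two ODEs of Theorem~\ref{thm:main-thm}. Setting $Z(t) = Y^{*} - Y(t)$, the $Y$-equation becomes $\partial_{t} Z = -\tfrac{1}{N}\tilde{\Theta} Z$, so $Z(t) = e^{-t\tilde{\Theta}/N} Z(0)$ (absorbed into $e^{-t\tilde{\Theta}}$ under the paper's time convention). Substituting $Z(t)$ into the $G$-equation gives a linear inhomogeneous ODE whose right-hand side depends only on $Z(0)$, which integrates explicitly to $G(t) = G(0) + \tilde{\Lambda}\tilde{\Theta}^{-1}(Id - e^{-t\tilde{\Theta}/N}) Z(0)$. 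Plugging $Y(t)$ and $G(t)$ into the formulas $\mathrm{Tr}(S(t)) = G(t)^{T}\nabla C(Y(t))$ and $\mathrm{Tr}(S(t)^{2}) = \nabla C(Y(t))^{T}\tilde{\Upsilon}\nabla C(Y(t))$ from Theorem~\ref{thm:main-thm}, and using the symmetry of $\tilde{\Theta}$, yields the two deterministic expressions in the corollary.

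For the expectation formulas, I would use that at $t=0$ the pair $(Y(0), G(0))$ is jointly centered Gaussian with $\mathbb{E}[Y(0)Y(0)^{T}] = \tilde{\Sigma}$ (the NNGP limit) and $\mathbb{E}[G(0)Y(0)^{T}] = \tilde{\Phi}$ from Theorem~\ref{thm:main-thm}. The closed form for $\mathrm{Tr}(S(t))$ is a quadratic form in $Z(0) = Y^{*} - Y(0)$ plus a bilinear cross term in $G(0)$ and $Z(0)$, while $\mathrm{Tr}(S(t)^{2})$ is a pure quadratic in $Z(0)$. Taking expectation then reduces to the two standard Gaussian identities $\mathbb{E}[Z(0)^{T} A Z(0)] = \mathrm{Tr}\bigl(A(\tilde{\Sigma} + Y^{*}Y^{*}{}^{T})\bigr)$ and $\mathbb{E}[G(0)^{T} B Z(0)] = -\mathrm{Tr}(B\tilde{\Phi}^{T})$ for any matrices $A, B$; assembling these yields the two expectation formulas.

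The main technical hurdle is bookkeeping rather than any single clever step: $\tilde{\Lambda}$ and $\tilde{\Phi}$ are non-symmetric, so one must track carefully which side they act on and where transposes appear, and several signs flip depending on whether one writes $Y^{*}-Y$ or $Y-Y^{*}$. Uniform convergence over $[0,T]$ requires no additional analytic work: it follows from the uniform-on-compact-intervals convergence of the NTK and of the moments of $S$ guaranteed by Theorem~\ref{thm:main-thm} under BGOSS, combined with the continuity of all the closed-form expressions in these limiting data.
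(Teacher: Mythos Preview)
Your proposal is correct and follows essentially the same approach as the paper: specialize Theorem~\ref{thm:main-thm} to MSE, solve the linear ODEs for $Y(t)$ and $G(t)$, and substitute into $\mathrm{Tr}(S)=G^{T}\nabla C$ and $\mathrm{Tr}(S^{2})=\nabla C^{T}\tilde{\Upsilon}\nabla C$. The only cosmetic difference is that the paper shortcuts the $G$-integration via the relation $\int_{0}^{t}\nabla C(Y(s))\,ds=-\tilde{\Theta}^{-1}(Y(t)-Y(0))$ rather than integrating $e^{-s\tilde{\Theta}}$ explicitly, and it leaves the Gaussian expectation step (``then follows'') implicit where you spell it out.
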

\begin{proof}
The moments of $I$ are constant because $\mathcal{H}C=\frac{1}{N}Id_{Nn_{L}}$
is constant. For the moments of $S$, we first solve the differential
equation for $Y(t)$:
\[
Y(t)=Y^{*}-e^{-t\tilde{\Theta}}(Y^{*}-Y(0)).
\]
Noting $Y(t)-Y(0)=-\tilde{\Theta}\int_{0}^{t}\nabla C(s)ds$, we have
\begin{align*}
G(t) & =G(0)-\tilde{\Lambda}\int_{0}^{t}\nabla C(s)ds\\
 & =G(0)+\tilde{\Lambda}\tilde{\Theta}^{-1}(Y(t)-Y(0))\\
 & =G(0)+\tilde{\Lambda}\tilde{\Theta}^{-1}\left(Id_{Nn_{L}}+e^{-t\tilde{\Theta}}\right)(Y^{*}-Y(0))
\end{align*}
The expectation of the first moment of $S$ then follows. 
\end{proof}
\begin{figure}
\centering
\!\!\!\!\!\!\!\!\!\!\includegraphics[scale=0.4]{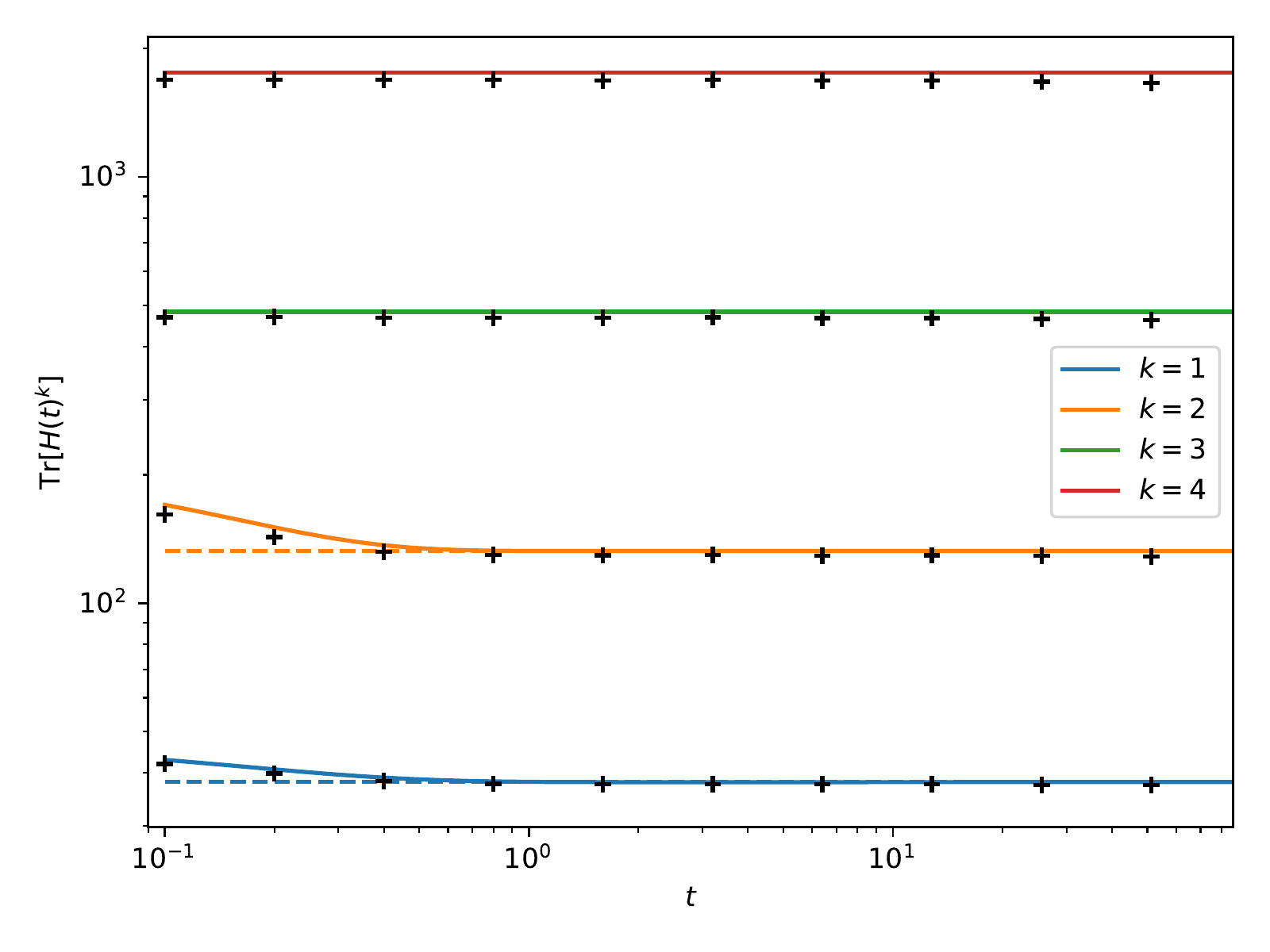}\includegraphics[scale=0.4]{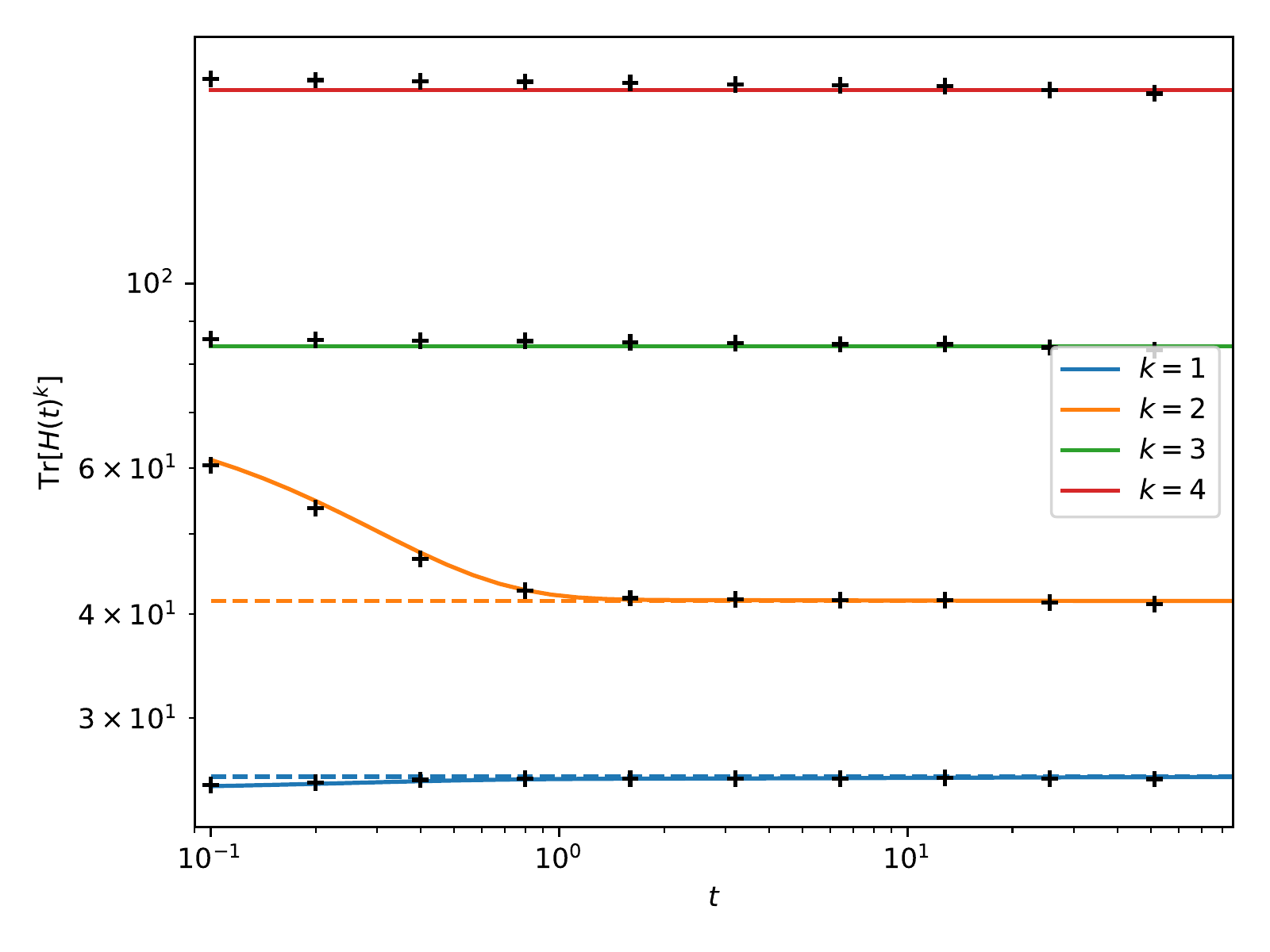}

\caption{\label{fig:moments}Comparison of the theoretical prediction of Corollary
\ref{cor:mse-corr} for the expectation of the first 4 moments (colored
lines) to the empirical average over 250 trials (black crosses) for
a rectangular network with two hidden layers of finite widths $n_{1}=n_{2}=5000$
($L=3$) with the smooth ReLU (left) and the normalized smooth ReLU
(right), for the MSE loss on scaled down 14x14 MNIST with $N=256$.
Only the first two moments are affected by $S$ at the beginning of
training.}
\end{figure}

\subsection{Mutual Orthogonality of $I$ and $S$}

A first key ingredient to prove Theorem \ref{thm:main-thm} is the
asymptotic mutual orthogonality of the matrices $I$ and $S$
\begin{prop*}[Proposition \ref{prop:orthogonality_I_S} in Appendix \ref{sec:Orthogonality-of-I-S}]
For any loss $C$ with BGOSS and $\sigma\in C_{b}^{4}(\mathbb{R})$,
we have uniformly over $[0,T]$
\[
\lim_{n_{L-1}\to\infty}\cdots\lim_{n_{1}\to\infty}\|IS\|_{F}=0.
\]
As a consequence $\lim_{n_{L-1}\to\infty}\cdots\lim_{n_{1}\to\infty}\mathrm{Tr}\left(\left[I+S\right]^{k}\right)-\left[\mathrm{Tr}\left(I^{k}\right)+\mathrm{Tr}\left(S^{k}\right)\right]=0$.
\end{prop*}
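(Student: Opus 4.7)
The plan is to first establish the operator-theoretic estimate $\|IS\|_F \to 0$ in the sequential infinite-width limit, uniformly on $[0,T]$, and then to deduce the moment identity from it by an elementary norm argument. The key structural asymmetry is that $I = (\mathcal{D}Y^{(L)})^T \mathcal{H}C \, \mathcal{D}Y^{(L)}$ factors through an $Nn_L$-dimensional subspace, so its rank stays bounded independently of the widths, whereas $S = \nabla C \cdot \mathcal{H}Y^{(L)}$ is a weighted sum of the per-output network Hessians $\mathcal{H}Y^{(L)}_{ik}$, each of which should have vanishing operator norm.

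The main technical step, and the one I expect to be the main obstacle, is the bound
\[
\max_{i,k}\,\|\mathcal{H}Y^{(L)}_{ik}\|_{op} = O(n^{-1/2}),
\]
uniformly in $t\in[0,T]$, where $n$ denotes the width being sent to infinity at the current stage of the sequential limit. I would establish it by induction on the depth $L$, differentiating the recursion for $\tilde{\alpha}^{(\ell+1)}$ twice and decomposing the parameter Hessian into blocks indexed by pairs of layers. Each cross-layer block carries two explicit $1/\sqrt{n_\ell}$ factors from the NTK parametrization, while the same-layer blocks carry one such factor together with the operator norm of a Hessian from the preceding induction step; concentration of the Gaussian pre-activations and of the $\Sigma_\infty^{(\ell)}$-type kernels, together with $\sigma\in C_b^4$, keeps every remaining factor $O(1)$. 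The BGOSS assumption plus monotonicity of the loss confines $Y(t)$ to a bounded sublevel set and keeps $\nabla C(Y(t))$, the activations and their derivatives uniformly bounded on $[0,T]$, which promotes the induction to a uniform-in-$t$ bound.

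Granting this lemma, $\|S\|_{op}\le\sum_{i,k}|(\nabla C)_{ik}|\,\|\mathcal{H}Y^{(L)}_{ik}\|_{op}=O(n^{-1/2})$. For $I$ the identity $\mathcal{D}Y^{(L)}(\mathcal{D}Y^{(L)})^T=\tilde{\Theta}$ gives $\|\mathcal{D}Y^{(L)}\|_{op}^2=O(1)$ by convergence of the NTK Gram matrix, and $\|\mathcal{H}C\|_{op}=O(1)$ by the hypotheses on $C$, so $\|I\|_{op}=O(1)$; the rank bound $\mathrm{rank}(I)\le Nn_L$ then yields $\|I\|_{F}\le\sqrt{Nn_L}\,\|I\|_{op}=O(1)$. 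Applying the submultiplicative inequality $\|AB\|_F\le\|A\|_F\|B\|_{op}$ with $A=I$, $B=S$ gives $\|IS\|_F\le\|I\|_F\,\|S\|_{op}=O(n^{-1/2})\to 0$, uniformly in $t$.

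For the moment consequence, expand $\mathrm{Tr}\bigl((I+S)^k\bigr)$ as a sum over the $2^k$ words $\mathrm{Tr}(\omega_1\omega_2\cdots\omega_k)$ with $\omega_j\in\{I,S\}$. The two pure words contribute exactly $\mathrm{Tr}(I^k)+\mathrm{Tr}(S^k)$. Any mixed word has cyclically adjacent positions with $\omega_j\neq\omega_{j+1}$, so by cyclicity of the trace we may assume $\omega_1\omega_2=IS$ (the case $SI$ is identical). Using Schatten--H\"older duality together with $\mathrm{rank}(IS)\le\mathrm{rank}(I)\le Nn_L$,
\[
|\mathrm{Tr}(\omega_1\cdots\omega_k)|\le\|IS\|_{S_1}\prod_{j=3}^{k}\|\omega_j\|_{op}\le\sqrt{Nn_L}\,\|IS\|_F\prod_{j=3}^{k}\|\omega_j\|_{op}
\]
(with the empty product equal to $1$ when $k=2$). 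Since each $\|\omega_j\|_{op}$ is $O(1)$, $Nn_L$ is fixed, and $\|IS\|_F\to 0$, each of the finitely many mixed words vanishes, which yields the stated identity.
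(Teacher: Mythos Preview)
Your overall strategy is valid but takes a different route from the paper. The paper does not bound $\|S\|_{op}$; instead it expands $\|IS\|_F^2$ directly as a finite sum involving the NTK and the four-point tensor $\Gamma^{(L)}_{k_0k_1k_2k_3}(x_0,x_1,x_2,x_3) = (\nabla f_{\theta,k_0}(x_0))^T \mathcal{H}f_{\theta,k_1}(x_1)\, \mathcal{H}f_{\theta,k_2}(x_2)\, \nabla f_{\theta,k_3}(x_3)$, and then proves by a layer-wise induction (Lemma~\ref{lem:Gamma_vanish}) that $\Gamma^{(L)} \to 0$ uniformly on $[0,T]$; for the moment consequence it bounds each mixed word by a product of Frobenius norms, relying on the boundedness of $\|I\|_F$ and $\|S\|_F$ already obtained in Propositions~\ref{prop:moments-I} and~\ref{prop:S_moments}. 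Your route via $\|IS\|_F \le \|I\|_F\,\|S\|_{op}$, together with the rank-based Schatten estimate, is conceptually cleaner and makes the moment step self-contained without first needing $\mathrm{Tr}(S^2)$ to be finite. The trade-off is that your key lemma $\|\mathcal{H}Y^{(L)}_{ik}\|_{op}=O(n^{-1/2})$ is only sketched: the block decomposition you describe is right, but turning the $1/\sqrt{n_\ell}$ prefactors into an honest operator-norm bound needs genuine random-matrix control (of $n_\ell^{-1/2}W^{(\ell)}$ and of the diagonal factors built from correlated preactivations), which is comparable in effort to the paper's $\Gamma$-induction. The paper's tensor-contraction framework has the compensating advantage that the same family $\Omega^{(L)},\Gamma^{(L)},\Psi^{(L)}$ is reused to establish the vanishing of the higher moments of $S$, so the investment in Lemma~\ref{lem:Gamma_vanish} pays off across several results.
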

\begin{rem}
If two matrices $A$ and $B$ are mutualy orthogonal (i.e. $AB=0$)
the range of $A$ is contained in the nullspace of $B$ and vice versa.
The non-zero eigenvalues of the sum $A+B$ are therefore given by
the union of the non-zero eigenvalues of $A$ and $B$. Furthermore
the moments of $A$ and $B$ add up: $\mathrm{Tr}\left(\left[A+B\right]^{k}\right)=\mathrm{Tr}\left(A^{k}\right)+\mathrm{Tr}\left(B^{k}\right)$.
Proposition \ref{prop:orthogonality_I_S} shows that this is what
happens asymptotically for $I$ and $S$. 
\end{rem}
Note that both matrices $I$ and $S$ have large nullspaces: indeed
assuming a constant width $w=n_{1}=...=n_{L-1}$, we have $Rank(I)\leq Nn_{L}$
and $Rank(S)\leq2(L-1)wNn_{L}$ (see Appendix \ref{sec:The-Matrix-S}),
while the number of parameters $P$ scales as $w^{2}$ (when $L>2$).

Figure \ref{fig:sorthogonality} illustrates the mutual orthogonality
of $I$ and $S$. All numerical experiments are done for rectangular networks (when the width of the hidden layers are equal) and agree well with our predictions obtained in the sequential limit.

\subsection{Mean-field Limit}\label{sec:meanfield}
For a rectangular network with width $w$, if the output of the network is divided by $\sqrt{w}$ and the learning rate is multiplied by $w$ (to keep similar dynamics at initialization), the training dynamics changes and the NTK varies during training when $w$ goes to infinity. The new parametrization of the output changes the scaling of the two matrices:
\[
\mathcal{H}\left[ C\left(\frac{1}{\sqrt{w}}Y^{(L)}\right)\right] = \frac{1}{w} \left(\mathcal{D}Y^{(L)}\right)^T \mathcal{H}C \mathcal{D}Y^{(L)} + \frac{1}{\sqrt{w}}\nabla C \cdot \mathcal{H}Y^{(L)} = \frac{1}{w}I+\frac{1}{\sqrt{w}}S.
\]
The scaling of the learning rate essentially multiplies the whole Hessian by $w$. In this setting, the matrix $I$ is left unchanged while the matrix $S$ is multiplied by $\sqrt{w}$ (the $k$-th moment of $S$ is hence multiplied by $w^{\nicefrac{k}{2}}$). In particular, the two moments of the Hessian are dominated by the moments of $S$, and the higher moments of $S$ (and the operator norm of $S$) should not vanish. This suggests that the active regime may be characterised by the fact that $\|S\|_F \gg \|I\|_F $. Under the conjecture that Theorem \ref{thm:main-thm} holds for the infinite-width limit of rectangular networks, the asymptotic of the two first moments of $H$ is given by: 
\begin{align*}
\nicefrac{1}{\sqrt{w}} \mathrm{Tr}\left(H \right) &\to \mathcal{N}(0, \nabla C^T \tilde{\Xi} \nabla C)\\
\nicefrac{1}{w} \mathrm{Tr}\left(H^2 \right) &\to  \nabla C^T \tilde{\Upsilon} \nabla C,
\end{align*}
where for the MSE loss we have $\nabla C = -Y^*$.

\subsection{The matrix $S$}

The matrix $S=\nabla C\cdot\mathcal{H}Y^{(L)}$ is best understood
as a perturbation to $I$, which vanishes as the network converges
because $\nabla C\to0$. To calculate its moments, we note that
\[
\mathrm{Tr}\left(\nabla C\cdot\mathcal{H}Y^{(L)}\right)=\left(\sum_{p=1}^{P}\partial_{\theta_{p}^{2}}^{2}Y\right)^{T}\nabla C=G^{T}\nabla C,
\]
where the vector $G=\sum_{k=1}^{P}\partial_{\theta_{p}^{2}}^{2}Y\in\mathbb{R}^{Nn_{L}}$
is the evaluation of the function $g_{\theta}(x)=\sum_{k=1}^{P}\partial_{\theta_{p}^{2}}^{2}f_{\theta}(x)$
on the training set.

For the second moment we have

\[
\mathrm{Tr}\left(\left(\nabla C\cdot\mathcal{H}Y^{(L)}\right)^{2}\right)=\nabla C^{T}\left(\sum_{p,p'=1}^{P}\partial_{\theta_{p}\theta_{p'}}^{2}Y\left(\partial_{\theta_{p}\theta_{p'}}^{2}Y\right)^{T}\right)\nabla C=\nabla C^{T}\tilde{\Upsilon}\nabla C
\]
for $\tilde{\Upsilon}$ the Gram matrix of the kernel $\Upsilon^{(L)}(x,y)=\sum_{p,p'=1}^{P}\partial_{\theta_{p}\theta_{p'}}^{2}f_{\theta}(x)\left(\partial_{\theta_{p}\theta_{p'}}^{2}f_{\theta}(y)\right)^{T}$.

The following proposition desribes the limit of the function $g_{\theta}$
and the kernel $\Upsilon^{(L)}$ and the vanishing of the higher moments:
\begin{prop*}[Proposition \ref{prop:S_moments} in Appendix \ref{sec:The-Matrix-S}]
For any loss $C$ with BGOSS and $\sigma\in C_{b}^{4}(\mathbb{R})$,
the first two moments of $S$ take the form
\begin{align*}
\mathrm{Tr}\left(S(t)\right) & =G(t)^{T}\nabla C(t)\\
\mathrm{Tr}\left(S(t)^{2}\right) & =\nabla C(t)^{T}\tilde{\Upsilon}(t)\nabla C(t)
\end{align*}

- At initialization, $g_{\theta}$ and $f_{\theta}$ converge to a
(centered) Gaussian pair with covariances
\begin{align*}
\mathbb{E}[g_{\theta,k}(x)g_{\theta,k'}(x')] & =\delta_{kk'}\Xi_{\infty}^{(L)}(x,x')\\
\mathbb{E}[g_{\theta,k}(x)f_{\theta,k'}(x')] & =\delta_{kk'}\Phi_{\infty}^{(L)}(x,x')\\
\mathbb{E}[f_{\theta,k}(x)f_{\theta,k'}(x')] & =\delta_{kk'}\Sigma_{\infty}^{(L)}(x,x')
\end{align*}
 and during training $g_{\theta}$ evolves according to
\[
\partial_{t}g_{\theta,k}(x)=\sum_{i=1}^{N}\Lambda_{\infty}^{(L)}(x,x_{i})\partial_{ik}C(Y(t))_{\cdot}
\]

- Uniformly over any interval $[0,T]$, the kernel $\Upsilon^{(L)}$
has a deterministic and fixed limit $\lim_{n_{L-1}\to\infty}\cdots\lim_{n_{1}\to\infty}\Upsilon_{kk'}^{(L)}(x,x')=\delta_{kk'}\Upsilon_{\infty}^{(L)}(x,x')$
with limiting kernel:

\[
\Upsilon_{\infty}^{(L)}(x,x')=\sum_{\ell=1}^{L-1}\left(\Theta_{\infty}^{(\ell)}(x,x')^{2}\ddot{\Sigma}_{\infty}^{(\ell)}(x,x')+2\Theta_{\infty}^{(\ell)}(x,x')\dot{\Sigma}_{\infty}^{(\ell)}(x,x')\right)\dot{\Sigma}_{\infty}^{(\ell+1)}(x,x')\cdots\dot{\Sigma}_{\infty}^{(L-1)}(x,x').
\]

- The higher moment $k>2$ vanish: $\lim_{n_{L-1}\to\infty}\cdots\lim_{n_{1}\to\infty}\mathrm{Tr}\left(S^{k}\right)=0$.
\end{prop*}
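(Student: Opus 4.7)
The plan is to run the depth-wise sequential-limit induction of \citep{jacot2018neural} on an enlarged collection of objects, adding to $\tilde\alpha^{(\ell)}$ and its first $\theta$-derivatives the second-order quantities $g^{(\ell)}_k(x) = \sum_p \partial^2_{\theta_p^2}\tilde\alpha^{(\ell)}_k(x)$, $M^{(\ell)}_{kk'}(x,x') = \sum_{p,p'}\partial^2_{\theta_p\theta_{p'}}\tilde\alpha^{(\ell)}_k(x)\partial^2_{\theta_p\theta_{p'}}\tilde\alpha^{(\ell)}_{k'}(x')$, and the mixed Jacobian--Hessian sum $\Lambda^{(\ell)}_{kk'}(x,y) = \sum_p \partial_{\theta_p}g^{(\ell)}_{k}(x)\partial_{\theta_p}\tilde\alpha^{(\ell)}_{k'}(y)$ that will drive the evolution of $g$. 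Twice-differentiating the layer recursion $\tilde\alpha^{(\ell+1)} = \tfrac{1}{\sqrt{n_\ell}}W^{(\ell)}\sigma(\tilde\alpha^{(\ell)}) + \beta b^{(\ell)}$ in $\theta$ and using the identity $\partial^2_{\theta_p^2}[\sigma(\tilde\alpha^{(\ell)})] = \dot\sigma(\tilde\alpha^{(\ell)})\partial^2_{\theta_p^2}\tilde\alpha^{(\ell)} + \ddot\sigma(\tilde\alpha^{(\ell)})(\partial_{\theta_p}\tilde\alpha^{(\ell)})^2$ yields closed layer recursions. The crucial cross term $2\,\partial_{\theta_p}W^{(\ell)}\,\partial_{\theta_p}[\sigma(\tilde\alpha^{(\ell)})]$ couples the ``new'' parameters of layer $\ell$ into $g^{(\ell+1)}$ and $M^{(\ell+1)}$, and is precisely what generates the two contributions $(\Theta_\infty^{(\ell)})^2\ddot\Sigma_\infty^{(\ell)}$ and $2\Theta_\infty^{(\ell)}\dot\Sigma_\infty^{(\ell)}$ appearing in the closed form for $\Upsilon_\infty^{(L)}$.

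At initialization I argue inductively in $\ell$: sending $n_\ell\to\infty$ after all lower widths have been sent to infinity, the action of $\tfrac{1}{\sqrt{n_\ell}}W^{(\ell)}$ on a vector asymptotically independent of $W^{(\ell)}$ converges by the conditional CLT to a centered Gaussian with covariance read off from $\Sigma_\infty^{(\ell)}$ and the appropriate derivatives of $\sigma$ via the operator $\mathbb{L}$. Combined with the inductive hypothesis that $(\tilde\alpha^{(\ell)}, g^{(\ell)})$ is jointly Gaussian and that $\Theta^{(\ell)}$ is deterministic, this produces joint Gaussianity of $(\tilde\alpha^{(\ell+1)}, g^{(\ell+1)})$ with covariances obtained by applying $\mathbb{L}^{f,g}_{\Sigma_\infty^{(\ell)}}$ to suitable pairs of derivatives of $\sigma$; unwinding through $L$ layers gives the claimed $\Sigma_\infty^{(L)}$, $\Phi_\infty^{(L)}$, $\Xi_\infty^{(L)}$. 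For the dynamics during training, the chain rule yields $\partial_t g_{\theta,k}(x) = -\sum_{i,k'}\Lambda^{(L)}_{kk'}(x,x_i)\,\partial_{ik'}C(Y(t))$, and a Gr\"onwall argument exploiting BGOSS (to bound $\|\nabla C\|$ along the flow) together with $\sigma\in C_b^4$ upgrades pointwise-in-$t$ convergence of $\Lambda^{(L)}$, $g$ and $\Upsilon^{(L)}$ to uniform convergence on $[0,T]$.

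The main obstacle is the vanishing of $\mathrm{Tr}(S^k)$ for $k\geq 3$. The plan is to use the elementary bound $|\mathrm{Tr}(S^k)| \leq \|S\|_F^2\,\|S\|_{\mathrm{op}}^{k-2}$ combined with the already-proven finite limit $\|S\|_F^2 = \mathrm{Tr}(S^2) \to \nabla C^T\tilde\Upsilon\nabla C$, so that it suffices to show $\|S\|_{\mathrm{op}}\to 0$. Writing $S = \sum_{i,k}\partial_{ik}C(Y)\cdot \mathcal{H}Y^{(L)}_{ik}$ as a weighted sum of $Nn_L$ Hessian slices, each slice can be expanded via the layerwise recursion for $\partial^2_{\theta_p\theta_{p'}}\tilde\alpha^{(L)}_k$ as a sum of rank-bounded outer-product matrices whose entries pick up a $\tfrac{1}{\sqrt{n_\ell}}$ factor at each layer traversed, giving a slicewise operator-norm bound that decays like $1/\sqrt{w}$ with $w = \min_\ell n_\ell$. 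Making this decay quantitative and uniform in $t\in[0,T]$ --- rather than only controlling individual entries --- is the technical heart of the argument, since it requires control of the largest singular value of a diverging-dimensional random matrix along the entire gradient-flow trajectory; I expect this to be the most delicate step of the proof.
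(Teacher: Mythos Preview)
Your inductive treatment of the Gaussian pair $(f_\theta,g_\theta)$, the evolution kernel $\Lambda^{(L)}$, and the convergence of $\Upsilon^{(L)}$ is essentially the paper's approach (its Lemmas~\ref{prop:moment1_S} and~\ref{prop:moment2_S}): the same layer recursion, the same conditional CLT for the action of $n_\ell^{-1/2}W^{(\ell)}$, and the same $O(n_\ell^{-1/2})$ parameter--drift argument to pass from initialization to $[0,T]$.

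Where you diverge is the vanishing of $\mathrm{Tr}(S^k)$ for $k\ge 3$. You reduce to $\|S\|_{\mathrm{op}}\to 0$ and propose to prove this directly via a slicewise rank/scaling estimate on $\mathcal{H}Y^{(L)}_{ik}$. The paper avoids operator norms entirely: it uses instead the bound $|\mathrm{Tr}(S^k)|\le \|S^2\|_F\,\|S\|_F^{k-2}$ and shows $\|S^2\|_F\to 0$ by writing $\|S^2\|_F^2=\tilde\Psi\cdot(\nabla C)^{\otimes 4}$ for an explicit $4$-tensor $\Psi^{(L)}$, then proving $\Psi^{(L)}\to 0$ through the same style of layer recursion (this requires two auxiliary vanishing tensors $\Omega^{(L)}=\nabla f^T(\mathcal{H}f)\nabla f$ and $\Gamma^{(L)}=\nabla f^T(\mathcal{H}f)(\mathcal{H}f)\nabla f$, handled in Lemmas~\ref{lem:omega-vanishes}--\ref{lem:Gamma_vanish}). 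The payoff is that every step is a scalar LLN on sums of $O(n_\ell)$ terms with prefactor $n_\ell^{-2}$; no spectral control of a growing random matrix is needed. In fact the paper obtains $\|S\|_{\mathrm{op}}\to 0$ only as a \emph{corollary} of $\mathrm{Tr}(S^k)\to 0$, not as an input.

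Your sketch for $\|S\|_{\mathrm{op}}\to 0$ is the weak point. The heuristic ``entries pick up a $1/\sqrt{n_\ell}$ factor at each layer, hence slicewise operator norm $\lesssim 1/\sqrt{w}$'' does not follow: entrywise decay of a $P\times P$ matrix with $P\sim w^2$ does not by itself control the top singular value, and the rank of each slice grows linearly in $w$, so the ``rank-bounded outer-product'' decomposition still sums $\Theta(w)$ terms whose individual operator norms you have not bounded. Turning this into a rigorous argument would require genuine random-matrix concentration along the whole trajectory, which is substantially harder than the paper's trace computation. If you want to keep your reduction, the cleanest fix is to replace the operator-norm target by the paper's target $\|S^2\|_F\to 0$ (equivalently $\mathrm{Tr}(S^4)\to 0$), which your layer recursion machinery is already set up to handle once you add $\Omega^{(\ell)}$ and $\Gamma^{(\ell)}$ to the list of tracked quantities.
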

This result has a number of consequences for infinitely wide networks:%
\begin{comment}
, which relate to a number of numerical and theoretical observations
from the literature:
\end{comment}

\begin{enumerate}
\item At initialization, the matrix $S$ has a finite Frobenius norm $\left\Vert S\right\Vert _{F}^{2}=\mathrm{Tr}\left(S^{2}\right)=\nabla C^{T}\tilde{\Upsilon}\nabla C$,
because $\Upsilon$ converges to a fixed limit. As the network converges,
the derivative of the cost goes to zero $\nabla C(t)\to0$ and so
does the Frobenius norm of $S$. %
\begin{comment}
This disappearence is implicit in \citep{Karakida2018} to motivate
the study of the Fisher matrix to understand the shape of local minimas
of neural networks (however a proof that $\Upsilon$ stays bounded
during was lacking, and the possibility of $\Upsilon$ growing faster
than $\nabla C$ vanishes could not be ruled out before our result).
\end{comment}
\item In contrast the operator norm of $S$ vanishes already at initialization
(because for all even $k$, we have $\left\Vert S\right\Vert _{op}\leq\sqrt[k]{\mathrm{Tr}\left(S^{k}\right)}\to0$).
At initialization, the vanishing of $S$ in operator norm but not
in Frobenius norm can be explained by the matrix $S$ having a growing
number of eigenvalues of shrinking intensity as the width grows.
\item When it comes to the first moment of $S$, Proposition \ref{prop:S_moments}
shows that the spectrum of $S$ is in general not symmetric. For the
MSE loss the expectation of the first moment at initialization is
\[
\mathbb{E}\left[\mathrm{Tr}(S)\right]=\mathbb{E}\left[(Y-Y^{*})^{T}G\right]=\mathbb{E}\left[Y^{T}G\right]-\left(Y^{*}\right)^{T}\mathbb{E}\left[G\right]=\mathrm{Tr}\left(\tilde{\Phi}\right)-0
\]
which may be positive or negative depending on the choice of nonlinearity:
with a smooth ReLU, it is positive, while for the arc-tangent or the
normalized smooth ReLU, it can be negative (see Figure \ref{fig:moments}).\\
This is in contrast to the result obtained in \citep{Pennington2017,Geiger18}
for the shallow ReLU networks, taking the second derivative of the
ReLU to be zero. Under this assumption the spectrum of $S$ is symmetric:
if the eigenvalues are ordered from lowest to highest, $\lambda_{i}=-\lambda_{P-i}$
and $\mathrm{Tr}(S)=0$.%
\begin{comment}
The matrix $S$ therefore differs significantly from matrices of the
Wigner ensemble (such a similarity was for example hinted at in \citep{Pennington2017}).
Indeed, all moments of a random symmetric matrix whose entries are
iid Gaussian (up to symmetries) of variance $w^{-2}$ vanish except
the second moment which converges to 1. In contrast in our case, the
first moment of $S$ converges to a random value with non-zero mean.
\end{comment}
\end{enumerate}
\begin{comment}
It has also been observed that even after training a network for a
very long time, one can still observe some (very small) negative eigenvalues
in the spectrum of $H$ \citep{Sagun}. We have observed that the matrix
$S$ has a much larger rank than $I$. The intersection of the negative
space of $S$ (i.e. directions $d\theta$ such that $d\theta^{T}Sd\theta<0$)
and the nullspace of $I$ is therefore very large and these directions
will remain negative in the sum $H=I+S$. As long as the matrix $S$
has not completely vanished (which only happens when $\nabla C=0$),
$H$ will have negative eigenvalues.
\end{comment}

These observations suggest that $S$ has little influence on the shape
of the surface, especially towards the end of training, the matrix
$I$ however has an interesting structure.

\begin{figure}
\centering
\!\!%\!\!\!\!\!\!\!%\!\!\!\!\!\!\!\!\!\!\!\!\!
\begin{minipage}{.4\textwidth}\centering\includegraphics[scale=0.35]{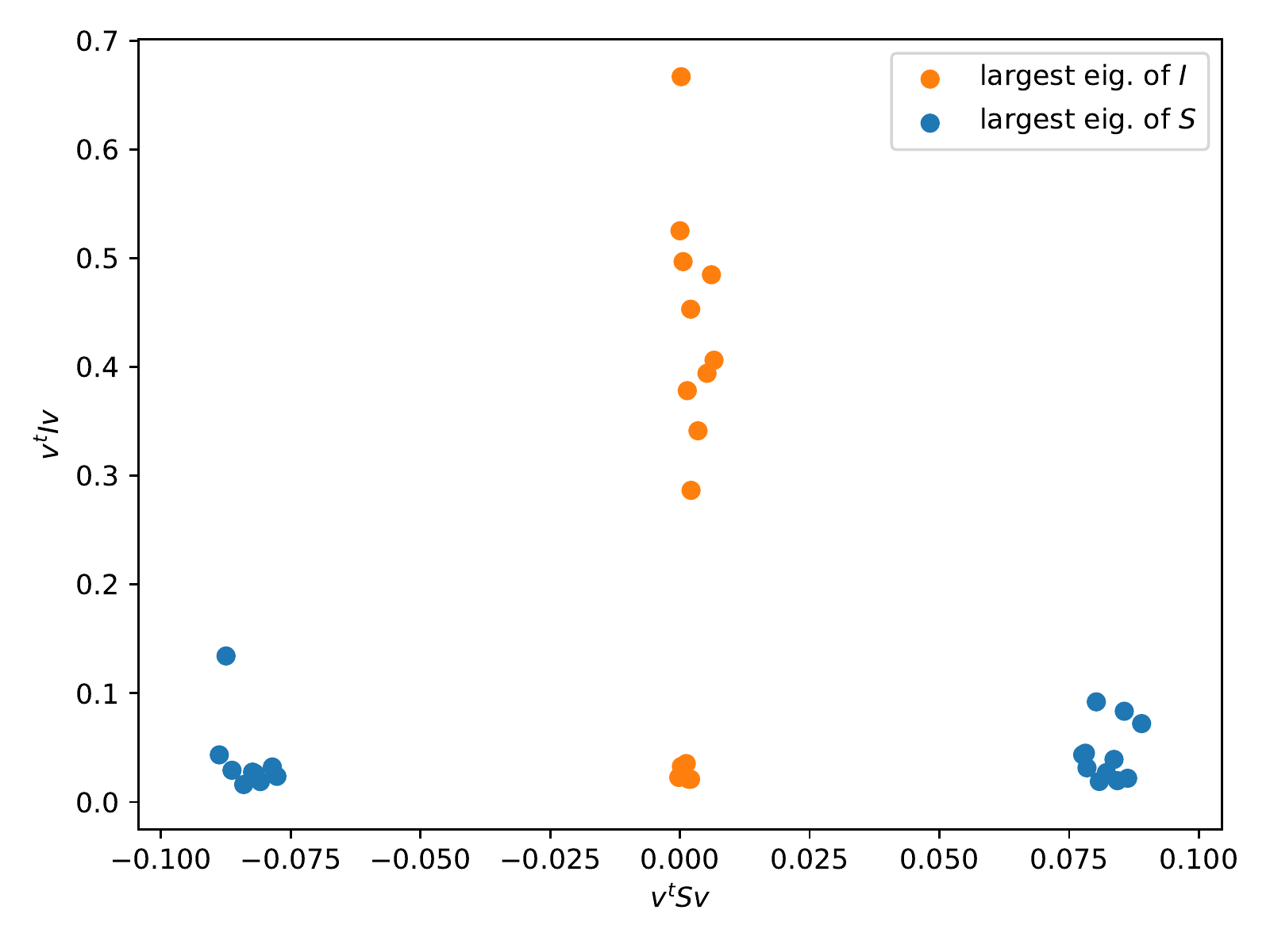}

\caption{\label{fig:sorthogonality}Illustration of the mutual orthogonality
of $I$ and $S$. For the 20 first eigenvectors of $I$ (blue) and
$S$ (orange), we plot the Rayleigh quotients $v^{T}Iv$ and $v^{T}Sv$
(with $L=3$, $n_{1}=n_{2}=1000$ and the normalized ReLU on 14x14
MNIST with $N=256$). We see that the directions where $I$ is large
are directions where $S$ is small and vice versa.}
\end{minipage}\;\;\;\;\;%\!\!\!\!\!\!\!\!\!\!\!\!\!\!\!\!\!\!\!\!\!\!\!\!\!%
\begin{minipage}{.6\textwidth}
\centering\includegraphics[scale=0.32]{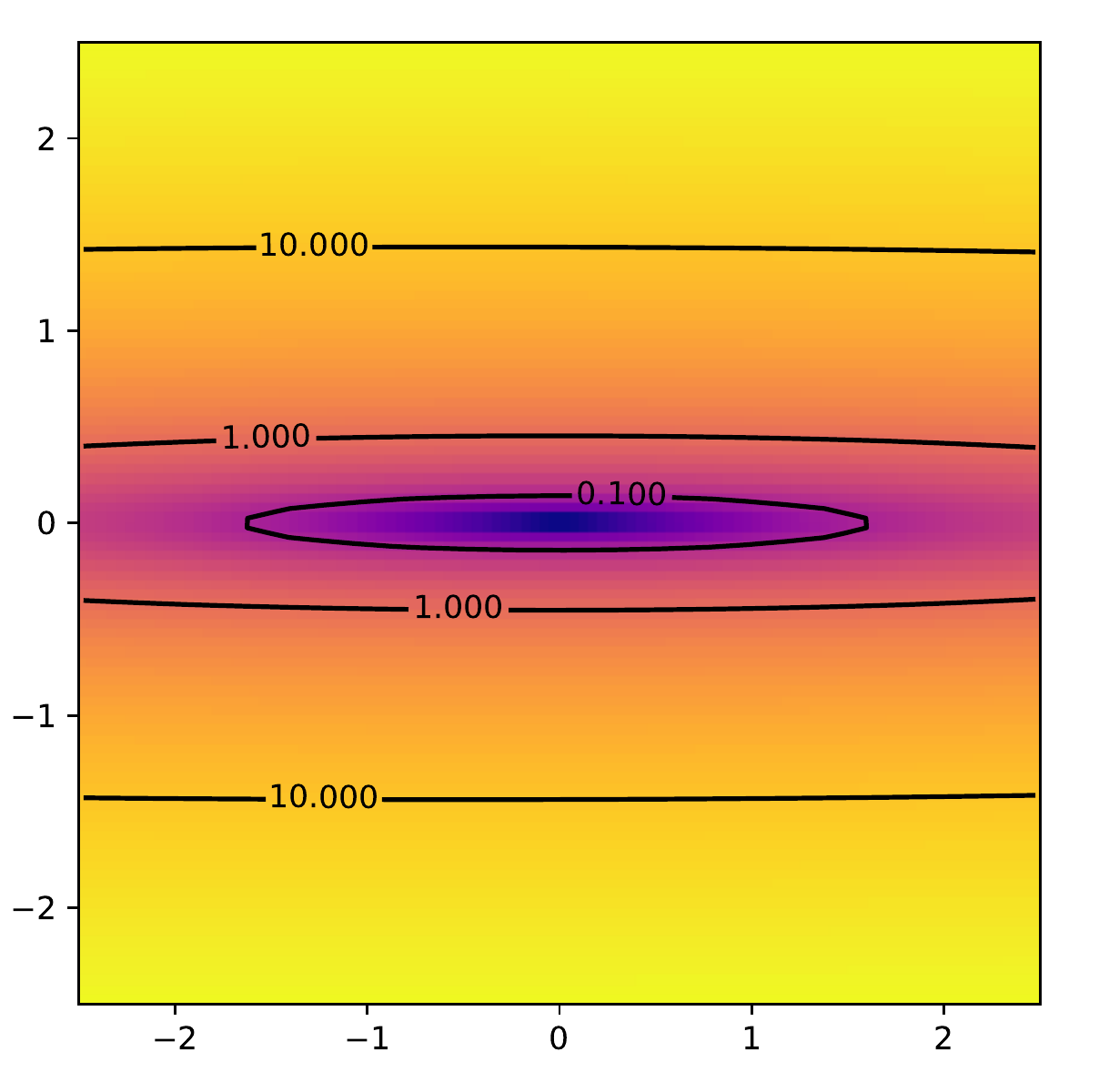}\includegraphics[scale=0.32]{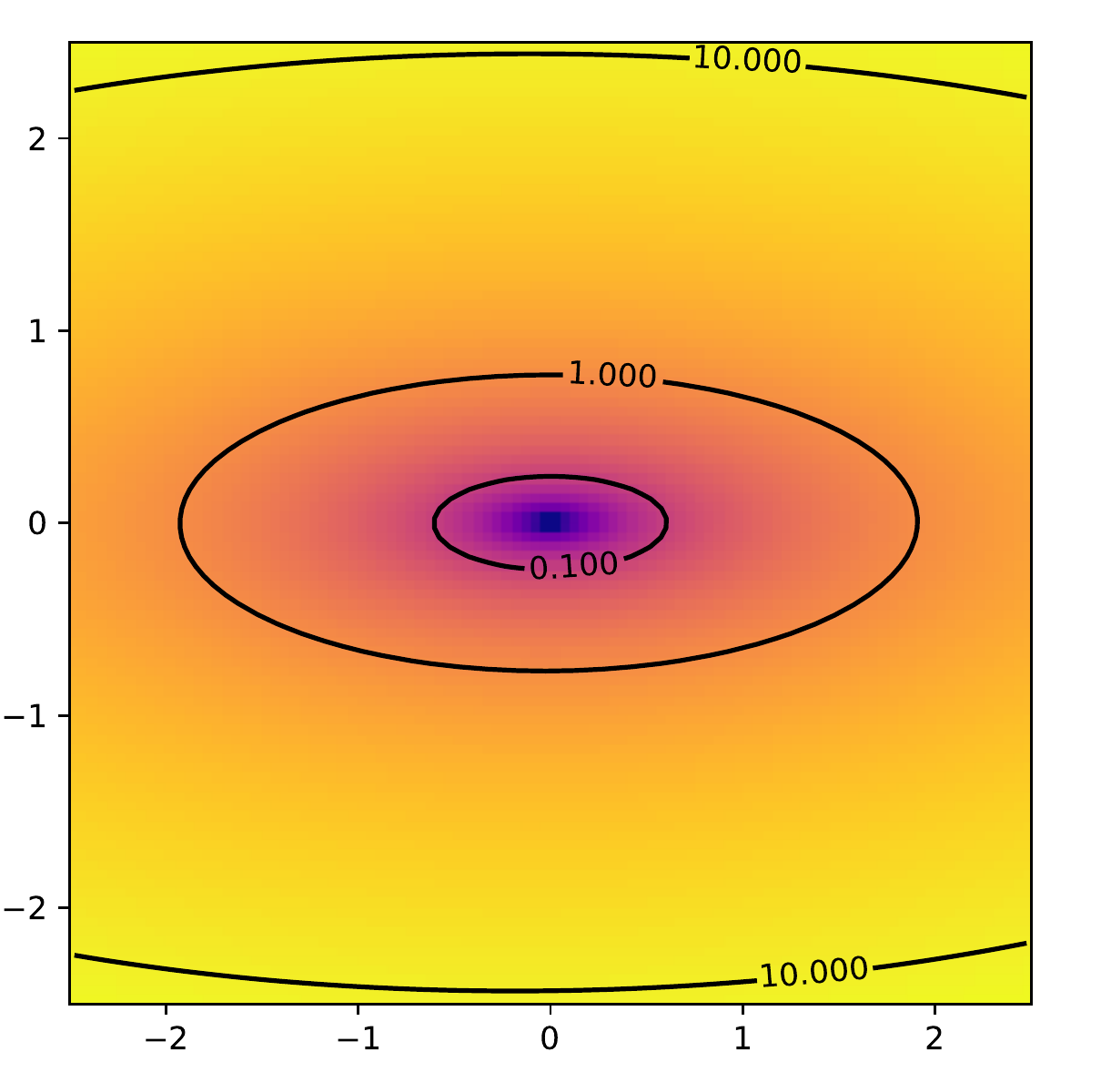}\caption{\label{fig:surface_narrow_valley}Plot of the loss surface around
a global minimum along the first (along the y coordinate) and fourth
(x coordinate) eigenvectors of $I$. The network has $L=4$, width
$n_{1}=n_{2}=n_{3}=1000$ for the smooth ReLU (left) and the normalized
smooth ReLU (right). The data is uniform on the unit disk. Normalizing
the non-linearity greatly reduces the narrow valley structure of the
loss thus speeding up training.}
\end{minipage}
\end{figure}

\subsection{The matrix $I$}

\begin{comment}
If the network function $f_{\theta}$ is a critical point of $C$
(if $C$ is convex, such a point is a global minimum), then $\nabla C=0$
and as a result $S=0$. This observation is commonly used as a motivation
to focus on the matrix $I$ \citep{Sagun,Karakida2018,Hessian_Three_Levels_Papyan_2019}
to study the shape of the loss around the global minima. Proposition
\ref{prop:S_moments} below confirms this intuition by showing that
$S$ vanishes in Frobenius norm with $\nabla C$.
\end{comment}

At a global minimizer $\theta^{*}$, the spectrum of $I$ describes
how the loss behaves around $\theta^{*}$. Along the eigenvectors
of the biggest eigenvalues of $I$, the loss increases rapidely, while
small eigenvalues correspond to flat directions. Numerically, it has
been observed that the matrix $I$ features a few dominating eigenvalues
and a bulk of small eigenvalues \citep{Singularity_Hessian_Sagun16,Sagun,GD_Tiny_subspace_GurAri_2018,Hessian_Three_Levels_Papyan_2019}.
This leads to a narrow valley structure of the loss around a minimum:
the biggest eigenvalues are the `cliffs' of the valley, i.e. the directions
along which the loss grows fastest, while the small eigenvalues form
the `flat directions'or the bottom of the valley.

Note that the rank of $I$ is bounded by $Nn_{L}$ and in the overparametrized
regime, when $Nn_{L}<P$, the matrix $I$ will have a large nullspace,
these are directions along which the value of the function on the
training set does not change. Note that in the overparametrized regime,
global minima are not isolated: they lie in a manifold of dimension
at least $P-Nn_{L}$ and the nullspace of $I$ is tangent to this
solution manifold.

The matrix $I$ is closely related to the NTK Gram matrix:
\[
\tilde{\Theta}=\mathcal{D}Y^{(L)}\left(\mathcal{D}Y^{(L)}\right)^{T}\text{ and }I=\left(\mathcal{D}Y^{(L)}\right)^{T}\mathcal{H}C\mathcal{D}Y^{(L)}.
\]

As a result, the limiting spectrum of the matrix $I$ can be directly
obtained from the NTK\footnote{This result was already obtained in \citep{Karakida2018}, but without
identifying the NTK explicitely and only at initialization.}
\begin{prop}
\label{prop:moments-I}For any loss $C$ with BGOSS and $\sigma\in C_{b}^{4}(\mathbb{R})$,
uniformly over any interval $[0,T]$, the moments $\mathrm{Tr}\left(I^{k}\right)$
converge to the following limit (with the convention that $i_{k+1}=i_{1}$):
\[
\lim_{n_{L-1}\to\infty}\negthickspace\cdots\negthickspace\lim_{n_{1}\to\infty}\mathrm{Tr}\left(I^{k}\right)=\mathrm{Tr}\left(\left(\mathcal{H}C(Y_{t})\tilde{\Theta}\right){}^{k}\right)=\frac{1}{N^{k}}\sum_{i_{1},...,i_{k}=1}^{N}\prod_{m=1}^{k}c''_{i_{m}}(f_{\theta(t)}(x_{i_{m}}))\Theta_{\infty}^{(L)}(x_{i_{m}},x_{i_{m+1}})
\]
\end{prop}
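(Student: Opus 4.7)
The plan is to reduce the statement to the convergence of the finite-width NTK Gram matrix, which is already available from \citep{jacot2018neural}. First I would apply the cyclicity of the trace to the expression $I = (\mathcal{D}Y^{(L)})^{T}\mathcal{H}C\,\mathcal{D}Y^{(L)}$. Writing $A := \mathcal{D}Y^{(L)}$ (of size $Nn_{L}\times P$) and $B := \mathcal{H}C(Y(t))$ (of size $Nn_{L}\times Nn_{L}$), iterated cyclicity gives
\[
\mathrm{Tr}(I^{k}) \;=\; \mathrm{Tr}\bigl((A^{T}BA)^{k}\bigr) \;=\; \mathrm{Tr}\bigl((BAA^{T})^{k}\bigr) \;=\; \mathrm{Tr}\bigl((B\,\Theta(t))^{k}\bigr),
\]
where $\Theta(t) = \mathcal{D}Y^{(L)}(\mathcal{D}Y^{(L)})^{T}$ is the finite-width NTK Gram matrix at time $t$. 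The key observation is that $\Theta(t)$ is of fixed size $Nn_{L}\times Nn_{L}$ independent of the hidden-layer widths, so the convergence of $\mathrm{Tr}(I^{k})$ reduces to the joint convergence of two $Nn_{L}\times Nn_{L}$ matrices.

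Next I would invoke the uniform-in-$t$ convergence of the NTK on $[0,T]$ from \citep{jacot2018neural} (extended to general BGOSS losses, since BGOSS keeps the gradient bounded along the trajectory and hence keeps $Y(t)$ in a sublevel set): $\Theta(t) \to \tilde\Theta$ in operator norm, uniformly over $[0,T]$, in the sequential limit $n_{1}\to\infty,\ldots,n_{L-1}\to\infty$. For the other factor $B$, the same results imply that the finite-width trajectory $Y(t)$ converges uniformly on $[0,T]$ to the solution $Y_{\infty}(t)$ of the limiting kernel gradient flow $\partial_{t}Y_{\infty} = -\tilde\Theta\,\nabla C(Y_{\infty})$; since $\sigma \in C_{b}^{4}(\mathbb{R})$ and each $c_{i}$ is twice continuously differentiable, the map $Y\mapsto \mathcal{H}C(Y)$ is continuous on the bounded sublevel set containing the trajectory, hence $\mathcal{H}C(Y(t)) \to \mathcal{H}C(Y_{\infty}(t))$ uniformly on $[0,T]$.

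Combining these two convergences, the continuity of the matrix-valued map $(B,\Theta)\mapsto \mathrm{Tr}\bigl((B\Theta)^{k}\bigr)$ on bounded sets immediately gives $\mathrm{Tr}(I^{k})\to \mathrm{Tr}\bigl((\mathcal{H}C(Y_{\infty}(t))\tilde\Theta)^{k}\bigr)$ uniformly in $t$. The explicit sum on the right-hand side then follows by expanding the trace along the block-diagonal structure of both matrices: $\tilde\Theta_{ik,jm} = \Theta_{\infty}^{(L)}(x_{i},x_{j})\delta_{km}$ and $\mathcal{H}C$ is block-diagonal with blocks $\tfrac{1}{N}c''_{i}(f_{\theta(t)}(x_{i}))$ (so output indices contract to $\delta_{km}$ and the $\tfrac{1}{N^{k}}$ factor is pulled out).

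The main obstacle is really the ingredient I am importing rather than the algebra: ensuring that the NTK convergence and the parameter/trajectory convergence from \citep{jacot2018neural} hold uniformly in $t$ on $[0,T]$ under the BGOSS hypothesis (rather than only for MSE). The BGOSS condition provides the uniform gradient bound needed for a Grönwall-type comparison between the finite-width trajectory and the limiting flow, and keeps $Y(t)$ in a compact set on which $\mathcal{H}C$ is uniformly continuous. Once this uniform-in-time NTK convergence is established, the remainder of the argument is purely a cyclicity/continuity computation.
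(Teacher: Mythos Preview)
Your proposal is correct and follows essentially the same approach as the paper: use cyclicity of the trace to rewrite $\mathrm{Tr}(I^{k})=\mathrm{Tr}\bigl((\mathcal{H}C\,\mathcal{D}Y^{(L)}(\mathcal{D}Y^{(L)})^{T})^{k}\bigr)$, then invoke the uniform convergence of the NTK Gram matrix from \citep{jacot2018neural}. Your write-up is in fact more careful than the paper's two-line proof, as you explicitly track the convergence of $\mathcal{H}C(Y(t))$ through that of $Y(t)$ and justify the uniformity via the BGOSS hypothesis.
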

\begin{proof}
It follows from $\mathrm{Tr}\left(I^{k}\right)=\mathrm{Tr}\left(\left(\left(\mathcal{D}Y^{(L)}\right)^{T}\mathcal{H}C\mathcal{D}Y^{(L)}\right)^{k}\right)=\mathrm{Tr}\left(\left(\mathcal{H}C\tilde{\Theta}\right)^{k}\right)$
and the asymptotic of the NTK \citep{jacot2018neural}.
\end{proof}

\subsubsection{Mean-Square Error}

When the loss is the MSE, $\mathcal{H}C$ is equal to $\frac{1}{N}Id_{Nn_{L}}$.
As a result, $\tilde{\Theta}$ and $I$ have the same non-zero eigenvalues
up to a scaling of $\nicefrac{1}{N}$. Because the NTK is assymptotically
fixed, the spectrum of $I$ is also fixed in the limit.

The eigenvectors of the NTK Gram matrix are the kernel principal components
of the data. The biggest principal components are the directions in
function space which are most favorised by the NTK. This gives a functional
interpretation of the narrow valley structure in DNNs: the cliffs
of the valley are the biggest principal components, while the flat
directions are the smallest components.
\begin{rem}
As the depth $L$ of the network increases, one can observe two regimes
\citep{Chaos_Poole2016,Freeze_Chaos_Jacot2019}: Order/Freeze where
the NTK converges to a constant and Chaos where the NTK converges
to a Kronecker delta. In the Order/Freeze the $Nn_{L}\times Nn_{L}$
Gram matrix approaches a block diagonal matrix with $n_{L}$ constant
blocks, and as a result $n_{L}$ eigenvalues of $I$ dominate the
other ones, corresponding to constant directions along each outputs
(this is in line with the observations of \citep{Hessian_Three_Levels_Papyan_2019}).
This leads to a narrow valley for the loss and slows down training.
In contrast, in the Chaos regime, the NTK Gram matrix approaches a
scaled identity matrix, and the spectrum of $I$ should hence concentrate
around a positive value, hence speeding up training. Figure \ref{fig:surface_narrow_valley}
illustrates this phenomenon: with the smooth ReLU we observe a narrow
valley, while with the normalized smooth ReLU (which lies in the Chaos
according to \citep{Freeze_Chaos_Jacot2019}) the narrowness of the
loss is reduced. A similar phenomenon may explain why normalization
helps smoothing the loss surface and speed up training \citep{HowDoesBN_Santurkar2018,Hessian_BN_outliers_ghorbani2019}.

\begin{comment}
A possible explanation for the existence of outliers in the spectrum
of $I$ is the following: ReLU networks typically lie in the Order/Freeze
regime \citep{Chaos_Poole2016,Freeze_Chaos_Jacot2019} where the NTK
approaches a constant kernel as the depth increases. As a consequence,
$n_{L}$ principal components dominate all the others, corresponding
to the constant directions along all $n_{L}$ outputs. This is what
leads to the outliers of $I$, which have already been observed to
correspond to constant directions \citep{Hessian_Three_Levels_Papyan_2019}.
These dominating constant directions lead to a very narrow valley
structure which slows down training. If instead the network lies in
the chaotic regime (for example by normalizing the non-linearity,
which is closely related to Layer Normalization \citep{Freeze_Chaos_Jacot2019}),
the NTK Gram matrix is close to the identity, hence removing the dominating
eigenvalues and the narrow valley structure of the loss. This may
explain why normalization helps smoothing the loss surface and speed
up training \citep{HowDoesBN_Santurkar2018,Hessian_BN_outliers_ghorbani2019}.
This phenomenon is illustrated in Figure \ref{fig:surface_narrow_valley}.
\end{comment}
\end{rem}

\subsubsection{Cross-Entropy Loss}

For a binary cross-entropy loss with labels $Y^{*}\in\{-1,+1\}^{N}$
\[
C(Y)=\frac{1}{N}\sum_{i=1}^{N}log\left(1+e^{-Y_{i}^{*}Y_{i}}\right),
\]
$\mathcal{H}C$ is a diagonal matrix whose entries depend on $Y$
(but not on $Y^{*}$):
\[
\mathcal{H}_{ii}C(Y)=\frac{1}{N}\frac{1}{1+e^{-Y_{i}}+e^{Y_{i}}}.
\]
The eigenvectors of $I$ then correspond to the weighted kernel principal
component of the data. The positive weights $\frac{1}{1+e^{-Y_{i}}+e^{Y_{i}}}$
approach $\nicefrac{1}{3}$ as $Y_{i}$ goes to $0$, i.e. when it
is close to the decision boundary from one class to the other, and
as $Y_{i}\to\pm\infty$ the weight go to zero. The weights evolve
in time through $Y_{i}$, the spectrum of $I$ is therefore not asymptotically
fixed as in the MSE case, but the functional interpretation of the
spectrum in terms of the kernel principal components remains.

\section{Conclusion}

We have given an explicit formula for the limiting moments of the
Hessian of DNNs throughout training. We have used the common decomposition
of the Hessian in two terms $I$ and $S$ and have shown that the
two terms are asymptotically mutually orthogonal, such that they can
be studied separately. 

The matrix $S$ vanishes in Frobenius norm as the network converges
and has vanishing operator norm throughout training. The matrix $I$
is arguably the most important as it describes the narrow valley structure
of the loss around a global minimum. The eigendecomposition of $I$
is related to the (weighted) kernel principal components of the data
w.r.t. the NTK.

\section*{Acknowledgements}
Cl\'ement Hongler acknowledges support from the ERC SG CONSTAMIS grant, the NCCR SwissMAP grant, the NSF DMS-1106588 grant, the Minerva Foundation, the Blavatnik Family Foundation, and the Latsis foundation. 

\bibliographystyle{iclr2020_conference}
\bibliography{./../main}

\appendix

\section{Proofs\label{sec:Proofs}}

For the proofs of the theorems and propositions presented in the main
text, we reformulate the setup of \citep{jacot2018neural}. For a fixed
training set $x_{1},...,x_{N}$, we consider a (possibly random) time-varying
training direction $D(t)\in\mathbb{R}^{Nn_{L}}$ which describes how
each of the outputs must be modified. In the case of gradient descent
on a cost $C(Y)$, the training direction is $D(t)=\nabla C(Y(t))$.
The parameters are updated according to the differential equation
\[
\partial_{t}\theta(t)=\left(\partial_{\theta}Y(t)\right)^{T}D(t).
\]
Under the condition that $\int_{0}^{T}\left\Vert D(t)\right\Vert _{2}dt$
is stochastically bounded as the width of the network goes to infinity,
the NTK $\Theta^{(L)}$ converges to its fixed limit uniformly over
$[0,T]$. 

The reason we consider a general training direction (and not only
a gradient of a loss) is that we can split a network in two at a layer
$\ell$ and the training of the smaller network will be according
to the training direction $D_{i}^{(\ell)}(t)$ given by
\[
D_{i}^{(\ell)}(t)=diag\left(\dot{\sigma}\left(\alpha^{(\ell)}(x_{i})\right)\right)\left(\frac{1}{\sqrt{n_{\ell}}}W^{(\ell)}\right)^{T}...diag\left(\dot{\sigma}\left(\alpha^{(L-1)}(x_{i})\right)\right)\left(\frac{1}{\sqrt{n_{L-1}}}W^{(L-1)}\right)^{T}D_{i}(t)
\]
because the derivatives $\dot{\sigma}$ are bounded and by Lemma 1
of the Appendix of \citep{jacot2018neural}, this training direction
satisfies the constraints even though it is not the gradient of a
loss. As a consequence, as $n_{1}\to\infty,...,n_{\ell-1}\to\infty$
the NTK of the smaller network $\Theta^{(\ell)}$ also converges to
its limit uniformly over $[0,T]$. As we let $n_{\ell}\to\infty$
the pre-activations $\tilde{\alpha}_{i}^{(\ell)}$ and weights $W_{ij}^{(\ell)}$
move at a rate of $\nicefrac{1}{\sqrt{n_{\ell}}}$. We will use this
rate of change to prove that other types of kernels are constant during
training.

When a network is trained with gradient descent on a loss $C$ with
BGOSS, the integral $\int_{0}^{T}\left\Vert D(t)\right\Vert _{2}dt$
is stochastically bounded. Because the loss is decreasing during training,
the outputs $Y(t)$ lie in the sublevel set $U_{C(Y(0))}$ for all
times $t$. The norm of the gradient is hence bounded for all times
$t$. Because the distribution of $Y(0)$ converges to a multivariate
Gaussian, $b(C(Y(0)))$ is stochastically bounded as the width grows,
where $b(a)$ is a bound on the norm of the gradient on $U_{a}$.
We then have the bound $\int_{0}^{T}\left\Vert D(t)\right\Vert _{2}dt\leq Tb(C(Y(0)))$
which is itself stochastically bounded.

For the binary and softmax cross-entropy losses the gradient is uniformly
bounded:
\begin{prop}
\label{prop:bounded-gradient-cross-entropy}For the binary cross-entropy
loss $C$ and any $Y\in\mathbb{R}^{N}$, $\left\Vert \nabla C(Y)\right\Vert _{2}\leq\frac{1}{\sqrt{N}}$.

For the softmax cross-entropy loss C on $c\in\mathbb{N}$ classes
and any $Y\in\mathbb{R}^{Nc}$, $\left\Vert \nabla C(Y)\right\Vert _{2}\leq\frac{\sqrt{2c}}{\sqrt{N}}$.
\end{prop}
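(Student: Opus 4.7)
The plan is a direct entry-wise calculation; no asymptotic or kernel machinery is needed, since the statement is purely about the gradient of a convex loss on $\mathbb{R}^{Nn_{L}}$.

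For the binary case, I differentiate $C(Y)=\frac{1}{N}\sum_{i=1}^{N}\log\bigl(1+e^{-Y_{i}^{*}Y_{i}}\bigr)$ coordinate-wise to obtain $\partial_{Y_{i}}C(Y)=-\frac{Y_{i}^{*}}{N}\cdot\frac{1}{1+e^{Y_{i}^{*}Y_{i}}}$. Since $Y_{i}^{*}\in\{-1,+1\}$ (so $|Y_{i}^{*}|=1$) and the sigmoid factor $\bigl(1+e^{Y_{i}^{*}Y_{i}}\bigr)^{-1}$ lies in $(0,1)$, each coordinate satisfies $|\partial_{Y_{i}}C|\leq 1/N$. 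Summing the $N$ squared entries gives $\|\nabla C(Y)\|_{2}^{2}\leq 1/N$, which is the first claim.

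For the softmax case with one-hot categorical labels $y_{i}^{*}\in\{1,\ldots,c\}$, I would use the standard identity $\partial_{Y_{ik}}C(Y)=\frac{1}{N}\bigl(p_{ik}(Y)-\mathbb{1}[k=y_{i}^{*}]\bigr)$, where $p_{i}(Y)$ is the softmax probability vector $p_{ik}=e^{Y_{ik}}/\sum_{j}e^{Y_{ij}}$. For each fixed sample $i$, I bound the squared norm of the $c$-dimensional block: using $p_{ik}\in[0,1]$ together with $\sum_{k}p_{ik}=1$ yields $\sum_{k}p_{ik}^{2}\leq\sum_{k}p_{ik}=1$, hence $\sum_{k}(p_{ik}-\mathbb{1}[k=y_{i}^{*}])^{2}\leq 2-2p_{i,y_{i}^{*}}\leq 2$. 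Aggregating over the $N$ samples then gives $\|\nabla C(Y)\|_{2}^{2}\leq 2/N$, which already implies the stated bound $\sqrt{2c}/\sqrt{N}$ (in fact the sharper $\sqrt{2}/\sqrt{N}$). Alternatively, the cruder per-coordinate bound $|p_{ik}-\mathbb{1}[k=y_{i}^{*}]|\leq 1$ summed over $c$ classes and $N$ samples directly gives $\sqrt{c}/\sqrt{N}$, so the $\sqrt{2c}/\sqrt{N}$ estimate in the statement is a comfortable upper bound either way.

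There is no real obstacle: the proof is essentially a one-line differentiation followed by elementary inequalities for the sigmoid and the probability simplex. The only care needed is bookkeeping — the binary labels live in $\{-1,+1\}$ while the softmax labels are categorical, and the $1/N$ averaging factor must be tracked through the differentiation. Both bounds are uniform in $Y$, which is exactly what is needed to feed into the BGOSS argument invoked earlier in the appendix.
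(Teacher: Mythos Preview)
Your proof is correct and follows essentially the same direct-differentiation strategy as the paper: compute the entrywise partial derivatives, bound them using the fact that sigmoid/softmax outputs lie in $[0,1]$, and sum. The only cosmetic differences are that the paper's proof switches to $\{0,1\}$ labels for the binary case (versus the $\{\pm1\}$ convention you and the main text use), and for the softmax case the paper uses the cruder per-entry bound $|p_{ik}-\delta_{y_i^{*},k}|\leq 2/N$ to land exactly on $\sqrt{2c}/\sqrt{N}$, whereas your block-norm argument $\sum_k(p_{ik}-\mathbb{1}[k=y_i^*])^2\leq 2$ gives the sharper $\sqrt{2}/\sqrt{N}$.
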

\begin{proof}
The binary cross-entropy loss with labels $Y^{*}\in\left\{ 0,1\right\} ^{N}$
is
\[
C(Y)=-\frac{1}{N}\sum_{i=1}^{N}\log\frac{e^{Y_{i}Y_{i}^{*}}}{1+e^{Y_{i}}}=\frac{1}{N}\sum_{i=1}^{N}\log\left(1+e^{Y_{i}}\right)-Y_{i}Y_{i}^{*}
\]
 and the gradient at an input $i$ is
\[
\mathcal{\partial}_{i}C(Y)=\frac{1}{N}\frac{e^{Y_{i}}-Y_{i}^{*}(1+e^{Y_{i}})}{1+e^{Y_{i}}}
\]
which is bounded in absolute value by $\frac{1}{N}$ for both $Y_{i}^{*}=0,1$
such that $\left\Vert \nabla C(Y)\right\Vert _{2}\leq\frac{1}{\sqrt{N}}$.

The softmax cross-entropy loss over $c$ classes with labels $Y^{*}\in\left\{ 1,\ldots,c\right\} ^{N}$
is defined by
\[
C(Y)=-\frac{1}{N}\sum_{i=1}^{N}\log\frac{e^{Y_{iY_{i}^{*}}}}{\sum_{k=1}^{c}e^{Y_{ik}}}=\frac{1}{N}\sum_{i=1}^{N}\log\left(\sum_{k=1}^{c}e^{Y_{ik}}\right)-Y_{iY_{i}^{*}}.
\]
The gradient is at an input $i$ and output class $m$ is 
\[
\mathcal{\partial}_{im}C(Y)=\frac{1}{N}\left(\frac{e^{Y_{im}}}{\sum_{k=1}^{c}e^{Y_{ik}}}-\delta_{Y_{i}^{*}m}\right)
\]
which is bounded in absolute value by $\frac{2}{N}$ such that $\left\Vert \nabla C(Y)\right\Vert _{2}\leq\frac{\sqrt{2c}}{\sqrt{N}}$.
\end{proof}

\section{Preliminaries}

To study the moments of the matrix $S$, we first have to show that
two tensors vanish as $n_{1},...,n_{L-1}\to\infty$:
\begin{align*}
\Omega_{k_{0},k_{1},k_{2}}^{(L)}(x_{0},x_{1},x_{2}) & =\left(\mathcal{\nabla}f_{\theta,k_{0}}(x_{0})\right)^{T}\mathcal{H}f_{\theta,k_{1}}(x_{1})\mathcal{\nabla}f_{\theta,k_{2}}(x_{2})\\
\Gamma_{k_{0},k_{1},k_{2},k_{3}}^{(L)}(x_{0},x_{1},x_{2},x_{4}) & =\left(\mathcal{\nabla}f_{\theta,k_{0}}(x_{0})\right)^{T}\mathcal{H}f_{\theta,k_{1}}(x_{1})\mathcal{H}f_{\theta,k_{2}}(x_{2})\mathcal{\nabla}f_{\theta,k_{3}}(x_{3}).
\end{align*}
We study these tensors recursively, for this, we need a recursive
definition for the first derivatives $\partial_{\theta_{p}}f_{\theta,k}(x)$
and second derivatives $\partial_{\theta_{p}\theta_{p'}}^{2}f_{\theta,k}(x)$.
The value of these derivatives depend on the layer $\ell$ the parameters
$\theta_{p}$ and $\theta_{p'}$ belong to, and on whether they are
connection weights $W_{mk}^{(\ell)}$ or biases $b_{k}^{(\ell)}$.
The derivatives with respect to the parameters of the last layer are
\begin{align*}
\partial_{W_{mk}^{(L-1)}}f_{\theta,k'}(x) & =\frac{1}{\sqrt{n_{L-1}}}\alpha_{m}^{(L-1)}(x)\delta_{kk'}\\
\partial_{b_{k}^{(L-1)}}f_{\theta,k'}(x) & =\beta^{2}\delta_{kk'}
\end{align*}
for parameters $\theta_{p}$ which belong to the lower layers the
derivatives can be defined recursively by
\[
\partial_{\theta_{p}}f_{\theta,k}(x)=\frac{1}{\sqrt{n_{L-1}}}\sum_{m=1}^{n_{L-1}}\partial_{\theta_{p}}\tilde{\alpha}_{m}^{(L-1)}(x)\dot{\sigma}\left(\tilde{\alpha}_{m}^{(L-1)}(x)\right)W_{mk}^{(L-1)}.
\]

For the second derivatives, we first note that if either of the parameters
$\theta_{p}$ or $\theta_{p'}$ are bias of the last layer, or if
they are both connection weights of the last layer, then $\partial_{\theta_{p}\theta_{p'}}^{2}f_{\theta,k}(x)=0$.
Two cases are left: when one parameter is a connection weight of the
last layer and the others belong to the lower layers, and when both
belong to the lower layers. Both cases can be defined recursively
in terms of the first and second derivatives of $\tilde{\alpha}_{m}^{(L-1)}$:
\begin{align*}
\partial_{\theta_{p}W_{mk}^{(L)}}^{2}f_{\theta,k'}(x) & =\frac{1}{\sqrt{n_{L-1}}}\partial_{\theta_{p}}\tilde{\alpha}_{m}^{(L-1)}(x)\dot{\sigma}\left(\tilde{\alpha}_{m}^{(L-1)}(x)\right)\delta_{kk'}\\
\partial_{\theta_{p}\theta_{p'}}^{2}f_{\theta,k'}(x) & =\frac{1}{\sqrt{n_{L-1}}}\sum_{m=1}^{n_{L-1}}\partial_{\theta_{p}\theta_{p'}}^{2}\tilde{\alpha}_{m}^{(L-1)}(x)\dot{\sigma}\left(\tilde{\alpha}_{m}^{(L-1)}(x)\right)W_{mk}^{(L-1)}\\
 & +\frac{1}{\sqrt{n_{L-1}}}\sum_{m=1}^{n_{L-1}}\partial_{\theta_{p}}\tilde{\alpha}_{m}^{(L-1)}(x)\partial_{\theta_{p'}}\tilde{\alpha}_{m}^{(L-1)}(x)\ddot{\sigma}\left(\tilde{\alpha}_{m}^{(L-1)}(x)\right)W_{mk}^{(L-1)}.
\end{align*}

Using these recursive definitions, the tensors $\Omega^{(L+1)}$ and
$\Gamma^{(L+1)}$ are given in terms of $\Theta^{(L)}$,$\Omega^{(L)}$
and $\Gamma^{(L)}$, in the same manner that the NTK $\Theta^{(L+1)}$
is defined recursively in terms of $\Theta^{(L)}$ in \citep{jacot2018neural}.
\begin{lem}
\label{lem:omega-vanishes}For any loss $C$ with BGOSS and $\sigma\in C_{b}^{4}(\mathbb{R})$,
we have uniformly over $[0,T]$
\[
\lim_{n_{L-1}\to\infty}\cdots\lim_{n_{1}\to\infty}\Omega_{k_{0},k_{1},k_{2}}^{(L)}(x_{0},x_{1},x_{2})=0
\]
\end{lem}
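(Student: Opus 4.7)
The plan is to argue by induction on the depth $L$. The base case $L=1$ is trivial since $f_\theta(x)=\frac{1}{\sqrt{n_0}}W^{(0)}x+\beta b^{(0)}$ is affine in $\theta$, so $\mathcal{H}f_\theta\equiv 0$ and $\Omega^{(1)}\equiv 0$. For the inductive step, I would expand
$\Omega^{(L)}_{k_0,k_1,k_2}(x_0,x_1,x_2)=\sum_{p,p'}\partial_{\theta_p}f_{\theta,k_0}(x_0)\,\partial^2_{\theta_p\theta_{p'}}f_{\theta,k_1}(x_1)\,\partial_{\theta_{p'}}f_{\theta,k_2}(x_2)$
and split the double sum according to which layer each of $\theta_p,\theta_{p'}$ belongs to, using the recursive formulas for first and second parameter derivatives given in the preliminaries.

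This produces three cases. In \textbf{Case A} both parameters are last-layer: direct inspection of the recursive formulas shows $\partial^2_{\theta_p\theta_{p'}}f_\theta\equiv 0$ on such pairs (either one of them is a last-layer bias or both are $W^{(L-1)}$ entries, in which case $f_\theta$ is bilinear), so this case contributes zero. In \textbf{Case B}, exactly one parameter is a last-layer weight $W^{(L-1)}_{mk}$; substituting the mixed-derivative formula together with the first-derivative formula and contracting the Kronecker deltas, one is left (after using the subnetwork-NTK diagonalization $\Theta^{(L-1)}_{m,m'}\to\delta_{mm'}\Theta^{(L-1)}_\infty$ from the $L-1$ case of the Jacot et al.\ NTK theorem applied to $\tilde\alpha^{(L-1)}$) with an expression of the form $\frac{\delta_{k_0 k_1}\Theta^{(L-1)}_\infty(x_1,x_2)}{n_{L-1}^{3/2}}\sum_m\alpha^{(L-1)}_m(x_0)\dot\sigma_m(x_1)\dot\sigma_m(x_2)W^{(L-1)}_{m k_2}$, a centered sum of $n_{L-1}$ terms linear in the last-layer Gaussian weights, which is $O(n_{L-1}^{-1})$ by a standard CLT bound. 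In \textbf{Case C}, both parameters lie in the lower layers; the recursive formula for $\partial^2 f_\theta$ produces two sub-terms. The first involves $\partial^2\tilde\alpha^{(L-1)}_m$ and, after substituting the first-derivative recursions for the outer factors, rearranges into $\frac{1}{n_{L-1}^{3/2}}\sum_{m_0,m,m_2}(\cdots)\,\Omega^{(L-1)}_{m_0,m,m_2}(x_0,x_1,x_2)$, a finite linear combination (for fixed $n_{L-1}$) of subnetwork $\Omega^{(L-1)}$ tensors, each vanishing by the inductive hypothesis as $n_1,\ldots,n_{L-2}\to\infty$. The second involves products $\partial\tilde\alpha^{(L-1)}_m\cdot\partial\tilde\alpha^{(L-1)}_m$, becomes an NTK-NTK contraction $\Theta^{(L-1)}_{m_0,m}\Theta^{(L-1)}_{m,m_2}$, and after diagonalization collapses to $\frac{\Theta^{(L-1)}_\infty(x_0,x_1)\Theta^{(L-1)}_\infty(x_1,x_2)}{n_{L-1}^{3/2}}\sum_m W^{(L-1)}_{m k_0}W^{(L-1)}_{m k_1}W^{(L-1)}_{m k_2}\dot\sigma_m(x_0)\ddot\sigma_m(x_1)\dot\sigma_m(x_2)$, which is $O(n_{L-1}^{-1})$ because triple products of independent standard Gaussians are centered with bounded sixth moment, giving variance $O(n_{L-1})$ for the whole sum.

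The main obstacle is not the initialization estimate, which is essentially the CLT/induction bookkeeping sketched above, but the extension to a uniform-in-$t$ statement on $[0,T]$. I would handle this as in the proof of the original NTK theorem and as indicated in the appendix preamble: under BGOSS, $\int_0^T\|D(t)\|_2\,dt$ is stochastically bounded, so during training the weights $W^{(\ell)}$ and pre-activations $\tilde\alpha^{(\ell)}$ drift by only $O(n_\ell^{-1/2})$, and a Gr\"onwall-type argument propagates these uniform-on-$[0,T]$ small-perturbation bounds through the layers, ensuring that all the random quantities entering Cases B and C (activations, $\dot\sigma,\ddot\sigma$ at pre-activations, the last-layer weights, the subnetwork NTK $\Theta^{(L-1)}$, and the inductive tensor $\Omega^{(L-1)}$) remain in the same asymptotic regime as at $t=0$, uniformly in $t$. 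A secondary care-point is that the sequential order of limits must be respected: one takes $n_1,\ldots,n_{L-2}\to\infty$ first (to diagonalize $\Theta^{(L-1)}$ and kill $\Omega^{(L-1)}$ at the subnetwork level), and only then $n_{L-1}\to\infty$ to execute the final CLT step, exactly matching the order specified in the lemma.
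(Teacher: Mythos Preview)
Your proposal is correct and follows essentially the same approach as the paper: induction on depth, the same four-term recursive expansion (your Cases B and C together reproduce exactly the paper's four summands), NTK diagonalization plus the inductive hypothesis to reduce to single-index sums, a CLT-type zero-mean/finite-variance bound at initialization, and the $O(n_\ell^{-1/2})$ weight/pre-activation drift to extend uniformly over $[0,T]$. The only cosmetic difference is that you organize the expansion by ``which layer does each of $\theta_p,\theta_{p'}$ belong to'' whereas the paper writes the full recursion out directly.
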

\begin{proof}
The proof is done by induction. When $L=1$ the second derivatives
$\partial_{\theta_{p}\theta_{p'}}^{2}f_{\theta,k}(x)=0$ and $\Omega_{k_{0},k_{1},k_{2}}^{(L)}(x_{0},x_{1},x_{2})=0$.

For the induction step, we write $\Omega_{k_{0},k_{1},k_{2}}^{(\ell+1)}(x_{0},x_{1},x_{2})$
recursively as 
\begin{align*}
 & n_{\ell}^{-\nicefrac{3}{2}}\sum_{m_{0},m_{1},m_{2}}\Theta_{m_{0},m_{1}}^{(\ell)}(x_{0},x_{1})\Theta_{m_{1},m_{2}}^{(\ell)}(x_{1},x_{2})\dot{\sigma}(\tilde{\alpha}_{m_{0}}^{(\ell)}(x_{0}))\ddot{\sigma}(\tilde{\alpha}_{m_{1}}^{(\ell)}(x_{1}))\dot{\sigma}(\tilde{\alpha}_{m_{2}}^{(\ell)}(x_{2}))W_{m_{0}k_{0}}^{(\ell)}W_{m_{1}k_{1}}^{(\ell)}W_{m_{2}k_{2}}^{(\ell)}\\
 & +n_{\ell}^{-\nicefrac{3}{2}}\sum_{m_{0},m_{1},m_{2}}\Omega_{m_{0},m_{1},m_{2}}^{(\ell)}(x_{0},x_{1},x_{2})\dot{\sigma}(\tilde{\alpha}_{m_{0}}^{(\ell)}(x_{0}))\dot{\sigma}(\tilde{\alpha}_{m_{1}}^{(\ell)}(x_{1}))\dot{\sigma}(\tilde{\alpha}_{m_{2}}^{(\ell)}(x_{2}))W_{m_{0}k_{0}}^{(\ell)}W_{m_{1}k_{1}}^{(\ell)}W_{m_{2}k_{2}}^{(\ell)}\\
 & +n_{\ell}^{-\nicefrac{3}{2}}\sum_{m_{0},m_{1}}\Theta_{m_{0},m_{1}}^{(\ell)}(x_{0},x_{1})\dot{\sigma}(\tilde{\alpha}_{m_{0}}^{(\ell)}(x_{0}))\dot{\sigma}(\tilde{\alpha}_{m_{1}}^{(\ell)}(x_{1}))\sigma(\tilde{\alpha}_{m_{1}}^{(\ell)}(x_{2}))W_{m_{0}k_{0}}^{(\ell)}\delta_{k_{1}k_{2}}\\
 & +n_{\ell}^{-\nicefrac{3}{2}}\sum_{m_{1},m_{2}}\Theta_{m_{1},m_{2}}^{(\ell)}(x_{1},x_{2})\sigma(\tilde{\alpha}_{m_{1}}^{(\ell)}(x_{0}))\dot{\sigma}(\tilde{\alpha}_{m_{1}}^{(\ell)}(x_{1}))\dot{\sigma}(\tilde{\alpha}_{m_{2}}^{(\ell)}(x_{2}))\delta_{k_{0}k_{1}}W_{m_{2}k_{2}}^{(\ell)}.
\end{align*}

As $n_{1},...,n_{\ell-1}\to\infty$ and for any times $t<T$, the
NTK $\Theta^{(\ell)}$ converges to its limit while $\Omega^{(\ell)}$
vanishes. The second summand hence vanishes and the others converge
to 
\begin{align*}
 & n_{\ell}^{-\nicefrac{3}{2}}\sum_{m}\Theta_{\infty}^{(\ell)}(x_{0},x_{1})\Theta_{\infty}^{(\ell)}(x_{1},x_{2})\dot{\sigma}(\tilde{\alpha}_{m}^{(\ell)}(x_{0}))\ddot{\sigma}(\tilde{\alpha}_{m}^{(\ell)}(x_{1}))\dot{\sigma}(\tilde{\alpha}_{m}^{(\ell)}(x_{2}))W_{mk_{0}}^{(\ell)}W_{mk_{1}}^{(\ell)}W_{mk_{2}}^{(\ell)}\\
 & +n_{\ell}^{-\nicefrac{3}{2}}\sum_{m}\Theta_{\infty}^{(\ell)}(x_{0},x_{1})\dot{\sigma}(\tilde{\alpha}_{m}^{(\ell)}(x_{0}))\dot{\sigma}(\tilde{\alpha}_{m}^{(\ell)}(x_{1}))\sigma(\tilde{\alpha}_{m}^{(\ell)}(x_{2}))W_{mk_{0}}^{(\ell)}\delta_{k_{1}k_{2}}\\
 & +n_{\ell}^{-\nicefrac{3}{2}}\sum_{m}\Theta_{\infty}^{(\ell)}(x_{1},x_{2})\sigma(\tilde{\alpha}_{m}^{(\ell)}(x_{0}))\dot{\sigma}(\tilde{\alpha}_{m}^{(\ell)}(x_{1}))\dot{\sigma}(\tilde{\alpha}_{m}^{(\ell)}(x_{2}))\delta_{k_{0}k_{1}}W_{mk_{2}}^{(\ell)}.
\end{align*}
At initialization, all terms vanish as $n_{\ell}\to\infty$ because
all summands are independent with zero mean and finite variance: in
the $n_{1}\to\infty,\ldots,n_{\ell-1}\to\infty$ limit, the $\tilde{\alpha}_{m}^{(\ell)}(x)$
are independent for different $m$, see \citep{jacot2018neural}. During
training, the weights $W^{(\ell)}$ and preactivations $\tilde{\alpha}^{(\ell)}$
move at a rate of $\nicefrac{1}{\sqrt{n_{\ell}}}$ (see the proof
of convergence of the NTK in \citep{jacot2018neural}). Since $\dot{\sigma}$
is Lipschitz, we obtain that the motion during training of each of
the sums is of order $n_{\ell}^{-\nicefrac{3}{2}+\nicefrac{1}{2}}=n_{\ell}^{-1}$.
As a result, uniformly over times $t\in\left[0,T\right]$, all the
sums vanish.
\end{proof}
Similarily, we have
\begin{lem}
\label{lem:Gamma_vanish}For any loss $C$ with BGOSS and $\sigma\in C_{b}^{4}(\mathbb{R})$,
we have uniformly over $[0,T]$
\[
\lim_{n_{L-1}\to\infty}\cdots\lim_{n_{1}\to\infty}\Gamma_{k_{0},k_{1},k_{2},k_{3}}^{(L)}(x_{0},x_{1},x_{2},x_{3})=0
\]
\end{lem}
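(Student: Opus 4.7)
The plan is to prove Lemma \ref{lem:Gamma_vanish} by induction on $L$, in direct parallel with the proof of Lemma \ref{lem:omega-vanishes}. For the base case $L=1$, the map $\theta\mapsto f_\theta(x)=\tilde{\alpha}^{(1)}(x;\theta)$ is affine in $\theta$, so all second derivatives $\partial^{2}_{\theta_p\theta_{p'}}f_\theta$ vanish identically and therefore $\Gamma^{(1)}\equiv 0$.

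For the induction step, I would substitute the recursive expressions for $\partial_{\theta_p}f_{\theta,k}$ and $\partial^{2}_{\theta_p\theta_{p'}}f_{\theta,k}$ recalled in the preliminaries into all four slots of $\Gamma^{(\ell+1)}_{k_0,k_1,k_2,k_3}(x_0,x_1,x_2,x_3)$. Each Hessian expansion splits into a \emph{boundary} piece, where exactly one of the two differentiated parameters is a last-layer connection weight $W^{(\ell)}_{mk}$ and the contribution is $n_\ell^{-1/2}\,\partial_{\theta_p}\tilde{\alpha}^{(\ell)}_m\,\dot{\sigma}(\tilde{\alpha}^{(\ell)}_m)\,\delta_{kk'}$, and a \emph{bulk} piece where both parameters are in lower layers, giving a sum of a $\ddot{\sigma}$-term (which after contraction produces $\Theta^{(\ell)}\Theta^{(\ell)}\ddot{\sigma}$) and a second-derivative term (which after contraction produces $\Omega^{(\ell)}$ or $\Gamma^{(\ell)}$). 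Expanding both Hessians this way and both outer gradients using the single-derivative formula, and grouping terms, yields a finite list of schematic summands of the form $n_\ell^{-2}\sum_{m_0,m_1,m_2,m_3}\kappa^{(\ell)}(\cdots)\,\prod_i\dot{\sigma}\,\text{or}\,\ddot{\sigma}(\tilde{\alpha}^{(\ell)}_{m_i})\,\prod_i W^{(\ell)}_{m_i k_i}$ (with appropriate collapses whenever a boundary $\delta_{kk'}$ fires), where $\kappa^{(\ell)}$ is one of $\Theta^{(\ell)}\Theta^{(\ell)}$, $\Theta^{(\ell)}\,\Omega^{(\ell)}$, $\Gamma^{(\ell)}$, or products thereof.

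I would then dispose of each summand by one of three mechanisms. First, any summand factoring through $\Gamma^{(\ell)}$ vanishes by the induction hypothesis as $n_1,\ldots,n_{\ell-1}\to\infty$. Second, any summand factoring through $\Omega^{(\ell)}$ vanishes by Lemma \ref{lem:omega-vanishes}. Third, any surviving ``pure $\Theta^{(\ell)}$'' summand reduces, after diagonal collapses forced by the asymptotic independence of distinct neurons (valid in the $n_1,\ldots,n_{\ell-1}\to\infty$ limit, cf.\ \citep{jacot2018neural}), to a single sum $n_\ell^{-s}\sum_m X_m$ where each $X_m$ is a product of $\dot{\sigma}(\tilde{\alpha}^{(\ell)}_m)$ or $\ddot{\sigma}(\tilde{\alpha}^{(\ell)}_m)$ factors with an odd total number of last-layer weights $W^{(\ell)}_{m\cdot}$ in the four outer slots; since at initialization $W^{(\ell)}$ is Gaussian and independent of $\tilde{\alpha}^{(\ell)}_m$, the expectation of $X_m$ vanishes and the variance is bounded, so the scaling $n_\ell^{-s}$ (which power counting shows is at least $n_\ell^{-3/2}$ after accounting for the extra $1/\sqrt{n_\ell}$ carried by each of the two second-derivative contractions and each of the two gradient contractions) forces the sum to converge to $0$ in probability as $n_\ell\to\infty$.

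To upgrade the initialization statement to uniform convergence on $[0,T]$, I would invoke the same a priori bound used in \citep{jacot2018neural} and in the proof of Lemma \ref{lem:omega-vanishes}: under the BGOSS assumption, over $[0,T]$ the weights and preactivations at layer $\ell$ move by $O(n_\ell^{-1/2})$, and since $\sigma\in C_b^4$ makes $\dot{\sigma}$ and $\ddot{\sigma}$ Lipschitz, the motion of each of the summands above is of strictly lower order in $n_\ell$ than its size at initialization, so the vanishing persists uniformly in $t\in[0,T]$. The main obstacle is the purely combinatorial one of checking that the expansion of $\Gamma^{(\ell+1)}$ contains no additional cross term that escapes all three mechanisms above; this is tedious because each Hessian expansion splits into two branches and the four outer index slots create on the order of ten cross terms, but each one can be matched to $\Gamma^{(\ell)}$, $\Omega^{(\ell)}$, or an odd-moment vanishing sum by inspection, and no new analytic input beyond what was used for Lemma \ref{lem:omega-vanishes} is required.
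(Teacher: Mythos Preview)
Your approach is essentially the paper's: induction on $L$, recursive expansion of $\Gamma^{(\ell+1)}$ through the first- and second-derivative formulas, and disposal of the resulting terms via (i) the induction hypothesis on $\Gamma^{(\ell)}$, (ii) Lemma~\ref{lem:omega-vanishes} on $\Omega^{(\ell)}$, and (iii) direct analysis of the pure-$\Theta^{(\ell)}$ terms after the inner limit and diagonal collapse. The training-time upgrade is also the same.

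However, your third mechanism contains a concrete error. You assert that after diagonal collapse each surviving pure-$\Theta$ summand carries an \emph{odd} total number of last-layer weights $W^{(\ell)}_{m\cdot}$, and hence has zero mean. This is false: writing out the full expansion (as the paper does) the surviving terms carry $0$, $2$, or $4$ weight factors --- always even. In particular there is a term
\[
n_\ell^{-2}\sum_{m}\Theta_\infty^{(\ell)}(x_1,x_2)\,\sigma\bigl(\tilde{\alpha}^{(\ell)}_m(x_0)\bigr)\dot{\sigma}\bigl(\tilde{\alpha}^{(\ell)}_m(x_1)\bigr)\dot{\sigma}\bigl(\tilde{\alpha}^{(\ell)}_m(x_2)\bigr)\sigma\bigl(\tilde{\alpha}^{(\ell)}_m(x_3)\bigr)\,\delta_{k_0k_1}\delta_{k_2k_3}
\]
(both outer gradient slots and the shared Hessian index taken at the last layer) with \emph{no} $W$-factors at all, and its summands are generically not mean-zero. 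Your description of the $X_m$ as products of $\dot{\sigma}$ and $\ddot{\sigma}$ only also misses the $\sigma$-factors that appear whenever a gradient slot is a last-layer weight.

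The fix is simpler than your odd-moment argument: every term in the expansion of $\Gamma^{(\ell+1)}$ carries the prefactor $n_\ell^{-2}$ (four slots, each contributing $1/\sqrt{n_\ell}$), and after the inner limit the diagonal collapse $\Theta^{(\ell)}_{mm'}\to\delta_{mm'}\Theta_\infty^{(\ell)}$ reduces each surviving sum to a single $m$-index. Hence each surviving term is $n_\ell^{-2}\sum_{m=1}^{n_\ell} X_m = O(n_\ell^{-1})\to 0$ by the law of large numbers, \emph{regardless} of whether $\mathbb{E}[X_m]=0$. Your stated bound $s\ge 3/2$ together with zero mean would have sufficed via a CLT-type estimate, but since the zero-mean claim fails you need the actual value $s=2$ (which does hold) to conclude. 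Once this is corrected the rest of your outline, including the $O(n_\ell^{-1/2})$ motion bound for the uniform-in-$t$ upgrade, matches the paper's proof.
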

\begin{proof}
The proof is done by induction. When $L=1$ the hessian $\mathcal{H}F^{(1)}=0$,
such that $\Gamma_{k_{0},k_{1},k_{2},k_{3}}^{(L)}(x_{0},x_{1},x_{2},x_{3})=0$.

For the induction step, $\Gamma^{(\ell+1)}$ can be defined recursively:

\begin{align*}
 & \Gamma_{k_{0},k_{1},k_{2},k_{3}}^{(L+1)}(x_{0},x_{1},x_{2},x_{3})\\
 & \begin{aligned}\begin{aligned}=n_{L}^{-2}\sum_{m_{0},m_{1},m_{2},m_{3}}\Gamma_{m_{0},m_{1},m_{2},m_{3}}^{(L)}(x_{0},x_{1},x_{2},x_{3})\dot{\sigma}(\alpha_{m_{0}}^{(L)}(x_{0}))\dot{\sigma}(\alpha_{m_{1}}^{(L)}(x_{1}))\dot{\sigma}(\alpha_{m_{2}}^{(L)}(x_{2}))\dot{\sigma}(\alpha_{m_{3}}^{(L)}(x_{3}))\\
W_{m_{0}k_{0}}^{(L)}W_{m_{1}k_{1}}^{(L)}W_{m_{2}k_{2}}^{(L)}W_{m_{3}k_{3}}^{(L)}
\end{aligned}
\end{aligned}
\\
 & \begin{aligned}+n_{L}^{-2}\sum_{m_{0},m_{1},m_{2},m_{3}}\Theta_{m_{0},m_{1}}^{(L)}(x_{0},x_{1})\Omega_{m_{1},m_{2},m_{3}}^{(L)}(x_{1},x_{2},x_{3})\dot{\sigma}(\alpha_{m_{0}}^{(L)}(x_{0}))\ddot{\sigma}(\alpha_{m_{1}}^{(L)}(x_{1}))\\
\dot{\sigma}(\alpha_{m_{2}}^{(L)}(x_{2}))\dot{\sigma}(\alpha_{m_{3}}^{(L)}(x_{3}))W_{m_{0}k_{0}}^{(L)}W_{m_{1}k_{1}}^{(L)}W_{m_{2}k_{2}}^{(L)}W_{m_{3}k_{3}}^{(L)}
\end{aligned}
\\
 & \begin{aligned}+n_{L}^{-2}\sum_{m_{0},m_{1},m_{2},m_{3}}\Omega_{m_{0},m_{1},m_{2}}^{(L)}(x_{0},x_{1},x_{2})\Theta_{m_{2},m_{3}}^{(L)}(x_{2},x_{3})\dot{\sigma}(\alpha_{m_{0}}^{(L)}(x_{0}))\dot{\sigma}(\alpha_{m_{1}}^{(L)}(x_{1}))\\
\ddot{\sigma}(\alpha_{m_{2}}^{(L)}(x_{2}))\dot{\sigma}(\alpha_{m_{3}}^{(L)}(x_{3}))W_{m_{0}k_{0}}^{(L)}W_{m_{1}k_{1}}^{(L)}W_{m_{2}k_{2}}^{(L)}W_{m_{3}k_{3}}^{(L)}
\end{aligned}
\\
 & \begin{aligned}+n_{L}^{-2}\sum_{m_{0},m_{1},m_{2},m_{3}}\Theta_{m_{0},m_{1}}^{(L)}(x_{0},x_{1})\Theta_{m_{1},m_{2}}^{(L)}(x_{1},x_{2})\Theta_{m_{2},m_{3}}^{(L)}(x_{2},x_{3})\dot{\sigma}(\alpha_{m_{0}}^{(L)}(x_{0}))\ddot{\sigma}(\alpha_{m_{1}}^{(L)}(x_{1}))\\
\ddot{\sigma}(\alpha_{m_{2}}^{(L)}(x_{2}))\dot{\sigma}(\alpha_{m_{3}}^{(L)}(x_{3}))W_{m_{0}k_{0}}^{(L)}W_{m_{1}k_{1}}^{(L)}W_{m_{2}k_{2}}^{(L)}W_{m_{3}k_{3}}^{(L)}
\end{aligned}
\\
 & \begin{aligned}+n_{L}^{-2}\sum_{m_{1},m_{2},m_{3}}\Omega_{m_{1},m_{2},m_{3}}^{(L)}(x_{1},x_{2},x_{3})\sigma(\alpha_{m_{1}}^{(L)}(x_{0}))\dot{\sigma}(\alpha_{m_{1}}^{(L)}(x_{1}))\dot{\sigma}(\alpha_{m_{2}}^{(L)}(x_{2}))\dot{\sigma}(\alpha_{m_{3}}^{(L)}(x_{3}))\\
\delta_{k_{0}k_{1}}W_{m_{2}k_{2}}^{(L)}W_{m_{3}k_{3}}^{(L)}
\end{aligned}
\\
 & \begin{aligned}+n_{L}^{-2}\sum_{m_{1},m_{2},m_{3}}\Theta_{m_{1},m_{2}}^{(L)}(x_{1},x_{2})\Theta_{m_{2},m_{3}}^{(L)}(x_{2},x_{3})\sigma(\alpha_{m_{1}}^{(L)}(x_{0}))\dot{\sigma}(\alpha_{m_{1}}^{(L)}(x_{1}))\ddot{\sigma}(\alpha_{m_{2}}^{(L)}(x_{2}))\dot{\sigma}(\alpha_{m_{3}}^{(L)}(x_{3}))\\
\delta_{k_{0}k_{1}}W_{m_{2}k_{2}}^{(L)}W_{m_{3}k_{3}}^{(L)}
\end{aligned}
\\
 & \begin{aligned}+n_{L}^{-2}\sum_{m_{0},m_{1},m_{2}}\Omega_{m_{0},m_{1},m_{2}}^{(L)}(x_{0},x_{1},x_{2})\dot{\sigma}(\alpha_{m_{0}}^{(L)}(x_{0}))\dot{\sigma}(\alpha_{m_{1}}^{(L)}(x_{1}))\dot{\sigma}(\alpha_{m_{2}}^{(L)}(x_{2}))\sigma(\alpha_{m_{2}}^{(L)}(x_{3}))\\
W_{m_{0}k_{0}}^{(L)}W_{m_{1}k_{1}}^{(L)}\delta_{k_{2}k_{3}}
\end{aligned}
\\
 & \begin{aligned}+n_{L}^{-2}\sum_{m_{0},m_{1},m_{2}}\Theta_{m_{0},m_{1}}^{(L)}(x_{0},x_{1})\Theta_{m_{1},m_{2}}^{(L)}(x_{1},x_{2})\dot{\sigma}(\alpha_{m_{0}}^{(L)}(x_{0}))\ddot{\sigma}(\alpha_{m_{1}}^{(L)}(x_{1}))\dot{\sigma}(\alpha_{m_{2}}^{(L)}(x_{2}))\sigma(\alpha_{m_{2}}^{(L)}(x_{3}))\\
W_{m_{0}k_{0}}^{(L)}W_{m_{1}k_{1}}^{(L)}\delta_{k_{2}k_{3}}
\end{aligned}
\\
 & +n_{L}^{-2}\sum_{m_{1},m_{2}}\Theta_{m_{1},m_{2}}^{(L)}(x_{1},x_{2})\sigma(\alpha_{m_{1}}^{(L)}(x_{0}))\dot{\sigma}(\alpha_{m_{1}}^{(L)}(x_{1}))\dot{\sigma}(\alpha_{m_{2}}^{(L)}(x_{2}))\sigma(\alpha_{m_{2}}^{(L)}(x_{3}))\delta_{k_{0}k_{1}}\delta_{k_{2}k_{3}}\\
 & \begin{aligned}+n_{L}^{-2}\sum_{m_{0},m_{1},m_{3}}\Theta_{m_{0},m_{1}}^{(L)}(x_{0},x_{1})\Theta_{m_{1},m_{3}}^{(L)}(x_{2},x_{3})\dot{\sigma}(\alpha_{m_{0}}^{(L)}(x_{0}))\dot{\sigma}(\alpha_{m_{1}}^{(L)}(x_{1}))\dot{\sigma}(\alpha_{m_{1}}^{(L)}(x_{2}))\dot{\sigma}(\alpha_{m_{3}}^{(L)}(x_{3}))\\
W_{m_{0}k_{0}}^{(L)}\delta_{k_{1}k_{2}}W_{m_{3}k_{3}}^{(L)}
\end{aligned}
\end{align*}
As $n_{1},...,n_{\ell-1}\to\infty$ and for any times $t<T$, the
NTK $\Theta^{(\ell)}$ converges to its limit while $\Omega^{(\ell)}$
and $\Gamma^{(\ell)}$ vanishes. $\Gamma_{k_{0},k_{1},k_{2},k_{3}}^{(L+1)}(x_{0},x_{1},x_{2},x_{3})$
therefore converges to:
\begin{align*}
 & \begin{aligned}+n_{L}^{-2}\sum_{m}\Theta_{\infty}^{(L)}(x_{0},x_{1})\Theta_{\infty}^{(L)}(x_{1},x_{2})\Theta_{\infty}^{(L)}(x_{2},x_{3})\dot{\sigma}(\alpha_{m}^{(L)}(x_{0}))\ddot{\sigma}(\alpha_{m}^{(L)}(x_{1}))\ddot{\sigma}(\alpha_{m}^{(L)}(x_{2}))\dot{\sigma}(\alpha_{m}^{(L)}(x_{3}))\\
W_{mk_{0}}^{(L)}W_{mk_{1}}^{(L)}W_{mk_{2}}^{(L)}W_{mk_{3}}^{(L)}
\end{aligned}
\\
 & \begin{aligned}+n_{L}^{-2}\sum_{m}\Theta_{\infty}^{(L)}(x_{1},x_{2})\Theta_{\infty}^{(L)}(x_{2},x_{3})\sigma(\alpha_{m}^{(L)}(x_{0}))\dot{\sigma}(\alpha_{m}^{(L)}(x_{1}))\ddot{\sigma}(\alpha_{m}^{(L)}(x_{2}))\dot{\sigma}(\alpha_{m}^{(L)}(x_{3}))\\
\delta_{k_{0}k_{1}}W_{mk_{2}}^{(L)}W_{mk_{3}}^{(L)}
\end{aligned}
\\
 & \begin{aligned}+n_{L}^{-2}\sum_{m}\Theta_{\infty}^{(L)}(x_{0},x_{1})\Theta_{\infty}^{(L)}(x_{1},x_{2})\dot{\sigma}(\alpha_{m}^{(L)}(x_{0}))\ddot{\sigma}(\alpha_{m}^{(L)}(x_{1}))\dot{\sigma}(\alpha_{m}^{(L)}(x_{2}))\sigma(\alpha_{m}^{(L)}(x_{3}))\\
W_{mk_{0}}^{(L)}W_{mk_{1}}^{(L)}\delta_{k_{2}k_{3}}
\end{aligned}
\\
 & \begin{aligned}+n_{L}^{-2}\sum_{m}\Theta_{\infty}^{(L)}(x_{1},x_{2})\sigma(\alpha_{m}^{(L)}(x_{0}))\dot{\sigma}(\alpha_{m}^{(L)}(x_{1}))\dot{\sigma}(\alpha_{m}^{(L)}(x_{2}))\sigma(\alpha_{m}^{(L)}(x_{3}))\delta_{k_{0}k_{1}}\delta_{k_{2}k_{3}}\end{aligned}
\\
 & \begin{aligned}+n_{L}^{-2}\sum_{m}\Theta_{\infty}^{(L)}(x_{0},x_{1})\Theta_{\infty}^{(L)}(x_{2},x_{3})\dot{\sigma}(\alpha_{m}^{(L)}(x_{0}))\dot{\sigma}(\alpha_{m}^{(L)}(x_{1}))\dot{\sigma}(\alpha_{m}^{(L)}(x_{2}))\dot{\sigma}(\alpha_{m}^{(L)}(x_{3}))\\
W_{mk_{0}}^{(L)}\delta_{k_{1}k_{2}}W_{mk_{3}}^{(L)}
\end{aligned}
\end{align*}

For the convergence during training, we proceed similarily to the
proof of Lemma \ref{lem:omega-vanishes}. At initialization, all terms
vanish as $n_{\ell}\to\infty$ because all summands are independent
(after taking the $n_{1},\ldots,n_{L-1}\to\infty$ limit) with zero
mean and finite variance. During training, the weights $W^{(\ell)}$
and preactivations $\tilde{\alpha}^{(\ell)}$ move at a rate of $\nicefrac{1}{\sqrt{n_{\ell}}}$
which leads to a change of order $n_{\ell}^{-2+\nicefrac{1}{2}}=n_{\ell}^{-1.5}$,
which vanishes for all times $t$ too.
\end{proof}

\section{The Matrix $S$\label{sec:The-Matrix-S}}

We now have the theoretical tools to describe the moments of the matrix
$S$. We first give a bound for the rank of $S$:
\begin{prop}
$Rank(S)\leq2(n_{1}+...+n_{L-1})Nn_{L}$
\end{prop}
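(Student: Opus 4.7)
The plan is to reduce the proposition to a pointwise rank bound on the output-Hessian of $f_\theta$. Starting from $S=\sum_{i=1}^N\sum_{k=1}^{n_L}(\nabla C(Y))_{ik}\,\mathcal{H}f_{\theta,k}(x_i)$, subadditivity of rank gives $\mathrm{Rank}(S)\leq\sum_{i,k}\mathrm{Rank}(\mathcal{H}f_{\theta,k}(x_i))$, so the claim reduces to showing
\[
\mathrm{Rank}(\mathcal{H}f_{\theta,k}(x))\leq 2(n_1+\ldots+n_{L-1})
\]
for any input $x$ and output index $k$; summing $Nn_L$ such bounds yields the stated $2(n_1+\ldots+n_{L-1})Nn_L$.

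I would prove this pointwise bound by induction on $L$, but the naive single-output statement is too weak: expanding $f_{\theta,k}$ via its last layer $f_{\theta,k}(x)=n_{L-1}^{-1/2}\sum_m W^{(L-1)}_{mk}\sigma(u_m)+\beta b^{(L-1)}_k$ with $u_m=\tilde\alpha^{(L-1)}_m(x)$ produces a term $\sum_m\tilde c_m\mathcal{H}_{\theta_<}u_m$ in the lower-layer block, and summing $n_{L-1}$ individual inductive bounds here already overshoots the target. The correct inductive statement is the \emph{weighted} version: for any coefficients $(c_k)_{k=1}^{n_L}\in\mathbb{R}^{n_L}$,
\[
\mathrm{Rank}\Big(\sum_{k=1}^{n_L}c_k\,\mathcal{H}f_{\theta,k}(x)\Big)\leq 2(n_1+\ldots+n_{L-1}).
\]

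The base case $L=1$ is immediate since $f_\theta$ is affine in $\theta$. For the inductive step I would split $\theta=(\theta_<,\theta^{(L-1)})$ and write the weighted Hessian as a block matrix $\begin{pmatrix}A_c & B_c \\ B_c^T & 0\end{pmatrix}$. A direct computation using the recursive derivative formulas shows that $A_c=V\tilde D V^T+\sum_m\tilde c_m\mathcal{H}_{\theta_<}u_m$ where $V=[\nabla_{\theta_<}u_1,\ldots,\nabla_{\theta_<}u_{n_{L-1}}]$ and $\tilde D$ is diagonal, while $B_c$ has nonzero columns only along the $W^{(L-1)}$-directions, each of the form $(\text{scalar})\cdot\nabla_{\theta_<}u_m$, so $B_c=VE$ for some $E$. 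Split the block matrix as $M_1+M_2$ with $M_2=\mathrm{diag}(\sum_m\tilde c_m\mathcal{H}_{\theta_<}u_m,0)$; the inductive hypothesis, applied to the $(L-1)$-layer sub-network with outputs $(u_m)_m$ and coefficients $(\tilde c_m)_m$, gives $\mathrm{Rank}(M_2)\leq 2(n_1+\ldots+n_{L-2})$. For $M_1=\begin{pmatrix}V\tilde DV^T & VE \\ E^TV^T & 0\end{pmatrix}$, both row blocks factor through $V$ (the top block as $V\cdot(\tilde DV^T,E)$, the bottom as $E^TV^T$ padded by zeros), each of row rank at most $n_{L-1}$, so $\mathrm{Rank}(M_1)\leq 2n_{L-1}$. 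Summing gives $2n_{L-1}+2(n_1+\ldots+n_{L-2})=2(n_1+\ldots+n_{L-1})$, closing the induction.

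The only subtle step is the identification of the strengthened inductive hypothesis; the remaining linear-algebra bookkeeping (the shared column factor $V$ across both the curvature block and the cross block, allowing the block matrix rank to be bounded by $2n_{L-1}$ rather than the $3n_{L-1}$ one would get from treating the pieces separately) is routine once the block decomposition is written out.
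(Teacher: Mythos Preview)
Your proof is correct, but it takes a genuinely different route from the paper's. Both arguments begin identically, writing $S$ as a sum of $Nn_L$ terms and reducing to the pointwise bound $\mathrm{Rank}(\mathcal{H}f_{\theta,k}(x))\leq 2(n_1+\ldots+n_{L-1})$. From there the paper does \emph{not} induct on depth: instead it observes that the gradient $\theta\mapsto\nabla_\theta f_{\theta,k}(x)$ depends on $\theta$ only through the $n_1+\ldots+n_{L-1}$ forward activations $\alpha_i^{(\ell)}(x;\theta)$ and the $n_1+\ldots+n_{L-1}$ backward sensitivities $\partial_{\tilde\alpha_i^{(\ell)}}f_{\theta,k}(x)$, so this map factors through $\mathbb{R}^{2(n_1+\ldots+n_{L-1})}$. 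The Hessian $\mathcal{H}f_{\theta,k}(x)$ is the Jacobian of this composite map, and by the chain rule it is a product of two matrices each of rank at most $2(n_1+\ldots+n_{L-1})$.

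Your inductive block-decomposition argument gets there too, and the strengthened hypothesis (allowing arbitrary output-weightings $c_k$) is exactly the right fix to make the recursion close. The paper's approach is shorter and more structural: it names the bottleneck variables once and lets the chain rule do the work, and it transparently extends to other layered architectures. Your approach is more elementary---it avoids having to spot the forward/backward factorization---at the cost of the extra bookkeeping in the block splitting $M_1+M_2$ and the $2n_{L-1}$ bound on $M_1$. Incidentally, the paper's argument also yields your strengthened statement (take $f=\sum_k c_k f_{\theta,k}$ and run the same factorization), so the two bounds coincide.
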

\begin{proof}
We first observe that $S$ is given by a sum of $Nn_{L}$ matrices:
\[
S_{pp'}=\sum_{i=1}^{N}\sum_{k=1}^{n_{L}}\partial_{ik}C\partial_{\theta_{p}\theta_{p}}^{2}f_{\theta,k}(x_{i}).
\]
It is therefore sufficiant to show that the rank of each matrices
$\mathcal{H}f_{\theta,k}(x)=\left(\partial_{\theta_{p}\theta_{p'}}^{2}f_{\theta,k}(x_{i})\right)_{p,p'}$
is bounded by $2(n_{1}+...+n_{L})$.

The derivatives $\partial_{\theta_{p}}f_{\theta,k}(x)$ have different
definition depending on whether the parameter $\theta_{p}$ is a connection
weight $W_{ij}^{(\ell)}$ or a bias $b_{j}^{(\ell)}$:
\begin{align*}
\partial_{W_{ij}^{(\ell)}}f_{\theta,k}(x) & =\frac{1}{\sqrt{n_{\ell}}}\alpha_{i}^{(\ell)}(x;\theta)\partial_{\tilde{\alpha}_{j}^{(\ell+1)}(x;\theta)}f_{\theta,k}(x)\\
\partial_{b_{j}^{(\ell)}}f_{\theta,k}(x) & =\beta\partial_{\tilde{\alpha}_{j}^{(\ell+1)}(x;\theta)}f_{\theta,k}(x)
\end{align*}
These formulas only depend on $\theta$ through the values $\left(\alpha_{i}^{(\ell)}(x;\theta)\right)_{\ell,i}$
and $\left(\partial_{\tilde{\alpha}_{i}^{(\ell)}(x;\theta)}f_{\theta,k}(x)\right)_{\ell,i}$
for $\ell=1,...,L-1$ (note that both $\alpha_{i}^{(0)}(x)=x_{i}$
and $\partial_{\tilde{\alpha}_{i}^{(L)}(x;\theta)}f_{\theta,k}(x)=\delta_{ik}$
do not depend on $\theta$). Together there are $2(n_{1}+...+n_{L-1})$
of them. As a consequence, the map $\theta\mapsto\left(\partial_{\theta_{p}}f_{\theta,k}(x_{i})\right)_{p}$
can be written as a composition
\[
\theta\in\mathbb{R}^{P}\mapsto\left(\alpha_{i}^{(\ell)}(x;\theta),\partial_{\tilde{\alpha}_{i}^{(\ell)}(x;\theta)}f_{\theta,k}(x)\right)_{\ell,i}\in\mathbb{R}^{2(n_{1}+...+n_{L-1})}\mapsto\left(\partial_{\theta_{p}}f_{\theta,k}(x_{i})\right)_{p}\in\mathbb{R}^{P}
\]
and the matrix $\mathcal{H}f_{\theta,k}(x)$ is equal to the Jacobian
of this map. By the chain rule, $\mathcal{H}f_{\theta,k}(x)$ is the
matrix multiplication of the Jacobians of the two submaps, whose rank
are bounded by $2(n_{1}+...+n_{L-1})$, hence bounding the rank of
$\mathcal{H}f_{\theta,k}(x)$. And because $S$ is a sum of $Nn_{L}$
matrices of rank smaller than $2(n_{1}+...+n_{L-1})$, the rank of
$S$ is bounded by $2(n_{1}+...+n_{L-1})Nn_{L}$.
\end{proof}

\subsection{Moments}

Let us now prove Proposition \ref{prop:S_moments}:
\begin{prop}
\label{prop:S_moments}For any loss $C$ with BGOSS and $\sigma\in C_{b}^{4}(\mathbb{R})$,
the first two moments of $S$ take the form
\begin{align*}
\mathrm{Tr}\left(S(t)\right) & =G(t)^{T}\nabla C(t)\\
\mathrm{Tr}\left(S(t)^{2}\right) & =\nabla C(t)^{T}\tilde{\Upsilon}(t)\nabla C(t)
\end{align*}

- At initialization, $g_{\theta}$ and $f_{\theta}$ converge to a
(centered) Gaussian pair with covariances
\begin{align*}
\mathbb{E}[g_{\theta,k}(x)g_{\theta,k'}(x')] & =\delta_{kk'}\Xi_{\infty}^{(L)}(x,x')\\
\mathbb{E}[g_{\theta,k}(x)f_{\theta,k'}(x')] & =\delta_{kk'}\Phi_{\infty}^{(L)}(x,x')\\
\mathbb{E}[f_{\theta,k}(x)f_{\theta,k'}(x')] & =\delta_{kk'}\Sigma_{\infty}^{(L)}(x,x')
\end{align*}
 and during training $g_{\theta}$ evolves according to
\[
\partial_{t}g_{\theta,k}(x)=\sum_{i=1}^{N}\Lambda_{\infty}^{(L)}(x,x_{i})\partial_{ik}C(Y(t))_{\cdot}
\]

- Uniformly over any interval $[0,T]$ where $\int_{0}^{T}\left\Vert \nabla C(t)\right\Vert _{2}dt$
is stochastically bounded, the kernel $\Upsilon^{(L)}$ has a deterministic
and fixed limit $\lim_{n_{L-1}\to\infty}\cdots\lim_{n_{1}\to\infty}\Upsilon_{kk'}^{(L)}(x,x')=\delta_{kk'}\Upsilon_{\infty}^{(L)}(x,x')$
with limiting kernel:

\[
\Upsilon_{\infty}^{(L)}(x,x')=\sum_{\ell=1}^{L-1}\left(\Theta_{\infty}^{(\ell)}(x,x')^{2}\ddot{\Sigma}^{(\ell)}(x,x')+2\Theta_{\infty}^{(\ell)}(x,x')\dot{\Sigma}^{(\ell)}(x,x')\right)\dot{\Sigma}^{(\ell+1)}(x,x')\cdots\dot{\Sigma}^{(L-1)}(x,x').
\]

- The higher moment $k>2$ vanish: $\lim_{n_{L-1}\to\infty}\cdots\lim_{n_{1}\to\infty}\mathrm{Tr}\left(S^{k}\right)=0$.
\end{prop}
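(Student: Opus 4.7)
The plan is to follow four stages, all resting on recursive layer-by-layer arguments in the same spirit as the preliminary Lemmas \ref{lem:omega-vanishes} and \ref{lem:Gamma_vanish}. The algebraic identities $\mathrm{Tr}(S) = G^{T}\nabla C$ and $\mathrm{Tr}(S^{2}) = \nabla C^{T}\tilde{\Upsilon}\nabla C$ are direct expansions of $S = \nabla C \cdot \mathcal{H}Y^{(L)}$ together with the definitions of $G$ and of the kernel $\Upsilon^{(L)}$, exactly as sketched in the text preceding the statement.

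For the joint Gaussian limit of $(g_{\theta}, f_{\theta})$ at initialization I would set up an induction on depth based on the recursive formulas for $\partial^{2}_{\theta_{p}\theta_{p'}} f_{\theta}$ given in the preliminaries. The base case $L=1$ is trivial since $\mathcal{H}f^{(1)} = 0$. The inductive step decomposes $g_{\theta}^{(\ell+1)}$ into a linear backpropagation of $g_{\theta}^{(\ell)}$ through the top weights, which inherits Gaussianity from the induction hypothesis, plus a $\ddot{\sigma}$-term coupled to the NTK $\Theta^{(\ell)}$. Taking $n_{1}, \dots, n_{\ell} \to \infty$ sequentially, the induction hypothesis supplies joint Gaussianity of $(g_{\theta}^{(\ell)}, f_{\theta}^{(\ell)})$ and $\Theta^{(\ell)} \to \Theta_{\infty}^{(\ell)}$; a CLT applied to sums over top-layer neurons, whose preactivations are asymptotically iid Gaussian by \citep{jacot2018neural}, then yields the Gaussian limit at level $\ell+1$ with the claimed covariances, $\Xi_{\infty}^{(\ell+1)}$ and $\Phi_{\infty}^{(\ell+1)}$ being read off as Gaussian integrals of products of $\sigma$, $\dot{\sigma}$, $\ddot{\sigma}$. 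The same machinery applied to $\Upsilon^{(L)} = \sum_{p,p'}(\partial^{2}_{pp'} f_{\theta})(\partial^{2}_{pp'} f_{\theta})^{T}$ splits the contributions by the layer to which $(p,p')$ belong: the two-$\dot{\sigma}$ summand produces the $2\Theta_{\infty}^{(\ell)}\dot{\Sigma}_{\infty}^{(\ell)}$ term (the factor $2$ arising from the two orderings of the pair) and the $\ddot{\sigma}$ summand produces the $\Theta_{\infty}^{(\ell)\,2}\ddot{\Sigma}_{\infty}^{(\ell)}$ term, each composed with the deeper $\dot{\Sigma}_{\infty}^{(\ell+1)} \cdots \dot{\Sigma}_{\infty}^{(L-1)}$ factors accounting for backpropagation.

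For the training dynamics, $\partial_{t}\theta = -(\mathcal{D}Y^{(L)})^{T}\nabla C$ yields
\[
\partial_{t} g_{\theta,k}(x) = -\sum_{i,j} \Bigl( \sum_{p,q} \partial^{3}_{\theta_{p}^{2}\theta_{q}} f_{\theta,k}(x)\, \partial_{\theta_{q}} f_{\theta,j}(x_{i}) \Bigr) \partial_{ij} C(Y(t)),
\]
and the bracketed tensor, analyzed by the same layer recursion, converges to $\delta_{kj}\Lambda_{\infty}^{(L)}(x, x_{i})$. The constancy of $\Upsilon^{(L)}$ and $\Lambda^{(L)}$ on $[0,T]$ then follows from the uniform $O(n_{\ell}^{-1/2})$ drift of weights and preactivations during training together with $\sigma \in C_{b}^{4}$, exactly as in the NTK constancy argument of \citep{jacot2018neural}.

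The main obstacle I expect is the vanishing of $\mathrm{Tr}(S^{k})$ for $k \geq 3$. Expanding
\[
\mathrm{Tr}(S^{k}) = \sum_{i_{1}j_{1},\dots,i_{k}j_{k}} \Bigl(\prod_{\ell=1}^{k} \partial_{i_{\ell}j_{\ell}} C\Bigr)\, \mathrm{Tr}\bigl( \mathcal{H}f_{\theta,j_{1}}(x_{i_{1}}) \cdots \mathcal{H}f_{\theta,j_{k}}(x_{i_{k}}) \bigr),
\]
reduces the problem to showing that each inner trace of a $k$-fold product of Hessians vanishes in the sequential limit. This should be proved by induction on the depth, directly generalizing Lemmas \ref{lem:omega-vanishes} and \ref{lem:Gamma_vanish} from the two-Hessian / two-flanking-gradient case to $k$ Hessians: unfolding the recursion for $\mathcal{H}f$ at each layer produces sums of at most $n_{\ell}^{k-1}$ terms carrying a prefactor $n_{\ell}^{-k/2}$, giving a net rate $O(n_{\ell}^{1-k/2})$ that vanishes as soon as $k \geq 3$, with the CLT cancellations delivered by the independence and zero-mean structure after each sequential limit; stability during training again follows from the $O(n_{\ell}^{-1/2})$ drift bound. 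The combinatorial bookkeeping of these generalized contractions through the successive layers is where most of the technical work is expected to lie.
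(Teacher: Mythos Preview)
Your treatment of the first two moments, the joint Gaussian limit of $(g_\theta, f_\theta)$, and the training dynamics matches the paper's approach: these are packaged as Lemmas~\ref{prop:moment1_S} and~\ref{prop:moment2_S}, proved by the same layer-by-layer induction you describe, with the $\Upsilon$ recursion producing exactly the two summands you identify.

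For the higher moments your counting is internally inconsistent: ``sums of at most $n_\ell^{k-1}$ terms carrying a prefactor $n_\ell^{-k/2}$'' gives $n_\ell^{k/2-1}$, which \emph{grows} for $k\geq 3$, not the claimed $O(n_\ell^{1-k/2})$. The correct count emerges only \emph{after} the lower-layer sequential limits force $\Theta^{(\ell)}$ to become diagonal in the neuron indices and $\Omega^{(\ell)}$, $\Gamma^{(\ell)}$ to vanish: the surviving contributions then collapse to \emph{single}-neuron sums, i.e.\ $O(n_\ell)$ terms at prefactor $n_\ell^{-k/2}$, which does yield $O(n_\ell^{1-k/2})$. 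Carrying this through for every $k$ requires simultaneous inductive control of an entire family of cyclic $k$-point tensors generalizing $\Omega$, $\Gamma$, $\Upsilon$, and the bookkeeping you flag as the hard part is indeed substantial. The paper sidesteps it entirely: it bounds $|\mathrm{Tr}(S^k)| \leq \|S^2\|_F\,\|S\|_F^{k-2}$, and since $\|S\|_F^2 = \mathrm{Tr}(S^2)$ stays bounded, everything reduces to showing $\|S^2\|_F \to 0$. That in turn is the vanishing of a single four-point tensor $\Psi^{(L)}$ (Lemma~\ref{lem:momentk_S}), handled by the same inductive template as $\Omega$ and $\Gamma$. Your direct route can be made to work once the counting is repaired, but the paper's reduction is much shorter.
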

\begin{proof}
The first moment of $S$ takes the form
\[
\mathrm{Tr}\left(S\right)=\sum_{p}\left(\nabla C\right)^{T}\mathcal{H}_{p,p}Y=\left(\nabla C\right)^{T}G
\]
 where $G$ is the restriction to the training set of the function
$g_{\theta}(x)=\sum_{p}\partial_{\theta_{p}\theta_{p}}^{2}f_{\theta}(x)$.
This process is random at initialization and varies during training.
Lemma \ref{prop:moment1_S} below shows that, in the infinite width
limit, it is a Gaussian process at initialization which then evolves
according to a simple differential equation, hence describing the
evolution of the first moment during training.

The second moment of $S$ takes the form:
\begin{align*}
\mathrm{Tr}(S^{2}) & =\sum_{p_{1},p_{2}=1}^{P}\sum_{i_{1},i_{2}=1}^{N}\partial_{\theta_{p_{1}},\theta_{p_{2}}}^{2}f_{\theta,k_{1}}(x_{1})\partial_{\theta_{p_{2}},\theta_{p_{1}}}^{2}f_{\theta,k_{2}}(x_{2})c_{i_{1}}'(x_{i_{1}})c_{i_{2}}'(x_{i_{2}})\\
 & =\left(\nabla C\right)^{T}\tilde{\Upsilon}\nabla C
\end{align*}
where $\Upsilon_{k_{1},k_{2}}^{(L)}(x_{1},x_{2})=\sum_{p_{1},p_{2}=1}^{P}\partial_{\theta_{p_{1}},\theta_{p_{2}}}^{2}f_{\theta,k_{1}}(x_{1})\partial_{\theta_{p_{2}},\theta_{p_{1}}}^{2}f_{\theta,k_{2}}(x_{2})$
is a multidimensional kernel and $\tilde{\Upsilon}$ is its Gram matrix.
Lemma \ref{prop:moment2_S} below shows that in the infinite-width
limit, $\Upsilon_{k_{1},k_{2}}^{(L)}(x_{1},x_{2})$ converges to a
deterministic and time-independent limit $\Upsilon_{\infty}^{(L)}(x_{1},x_{2})\delta_{k_{1}k_{2}}$.

To show that $\mathrm{Tr}(S^{k})\to0$ for all $k>2$, it suffices
to show that $\left\Vert S^{2}\right\Vert _{F}\to0$ as $\left|\mathrm{Tr}(S^{k})\right|<\left\Vert S^{2}\right\Vert _{F}\left\Vert S\right\Vert _{F}^{k-2}$
and we know that $\left\Vert S\right\Vert _{F}\to\left(\partial_{Y}C\right)^{T}\tilde{\Upsilon}\partial_{Y}C$
is finite. We have that 
\begin{flalign*}
\left\Vert S^{2}\right\Vert _{F} & =\sum_{i_{0},i_{1},i_{2},i_{3}=1}^{N}\sum_{k_{0},k_{1},k_{2},k_{3}=1}^{n_{L}}\Psi_{k_{0},k_{1},k_{2},k_{3}}^{(L)}(x_{i_{0}},x_{i_{1}},x_{i_{2}},x_{i_{3}})\partial_{f_{\theta,k_{0}}(x_{i_{0}})}C\partial_{f_{\theta,k_{1}}(x_{i_{1}})}C\\
 & \hspace{8.15cm}\partial_{f_{\theta,k_{2}}(x_{i_{2}})}C\partial_{f_{\theta,k_{3}}(x_{i_{3}})}C\\
 & =\tilde{\Psi}\cdot\left(\partial_{Y}C\right)^{\otimes4}
\end{flalign*}
for $\tilde{\Psi}$ the $Nn_{L}\times Nn_{L}\times Nn_{L}\times Nn_{L}$
finite version of 
\begin{align*}
\Psi_{k_{0},k_{1},k_{2},k_{3}}^{(L)}(x_{i_{0}},x_{i_{1}},x_{i_{2}},x_{i_{3}}) & =\sum_{p_{0},p_{1},p_{2},p_{3}=1}^{P}\partial_{\theta_{p_{0}},\theta_{p_{1}}}^{2}f_{\theta,k_{0}}(x_{0})\partial_{\theta_{p_{1}},\theta_{p_{2}}}^{2}f_{\theta,k_{1}}(x_{1})\\
 & \hspace{2.35cm}\partial_{\theta_{p_{2}},\theta_{p_{3}}}^{2}f_{\theta,k_{2}}(x_{2})\partial_{\theta_{p_{3}},\theta_{p_{0}}}^{2}f_{\theta,k_{3}}(x_{3}).
\end{align*}
which vanishes in the infinite width limit by Lemma \ref{lem:momentk_S}
below.
\end{proof}
\begin{lem}
\label{prop:moment1_S}For any loss $C$ with BGOSS and $\sigma\in C_{b}^{4}(\mathbb{R})$,
at initialization $g_{\theta}$ and $f_{\theta}$ converge to a (centered)
Gaussian pair with covariances
\begin{align*}
\mathbb{E}[g_{\theta,k}(x)g_{\theta,k'}(x')] & =\delta_{kk'}\Xi_{\infty}^{(L)}(x,x')\\
\mathbb{E}[g_{\theta,k}(x)f_{\theta,k'}(x')] & =\delta_{kk'}\Phi_{\infty}^{(L)}(x,x')\\
\mathbb{E}[f_{\theta,k}(x)f_{\theta,k'}(x')] & =\delta_{kk'}\Sigma_{\infty}^{(L)}(x,x')
\end{align*}
 and during training $g_{\theta}$ evolves according to
\[
\partial_{t}g_{\theta}(x)=\sum_{i=1}^{N}\Lambda_{\infty}^{(L)}(x,x_{i})D_{i}(t)
\]
\end{lem}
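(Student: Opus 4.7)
The plan is to derive a layer-wise recursion for the partial trace $g^{(\ell)}_j(x) := \sum_p \partial^2_{\theta_p^2} \tilde{\alpha}^{(\ell)}_j(x)$ and then establish both the Gaussian limit and the training dynamics by a CLT-style induction in the sequential width limit, mirroring the Gaussian process and NTK arguments of \citep{jacot2018neural}. As a preliminary, observe that $\tilde{\alpha}^{(\ell+1)}_j$ is affine in every parameter of layer $\ell$ (the $W^{(\ell)}_{ij}$ and $b^{(\ell)}_j$), so their diagonal second derivatives vanish; only parameters of strictly lower layers contribute to $g^{(\ell+1)}_j$. Differentiating the recursion $\tilde{\alpha}^{(\ell+1)}_j = \frac{1}{\sqrt{n_\ell}} \sum_i W^{(\ell)}_{ij} \sigma(\tilde{\alpha}^{(\ell)}_i) + \beta b^{(\ell)}_j$ twice with respect to such a $\theta_p$ and summing yields
\[
g^{(\ell+1)}_j(x) = \frac{1}{\sqrt{n_\ell}} \sum_i W^{(\ell)}_{ij} \Big[\ddot{\sigma}\bigl(\tilde{\alpha}^{(\ell)}_i(x)\bigr)\, \Theta^{(\ell)}_{ii}(x,x) + \dot{\sigma}\bigl(\tilde{\alpha}^{(\ell)}_i(x)\bigr)\, g^{(\ell)}_i(x)\Big],
\]
using that $\sum_p \bigl(\partial_{\theta_p} \tilde{\alpha}^{(\ell)}_i(x)\bigr)^2 = \Theta^{(\ell)}_{ii}(x,x)$ is the diagonal of the sub-network NTK. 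The base case $\ell=1$ has $g^{(1)} \equiv 0$, and $\tilde{\alpha}^{(1)}$ is manifestly Gaussian.

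For the Gaussian limit at initialization, I would proceed by induction on $\ell$, assuming that after $n_1,\ldots,n_{\ell-1}\to\infty$ the pairs $\bigl(\tilde{\alpha}^{(\ell)}_i(\cdot), g^{(\ell)}_i(\cdot)\bigr)$ are asymptotically iid in $i$ and centered Gaussian with covariances $\Sigma_\infty^{(\ell)}, \Phi_\infty^{(\ell)}, \Xi_\infty^{(\ell)}$, and that $\Theta^{(\ell)}_{ii}(x,x) \to \Theta_\infty^{(\ell)}(x,x)$ deterministically. Then both $\tilde{\alpha}^{(\ell+1)}_j$ and $g^{(\ell+1)}_j$ are of the form $\frac{1}{\sqrt{n_\ell}} \sum_i W^{(\ell)}_{ij} F_i^{(\ell)}$, with the bracketed quantities $F_i^{(\ell)}$ asymptotically independent of the fresh weights $W^{(\ell)}_{ij}$ and iid across $i$; applying the multivariate CLT conditional on layer $\ell$ yields joint Gaussianity of $\bigl(\tilde{\alpha}^{(\ell+1)}_j, g^{(\ell+1)}_j\bigr)$, with limit covariances read off from $\mathbb{E}[F_i F_i^\top]$. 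This gives recursive definitions of $\Sigma_\infty^{(\ell+1)}, \Phi_\infty^{(\ell+1)}, \Xi_\infty^{(\ell+1)}$ in terms of $\mathbb{L}$-type Gaussian integrals applied to $\sigma, \dot{\sigma}, \ddot{\sigma}$ and $\Theta_\infty^{(\ell)}$.

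For the training dynamics, I would differentiate $g_{\theta,k}(x) = \sum_q \partial^2_{\theta_q^2} f_{\theta,k}(x)$ in $t$ and use $\dot{\theta}_p = (\partial_\theta Y)^\top D(t)$ to obtain
\[
\partial_t g_{\theta,k}(x) = \sum_{i,k'} \Lambda^{(L)}_{kk'}(x, x_i)\, D_{ik'}(t), \qquad \Lambda^{(L)}_{kk'}(x,y) := \sum_{p,q} \partial^3_{\theta_p \theta_q^2} f_{\theta,k}(x)\, \partial_{\theta_p} f_{\theta,k'}(y).
\]
It then remains to prove $\Lambda^{(L)}_{kk'}(x,y) \to \delta_{kk'}\, \Lambda_\infty^{(L)}(x,y)$ uniformly on $[0,T]$ with a deterministic fixed limit. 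This follows from a recursion analogous to those used for $\Omega^{(L)}$ and $\Gamma^{(L)}$ in Lemmas \ref{lem:omega-vanishes} and \ref{lem:Gamma_vanish}: $\Lambda^{(\ell+1)}$ decomposes into sums over $n_\ell$ conditionally independent terms involving $\Theta^{(\ell)}$, $\Omega^{(\ell)}$ (which vanishes) and the joint limit of $(\tilde{\alpha}^{(\ell)}, g^{(\ell)})$. At initialization the limit is obtained from the usual layer-wise CLT/LLN, and the time-variation is controlled by the $O(1/\sqrt{n_\ell})$ displacement of the weights and pre-activations along trajectories with bounded $\int_0^T \|D(t)\|\, dt$, exactly as in the NTK convergence proof of \citep{jacot2018neural}.

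The main technical obstacle is the coupling in the recursion between the CLT limit of $g^{(\ell+1)}$ and the NTK diagonal $\Theta^{(\ell)}_{ii}(x,x)$: one must simultaneously invoke the inductive Gaussian hypothesis for the pair $(\tilde{\alpha}^{(\ell)}, g^{(\ell)})$ and the concentration of $\Theta^{(\ell)}_{ii}$ around its deterministic limit, and verify that the residual fluctuations of $\Theta^{(\ell)}_{ii}$ do not destroy the asymptotic independence of $W^{(\ell)}$ from the bracketed factor. Once this joint control is in place, the Gaussian limit at initialization and the evolution equation for $g$ follow from the standard arguments outlined above.
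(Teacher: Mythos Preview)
Your proposal is correct and follows essentially the same route as the paper: the same layer-wise recursion for $g^{(\ell+1)}$ in terms of $g^{(\ell)}$ and the NTK diagonal $\Theta^{(\ell)}_{ii}$, the same inductive scheme in the sequential limit for the Gaussian pair $(\tilde{\alpha}^{(\ell)}, g^{(\ell)})$, and the same kernel $\Lambda^{(L)}$ governing the training dynamics, shown to be asymptotically fixed by the $O(1/\sqrt{n_\ell})$ displacement argument. The only cosmetic difference is that the paper exploits the \emph{exact} conditional Gaussianity of $(\tilde{\alpha}^{(\ell+1)}_j, g^{(\ell+1)}_j)$ given layer $\ell$ (since the fresh weights $W^{(\ell)}_{ij}$ are themselves Gaussian and independent of the bracketed factors) and then reduces the argument to a law-of-large-numbers concentration of the conditional covariance, rather than phrasing it as a CLT.
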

\begin{proof}
When $L=1$, $g_{\theta}(x)$ is $0$ for any $x$ and $\theta$.

For the inductive step, the trace $g_{\theta,k}^{(L+1)}(x)$ is defined
recursively as
\[
\frac{1}{\sqrt{n_{L}}}\sum_{m=1}^{n_{L}}g_{\theta,m}^{(L)}(x)\dot{\sigma}\left(\tilde{\alpha}_{m}^{(L)}(x)\right)W_{mk}^{(L)}+\mathrm{Tr}\left(\nabla f_{\theta,m}(x)\left(\nabla f_{\theta,m}(x)\right)^{T}\right)\ddot{\sigma}\left(\tilde{\alpha}_{m}^{(L)}(x)\right)W_{mk}^{(L)}
\]
First note that $\mathrm{Tr}\left(\nabla f_{\theta,m}(x)\left(\nabla f_{\theta,m}(x)\right)^{T}\right)=\Theta_{mm}^{(L)}(x,x)$.
Now let $n_{1},...n_{L-1}\to\infty$, by the induction hypothesis,
the pairs $(g_{\theta,m}^{(L)},\tilde{\alpha}_{m}^{(L)})$ converge
to iid Gaussian pairs of processes with covariance $\Phi_{\infty}^{(L)}$
at initialization.

At initialization, conditioned on the values of $g_{m}^{(L)},\tilde{\alpha}_{m}^{(L)}$
the pairs $(g_{k}^{(L+1)},f_{\theta})$ follow a centered Gaussian
distribution with (conditioned) covariance
\begin{align*}
\mathbb{E}[g_{\theta,k}^{(L+1)}(x)g_{\theta,k'}^{(L+1)}(x')|g_{\theta,m}^{(L)},\tilde{\alpha}_{m}^{(L)}] & =\frac{\delta_{kk'}}{n_{L}}\sum_{m=1}^{n_{L}}\left(g_{\theta,m}^{(L)}(x)\dot{\sigma}\left(\tilde{\alpha}_{m}^{(L)}(x)\right)+\Theta_{\infty}^{(L)}(x,x)\ddot{\sigma}\left(\tilde{\alpha}_{m}^{(L)}(x)\right)\right)\\
 & \hspace{45pt}\left(g_{\theta,m}^{(L)}(x')\dot{\sigma}\left(\tilde{\alpha}_{m}^{(L)}(x')\right)+\Theta_{\infty}^{(L)}(x',x')\ddot{\sigma}\left(\tilde{\alpha}_{m}^{(L)}(x')\right)\right)\\
\mathbb{E}[g_{\theta,k}^{(L+1)}(x)f_{\theta,k'}(x')|g_{\theta,m}^{(L)},\tilde{\alpha}_{m}^{(L)}] & =\frac{\delta_{kk'}}{n_{L}}\sum_{m=1}^{n_{L}}\left(g_{\theta,m}^{(L)}(x)\dot{\sigma}\left(\tilde{\alpha}_{m}^{(L)}(x)\right)+\Theta_{\infty}^{(L)}(x,x)\ddot{\sigma}\left(\tilde{\alpha}_{m}^{(L)}(x)\right)\right)\\
 & \hspace{200pt}\sigma\left(\tilde{\alpha}_{m}^{(L)}(x')\right)\\
\mathbb{E}[f_{\theta,k}(x)f_{\theta,k'}(x')|g_{\theta,m}^{(L)},\tilde{\alpha}_{m}^{(L)}] & =\frac{\delta_{kk'}}{n_{L}}\sum_{m=1}^{n_{L}}\sigma\left(\tilde{\alpha}_{m}^{(L)}(x)\right)\sigma\left(\tilde{\alpha}_{m}^{(L)}(x')\right)+\beta^{2}.
\end{align*}
As $n_{L}\to\infty$, by the law of large number, these (random) covariances
converge to their expectations which are deterministic, hence the
pairs $(g_{k}^{(L+1)},f_{\theta k})$ have asymptotically the same
Gaussian distribution independent of $g_{m}^{(L)},\tilde{\alpha}_{m}^{(L)}$:
\begin{align*}
\mathbb{E}\left[g_{\theta,k}^{(L)}(x)g_{\theta,k'}^{(L)}(x')\right] & \to\delta_{kk'}\Xi_{\infty}^{(L)}(x,x')\\
\mathbb{E}\left[g_{\theta,k}^{(L)}(x)f_{\theta,k'}^{(L)}(x')\right] & \to\delta_{kk'}\Phi_{\infty}^{(L)}(x,x)\\
\mathbb{E}\left[f_{\theta,k}^{(L)}(x)f_{\theta,k'}^{(L)}(x')\right] & \to\delta_{kk'}\Sigma_{\infty}^{(L)}(x,x)
\end{align*}
 with $\Xi_{\infty}^{(1)}(x,x')=\Phi_{\infty}^{(1)}(x,x')=0$ and
\begin{align*}
\Xi_{\infty}^{(L+1)}(x,x') & =\mathbb{E}\left[gg'\dot{\sigma}(\alpha)\dot{\sigma}(\alpha')\right]\\
 & +\Theta_{\infty}^{(L)}(x',x')\mathbb{E}\left[g\dot{\sigma}(\alpha)\ddot{\sigma}(\alpha')\right]\\
 & +\Theta_{\infty}^{(L)}(x,x)\mathbb{E}\left[g'\dot{\sigma}(\alpha')\ddot{\sigma}(\alpha)\right]\\
 & +\Theta_{\infty}^{(L)}(x,x)\Theta_{\infty}^{(L)}(x',x')\mathbb{E}\left[\ddot{\sigma}(\alpha')\ddot{\sigma}(\alpha)\right]\\
 & =\Xi_{\infty}^{(L)}(x,x')\dot{\Sigma}_{\infty}^{(L)}(x,x')+\left(\Phi_{\infty}^{(L)}(x,x')\Phi_{\infty}^{(L)}(x',x)+\Phi_{\infty}^{(L)}(x,x)\Phi_{\infty}^{(L)}(x',x')\right)\ddot{\Sigma}_{\infty}^{(L)}(x,x')\\
 & +\Phi_{\infty}^{(L)}(x,x')\Phi_{\infty}^{(L)}(x',x')\mathbb{E}\left[\dot{\sigma}(\alpha)\dddot{\sigma}(\alpha')\right]+\Phi_{\infty}^{(L)}(x,x)\Phi_{\infty}^{(L)}(x',x)\mathbb{E}\left[\dddot{\sigma}(\alpha)\dot{\sigma}(\alpha')\right]\\
 & +\Theta_{\infty}^{(L)}(x',x')\left(\Phi_{\infty}^{(L)}(x,x)\ddot{\Sigma}_{\infty}^{(L)}(x,x')+\Phi_{\infty}^{(L)}(x,x')\mathbb{E}\left[\dot{\sigma}(\alpha)\dddot{\sigma}(\alpha')\right]\right)\\
 & +\Theta_{\infty}^{(L)}(x,x)\left(\Phi_{\infty}^{(L)}(x',x')\ddot{\Sigma}_{\infty}^{(L)}(x,x')+\Phi_{\infty}^{(L)}(x',x)\mathbb{E}\left[\dddot{\sigma}(\alpha)\dot{\sigma}(\alpha')\right]\right)\\
 & +\Theta_{\infty}^{(L)}(x,x)\Theta_{\infty}^{(L)}(x',x')\ddot{\Sigma}_{\infty}^{(L)}(x,x')
\end{align*}
and 
\begin{align*}
\Phi_{\infty}^{(L+1)}(x,x') & =\mathbb{E}\left[g\dot{\sigma}(\alpha)\sigma(\alpha')\right]+\Theta_{\infty}^{(L)}(x,x)\mathbb{E}\left[\ddot{\sigma}(\alpha)\sigma(\alpha')\right]\\
 & =\Phi_{\infty}^{(L)}(x,x')\dot{\Sigma}^{(L+1)}(x,x')+\left(\Phi_{\infty}^{(L)}(x,x)+\Theta_{\infty}^{(L)}(x,x)\right)\mathbb{E}\left[\ddot{\sigma}(\alpha)\sigma(\alpha')\right]
\end{align*}
 where $(g,g',\alpha,\alpha')$ is a Gaussian quadruple of covariance
\[
\left(\begin{array}{cccc}
\Xi_{\infty}^{(L)}(x,x) & \Xi_{\infty}^{(L)}(x,x') & \Phi_{\infty}^{(L)}(x,x) & \Phi_{\infty}^{(L)}(x,x')\\
\Xi_{\infty}^{(L)}(x,x') & \Xi_{\infty}^{(L)}(x',x') & \Phi_{\infty}^{(L)}(x',x) & \Phi_{\infty}^{(L)}(x',x')\\
\Phi_{\infty}^{(L)}(x,x) & \Phi_{\infty}^{(L)}(x',x) & \Sigma_{\infty}^{(L)}(x,x) & \Sigma_{\infty}^{(L)}(x,x')\\
\Phi_{\infty}^{(L)}(x,x') & \Phi_{\infty}^{(L)}(x',x') & \Sigma_{\infty}^{(L)}(x,x') & \Sigma_{\infty}^{(L)}(x',x')
\end{array}\right).
\]

During training, the parameters follow the gradient $\partial_{t}\theta(t)=\left(\partial_{\theta}Y(t)\right)^{T}D(t)$.
By the induction hypothesis, the traces $g_{\theta,m}^{(L)}$ then
evolve according to the differential equation
\[
\partial_{t}g_{\theta,m}^{(L)}(x)=\frac{1}{\sqrt{n_{L}}}\sum_{i=1}^{N}\sum_{m=1}^{n_{L}}\Lambda_{mm'}^{(L)}(x,x_{i})\dot{\sigma}(\tilde{\alpha}_{m'}^{(L)}(x))\left(W_{m'}^{(L)}\right)^{T}D_{i}(t)
\]
and in the limit as $n_{1},...,n_{L-1}\to\infty$, the kernel $\Lambda_{mm'}^{(L)}(x,x_{i})$
converges to a deterministic and fixed limit $\delta_{mm'}\Lambda_{\infty}^{(L)}(x,x_{i})$.
Note that as $n_{L}$ grows, the $g_{\theta,m}^{(L)}(x)$ move at
a rate of $\nicefrac{1}{\sqrt{n_{L}}}$ just like the pre-activations
$\tilde{\alpha}_{m}^{(L)}$. Even though they move less and less,
together they affect the trace $g_{\theta,k}^{(L+1)}$ which follows
the differential equation
\[
\partial_{t}g_{\theta,k}^{(L+1)}(x)=\sum_{i=1}^{N}\sum_{k'=1}^{n_{L}}\Lambda_{kk'}^{(L+1)}(x,x_{i})D_{ik'}(t)
\]
where
\begin{align*}
\Lambda_{kk'}^{(L+1)}(x,x') & =\frac{1}{n_{L}}\sum_{m,m'}\Lambda_{mm'}^{(L)}(x,x')\dot{\sigma}\left(\tilde{\alpha}_{m}^{(L)}(x)\right)\dot{\sigma}\left(\tilde{\alpha}_{m'}^{(L)}(x')\right)W_{mk}^{(L)}W_{m'k'}^{(L)}\\
 & +\frac{1}{n_{L}}\sum_{m,m'}g_{\theta,m}^{(L)}(x)\Theta_{mm'}^{(L)}(x,x')\ddot{\sigma}\left(\tilde{\alpha}_{m}^{(L)}(x)\right)\dot{\sigma}\left(\tilde{\alpha}_{m'}^{(L)}(x')\right)W_{mk}^{(L)}W_{m'k'}^{(L)}\\
 & +\frac{1}{n_{L}}\sum_{m}g_{\theta,m}^{(L)}(x)\dot{\sigma}\left(\tilde{\alpha}_{m}^{(L)}(x)\right)\sigma\left(\tilde{\alpha}_{m}^{(L)}(x')\right)\delta_{kk'}\\
 & +\frac{2}{n_{L}}\sum_{m,m'}\Omega_{m'mm}^{(L)}(x',x,x)\ddot{\sigma}\left(\tilde{\alpha}_{m}^{(L)}(x)\right)\dot{\sigma}\left(\tilde{\alpha}_{m'}^{(L)}(x')\right)W_{mk}^{(L)}W_{m'k'}^{(L)}\\
 & +\frac{1}{n_{L}}\sum_{m,m'}\Theta_{mm}^{(L)}(x,x)\Theta_{mm'}^{(L)}(x,x')\dddot{\sigma}\left(\tilde{\alpha}_{m}^{(L)}(x)\right)\dot{\sigma}\left(\tilde{\alpha}_{m'}^{(L)}(x')\right)W_{mk}^{(L)}W_{m'k'}^{(L)}\\
 & +\frac{1}{n_{L}}\sum_{m}\Theta_{mm}^{(L)}(x,x)\ddot{\sigma}\left(\tilde{\alpha}_{m}^{(L)}(x)\right)\sigma\left(\tilde{\alpha}_{m}^{(L)}(x')\right)\delta_{kk'}.
\end{align*}
As $n_{1},...,n_{L-1}\to\infty$, the kernels $\Theta_{mm'}^{(L)}(x,x')$
and $\Lambda_{mm'}^{(L)}(x,x')$ converge to their limit and $\Omega_{m'mm}^{(L)}(x',x,x)$
vanishes:
\begin{align*}
\Lambda_{kk'}^{(L)}(x,x') & \to\frac{1}{n_{L}}\sum_{m}\Lambda_{\infty}^{(L)}(x,x')\dot{\sigma}\left(\tilde{\alpha}_{m}^{(L)}(x)\right)\dot{\sigma}\left(\tilde{\alpha}_{m}^{(L)}(x')\right)W_{mk}^{(L)}W_{mk'}^{(L)}\\
 & +\frac{1}{n_{L}}\sum_{m}g_{\theta,m}^{(L)}(x)\Theta_{\infty}^{(L)}(x,x')\ddot{\sigma}\left(\tilde{\alpha}_{m}^{(L)}(x)\right)\dot{\sigma}\left(\tilde{\alpha}_{m}^{(L)}(x')\right)W_{mk}^{(L)}W_{mk'}^{(L)}\\
 & +\frac{1}{n_{L}}\sum_{m}g_{\theta,m}^{(L)}(x)\dot{\sigma}\left(\tilde{\alpha}_{m}^{(L)}(x)\right)\sigma\left(\tilde{\alpha}_{m}^{(L)}(x')\right)\delta_{kk'}\\
 & +\frac{1}{n_{L}}\sum_{m}\Theta_{\infty}^{(L)}(x,x)\Theta_{\infty}^{(L)}(x,x')\dddot{\sigma}\left(\tilde{\alpha}_{m}^{(L)}(x)\right)\dot{\sigma}\left(\tilde{\alpha}_{m}^{(L)}(x')\right)W_{mk}^{(L)}W_{mk'}^{(L)}\\
 & +\frac{1}{n_{L}}\sum_{m}\Theta_{\infty}^{(L)}(x,x)\ddot{\sigma}\left(\tilde{\alpha}_{m}^{(L)}(x)\right)\sigma\left(\tilde{\alpha}_{m}^{(L)}(x')\right)\delta_{kk'}
\end{align*}
By the law of large numbers, as $n_{L}\to\infty$, at initialization
$\Lambda_{kk'}^{(L+1)}(x,x')\to\delta_{kk'}\Lambda_{\infty}^{(L+1)}(x,x')$
where 
\begin{align*}
\Lambda_{\infty}^{(L+1)}(x,x') & =\Lambda_{\infty}^{(L)}(x,x')\dot{\Sigma}_{\infty}^{(L+1)}(x,x')\\
 & +\Theta_{\infty}^{(L)}(x,x')\mathbb{E}\left[g\ddot{\sigma}\left(\alpha\right)\dot{\sigma}\left(\alpha'\right)\right]\\
 & +\mathbb{E}\left[g\dot{\sigma}\left(\alpha\right)\sigma\left(\alpha'\right)\right]\\
 & +\Theta_{\infty}^{(L)}(x,x)\Theta_{\infty}^{(L)}(x,x')\mathbb{E}\left[\dddot{\sigma}\left(\alpha\right)\dot{\sigma}\left(\alpha'\right)\right]\\
 & +\Theta_{\infty}^{(L)}(x,x)\mathbb{E}\left[\ddot{\sigma}\left(\alpha\right)\sigma\left(\alpha'\right)\right]\\
 & =\Lambda_{\infty}^{(L)}(x,x')\dot{\Sigma}_{\infty}^{(L+1)}(x,x')\\
 & +\Theta_{\infty}^{(L)}(x,x')\left(\Phi_{\infty}^{(L)}(x,x')\ddot{\Sigma}_{\infty}^{(L+1)}(x,x')+\Phi_{\infty}^{(L)}(x,x)\mathbb{E}\left[\dddot{\sigma}\left(\alpha\right)\dot{\sigma}\left(\alpha'\right)\right]\right)\\
 & +\Phi_{\infty}^{(L)}(x,x')\dot{\Sigma}_{\infty}^{(L+1)}(x,x')+\Phi_{\infty}^{(L)}(x,x)\mathbb{E}\left[\ddot{\sigma}\left(\alpha\right)\sigma\left(\alpha'\right)\right]\\
 & +\Theta_{\infty}^{(L)}(x,x)\Theta_{\infty}^{(L)}(x,x')\mathbb{E}\left[\dddot{\sigma}\left(\alpha\right)\dot{\sigma}\left(\alpha'\right)\right]\\
 & +\Theta_{\infty}^{(L)}(x,x)\mathbb{E}\left[\ddot{\sigma}\left(\alpha\right)\dot{\sigma}\left(\alpha'\right)\right]
\end{align*}

During training $\Theta_{\infty}^{(L)}$ and $\Lambda_{\infty}^{(L)}$
are fixed in the limit $n_{1},..,n_{L-1}\to\infty$, and the values
$g_{\theta,m}^{(L)}(x)$, $\tilde{\alpha}_{m}^{(L)}(x)$ and $W_{mk}^{(L)}$
vary at a rate of $\nicefrac{1}{\sqrt{n_{L}}}$ which induce a change
of the same rate to $\Lambda_{kk'}^{(L)}(x,x')$, which is therefore
asymptotically fixed during training as $n_{L}\to\infty$.
\end{proof}
The next lemma describes the asymptotic limit of the kernel $\Upsilon^{(L)}$:
\begin{lem}
\label{prop:moment2_S}For any loss $C$ with BGOSS and $\sigma\in C_{b}^{4}(\mathbb{R})$,
the second moment of the Hessian of the realization function $\mathcal{H}F^{(L)}$
converges uniformly over $[0,T]$ to a fixed limit as $n_{1},...n_{L-1}\to\infty$
\[
\Upsilon_{kk'}^{(L)}(x,x')\to\delta_{kk'}\sum_{\ell=1}^{L-1}\left(\Theta_{\infty}^{(\ell)}(x,x')^{2}\ddot{\Sigma}_{\infty}^{(\ell)}(x,x')+2\Theta_{\infty}^{(\ell)}(x,x')\dot{\Sigma}_{\infty}^{(\ell)}(x,x')\right)\dot{\Sigma}_{\infty}^{(\ell+1)}(x,x')\cdots\dot{\Sigma}_{\infty}^{(L-1)}(x,x').
\]
\end{lem}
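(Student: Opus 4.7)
The plan is to prove Lemma \ref{prop:moment2_S} by induction on depth $L$, in the same recursive style used in the preceding arguments for $\Theta^{(L)}$, $\Omega^{(L)}$ and $\Gamma^{(L)}$. The base case $L=1$ is immediate: $f_\theta$ is affine in the parameters, so all second derivatives vanish and $\Upsilon^{(1)}\equiv 0$, matching the empty sum. For the inductive step, I start from the identity
\[
\Upsilon_{kk'}^{(L+1)}(x,x')=\sum_{p,p'=1}^{P}\partial_{\theta_{p}\theta_{p'}}^{2}f_{\theta,k}^{(L+1)}(x)\,\partial_{\theta_{p}\theta_{p'}}^{2}f_{\theta,k'}^{(L+1)}(x')
\]
and decompose the double sum according to whether each of $\theta_{p},\theta_{p'}$ belongs to the last layer $L$ or to one of the lower layers $0,\dots,L-1$.

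In the \emph{both in layer $L$} case, every second derivative vanishes since last-layer weights and biases enter $\tilde\alpha^{(L+1)}$ linearly. In the \emph{exactly one in layer $L$} case, only the weight parameters $W_{m,k''}^{(L)}$ contribute; the explicit preliminary formula, together with $\sum_{p'}\partial_{\theta_{p'}}\tilde\alpha_{m}^{(L)}(x)\partial_{\theta_{p'}}\tilde\alpha_{m}^{(L)}(x')=\Theta_{mm}^{(L)}(x,x')$ and the factor of $2$ arising from the two orderings of $(p,p')$, reduces the contribution to
\[
\delta_{kk'}\,\frac{2}{n_{L}}\sum_{m=1}^{n_{L}}\Theta_{mm}^{(L)}(x,x')\,\dot\sigma\bigl(\tilde\alpha_{m}^{(L)}(x)\bigr)\,\dot\sigma\bigl(\tilde\alpha_{m}^{(L)}(x')\bigr),
\]
which, after sending $n_{1},\dots,n_{L-1}\to\infty$ (so that $\Theta^{(L)}$ converges) and then $n_{L}\to\infty$ (law of large numbers over the i.i.d.\ Gaussian last-layer weights), tends to $2\delta_{kk'}\Theta_{\infty}^{(L)}(x,x')\dot\Sigma_{\infty}^{(L+1)}(x,x')$. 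In the \emph{both in lower layers} case, expanding the recursive formula for $\partial_{\theta_{p}\theta_{p'}}^{2}f^{(L+1)}$ gives a product of two sums, and multiplying out and summing over $(p,p')$ produces four families of summands: one governed by $\Upsilon_{mm'}^{(L)}(x,x')$, two governed by $\Omega^{(L)}$, and one by $\Theta_{mm'}^{(L)}(x,x')^{2}$. The two $\Omega^{(L)}$-families vanish by Lemma \ref{lem:omega-vanishes}; the inductive hypothesis $\Upsilon_{mm'}^{(L)}\to\delta_{mm'}\Upsilon_{\infty}^{(L)}$ and the NTK diagonalization $\Theta_{mm'}^{(L)}\to\delta_{mm'}\Theta_{\infty}^{(L)}$, combined with the law of large numbers over the last hidden layer, contribute respectively $\delta_{kk'}\Upsilon_{\infty}^{(L)}(x,x')\dot\Sigma_{\infty}^{(L+1)}(x,x')$ and $\delta_{kk'}\Theta_{\infty}^{(L)}(x,x')^{2}\ddot\Sigma_{\infty}^{(L+1)}(x,x')$.

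Summing the three cases yields a recursion of the form $\Upsilon_{\infty}^{(L+1)}=\Upsilon_{\infty}^{(L)}\dot\Sigma_{\infty}^{(L+1)}+\Theta_{\infty}^{(L)}{}^{2}\ddot\Sigma_{\infty}^{(L+1)}+2\Theta_{\infty}^{(L)}\dot\Sigma_{\infty}^{(L+1)}$, which unrolls (with $\Upsilon_{\infty}^{(1)}=0$) to the closed form stated in the proposition. To promote the convergence to be uniform over $[0,T]$, I follow the motion estimate already used for $\Omega^{(L)}$ and $\Lambda^{(L)}$: during training $\tilde\alpha_{m}^{(L)}$ and $W_{mk}^{(L)}$ each move at rate $O(1/\sqrt{n_{L}})$ and $\dot\sigma,\ddot\sigma$ are Lipschitz, so each $1/n_{L}$-normalised sum over the last hidden layer changes by $O(1/\sqrt{n_{L}})$ uniformly on $[0,T]$; hence its limit is time-independent. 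The main obstacle is the $\Omega^{(L)}$-families: they are $1/n_{L}$-weighted sums of $O(n_{L}^{2})$ terms, each of which only \emph{pointwise} vanishes via Lemma \ref{lem:omega-vanishes}, so one must verify that the decay survives this aggregation. As in the existing proof of Lemma \ref{prop:moment1_S}, I would handle this by conditioning on the layer-$L$ weights and controlling the conditional second moment of the sum, using the independence (in the $n_{1},\dots,n_{L-1}\to\infty$ limit) of the neurons at layer $L$ together with the boundedness of $\dot\sigma,\ddot\sigma$.
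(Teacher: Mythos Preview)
Your approach is essentially the same as the paper's: induction on depth, write $\Upsilon^{(L+1)}$ recursively in terms of $\Upsilon^{(L)}$, two $\Omega^{(L)}$-type cross terms, a $\Theta^{(L)2}$ term, and the ``one parameter in the last layer'' term with its factor of $2$; kill the $\Omega^{(L)}$ contributions via Lemma~\ref{lem:omega-vanishes}, diagonalise $\Upsilon^{(L)}$ and $\Theta^{(L)}$, apply the law of large numbers in $n_{L}$, and use the $O(1/\sqrt{n_{L}})$ motion estimate for the time-uniformity. One remark: your ``main obstacle'' is not one. In the sequential limit you first send $n_{1},\dots,n_{L-1}\to\infty$ \emph{with $n_{L}$ held fixed}; at that stage the $\Omega^{(L)}$-sums are finite sums of $n_{L}^{2}$ terms each tending to $0$, so the whole family vanishes before you ever let $n_{L}\to\infty$. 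No second-moment conditioning is needed; the paper simply states that $\Omega^{(L)}$ vanishes and moves on. Also note a harmless index mismatch: since $\tilde\alpha^{(L)}$ has covariance $\Sigma_{\infty}^{(L)}$, the expectations $\mathbb{E}[\dot\sigma(\tilde\alpha^{(L)})\dot\sigma(\tilde\alpha^{(L)})]$ equal $\dot\Sigma_{\infty}^{(L+1)}$ under the paper's convention $\dot\Sigma_{\infty}^{(\ell)}=\mathbb{L}_{\Sigma_{\infty}^{(\ell-1)}}^{\dot\sigma}$, whereas the paper (and the closed form in the statement) labels this same quantity $\dot\Sigma_{\infty}^{(L)}$; just make your recursion consistent with whatever labelling the stated closed form uses.
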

\begin{proof}
The proof is by induction on the depth $L$. The case $L=1$ is trivially
true because $\partial_{\theta_{p}\theta_{p'}}^{2}f_{\theta,k}(x)=0$
for all $p,p',k,x$. For the induction step we observe that 
\begin{align*}
 & \Upsilon_{k,k'}^{(L)}(x,x')\\
 & =\sum_{p_{1},p_{2}=1}^{P}\partial_{\theta_{p_{1}},\theta_{p_{2}}}^{2}f_{\theta,k}(x)\partial_{\theta_{p_{2}},\theta_{p_{1}}}^{2}f_{\theta,k'}(x')\\
 & =\frac{1}{n_{L}}\sum_{m,m'=1}^{n_{L}}\Upsilon_{m,m'}^{(L)}(x,x')\dot{\sigma}\left(\tilde{\alpha}_{m}^{(L)}(x)\right)\dot{\sigma}\left(\tilde{\alpha}_{m'}^{(L)}(x')\right)W_{mk}^{(L)}W_{m'k'}^{(L)}\\
 & +\frac{1}{n_{L}}\sum_{m,m'=1}^{n_{L}}\Omega_{m',m,m'}^{(L)}(x',x,x')\dot{\sigma}\left(\tilde{\alpha}_{m}^{(L)}(x)\right)\ddot{\sigma}\left(\tilde{\alpha}_{m'}^{(L)}(x')\right)W_{mk}^{(L)}W_{m'k'}^{(L)}\\
 & +\frac{1}{n_{L}}\sum_{m,m'=1}^{n_{L}}\Omega_{m,m',m}^{(L)}(x,x',x)\ddot{\sigma}\left(\tilde{\alpha}_{m}^{(L)}(x)\right)\dot{\sigma}\left(\tilde{\alpha}_{m'}^{(L)}(x')\right)W_{mk}^{(L)}W_{m'k'}^{(L)}\\
 & +\frac{1}{n_{L}}\sum_{m,m'=1}^{n_{L}}\Theta_{m,m'}^{(L)}(x,x')\Theta_{m',m}^{(L)}(x',x)\ddot{\sigma}\left(\tilde{\alpha}_{m}^{(L)}(x)\right)\ddot{\sigma}\left(\tilde{\alpha}_{m'}^{(L)}(x')\right)W_{mk}^{(L)}W_{m'k'}^{(L)}\\
 & +\frac{2}{n_{L}}\sum_{m=1}^{n_{L}}\Theta_{m,m'}^{(L)}(x,x')\dot{\sigma}\left(\tilde{\alpha}_{m}^{(L)}(x)\right)\dot{\sigma}\left(\tilde{\alpha}_{m'}^{(L)}(x')\right)\delta_{kk'}
\end{align*}
if we now let the width of the lower layers grow to infinity $n_{1},...n_{L-1}\to\infty$,
the tensor $\Omega^{(L)}$ vanishes and $\Upsilon_{m,m'}^{(L)}$ and
the NTK $\Theta_{m,m'}^{(L)}$ converge to limits which are non-zero
only when $m=m'$. As a result, the term above converges to 
\begin{align*}
 & \frac{1}{n_{L}}\sum_{m=1}^{n_{L}}\Upsilon_{\infty}^{(L)}(x,x')\dot{\sigma}\left(\tilde{\alpha}_{m}^{(L)}(x)\right)\dot{\sigma}\left(\tilde{\alpha}_{m}^{(L)}(x')\right)W_{mk}^{(L)}W_{mk'}^{(L)}\\
 & +\frac{1}{n_{L}}\sum_{m=1}^{n_{L}}\Theta_{\infty}^{(L)}(x,x')^{2}\ddot{\sigma}\left(\tilde{\alpha}_{m}^{(L)}(x)\right)\ddot{\sigma}\left(\tilde{\alpha}_{m}^{(L)}(x')\right)W_{mk}^{(L)}W_{mk'}^{(L)}\\
 & +\frac{2}{n_{L}}\sum_{m=1}^{n_{L}}\Theta_{\infty}^{(L)}(x,x')\dot{\sigma}\left(\tilde{\alpha}_{m}^{(L)}(x)\right)\dot{\sigma}\left(\tilde{\alpha}_{m}^{(L)}(x')\right)\delta_{kk'}
\end{align*}
At initialization, we can apply the law of large numbers as $n_{L}\to\infty$
such that it converges to $\Upsilon_{\infty}^{(L+1)}(x,x')\delta_{kk'}$,
for the kernel $\Upsilon_{\infty}^{(L+1)}(x,x')$ defined recursively
by 
\begin{align*}
\Upsilon_{\infty}^{(L+1)}(x,x')= & \Upsilon_{\infty}^{(L)}(x,x')\dot{\Sigma}_{\infty}^{(L)}(x,x')+\Theta_{\infty}^{(L)}(x,x')^{2}\ddot{\Sigma}_{\infty}^{(L)}(x,x')+2\Theta_{\infty}^{(L)}(x,x')\dot{\Sigma}_{\infty}^{(L)}(x,x')
\end{align*}
and $\Upsilon_{\infty}^{(1)}(x,x')=0$. 

For the convergence during training, we proceed similarily to the
proof of Lemma \ref{lem:omega-vanishes}: the activations $\tilde{\alpha}_{m}^{(L)}(x)$
and weights $W_{mk}^{(L)}$ move at a rate of $\nicefrac{1}{\sqrt{n_{L}}}$
and the change to $\Upsilon_{kk'}^{(L+1)}$ is therefore of order
$\nicefrac{1}{\sqrt{n_{L}}}$ and vanishes as $n_{L}\to0$.
\end{proof}

Finally, the next lemma shows the vanishing of the tensor $\Psi_{k_{0},k_{1},k_{2},k_{3}}^{(L)}$
to prove that the higher moments of $S$ vanish.
\begin{lem}
\label{lem:momentk_S}For any loss $C$ with BGOSS and $\sigma\in C_{b}^{4}(\mathbb{R})$,
uniformly over $[0,T]$ 
\[
\lim_{n_{L-1}\to\infty}\cdots\lim_{n_{1}\to\infty}\Psi_{k_{0},k_{1},k_{2},k_{3}}^{(L)}(x_{i_{0}},x_{i_{1}},x_{i_{2}},x_{i_{3}})=0
\]
\end{lem}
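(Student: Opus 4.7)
The plan is to mirror the induction used for Lemmas \ref{lem:omega-vanishes} and \ref{lem:Gamma_vanish}, but one level higher. For the base case $L=1$ there is nothing to do, since every $\partial_{\theta_p\theta_{p'}}^2 f_{\theta,k}^{(1)}$ vanishes identically, making $\Psi^{(1)}\equiv 0$. For the inductive step I would substitute the recursive formula for $\partial_{\theta_p\theta_{p'}}^{2}f_{\theta,k}^{(L+1)}$ into each of the four Hessian factors in $\Psi^{(L+1)}$. Each factor can contribute one of three ``branches'': (i) the chain-rule piece $n_L^{-1/2}\sum_m \partial_{\theta_p\theta_{p'}}^2\tilde\alpha_m^{(L)}\,\dot\sigma(\tilde\alpha_m^{(L)})\,W_{m k_s}^{(L)}$, (ii) the ``product-of-first-derivatives'' piece $n_L^{-1/2}\sum_m \partial_{\theta_p}\tilde\alpha_m^{(L)}\partial_{\theta_{p'}}\tilde\alpha_m^{(L)}\,\ddot\sigma(\tilde\alpha_m^{(L)})\,W_{mk_s}^{(L)}$, or (iii) a boundary piece arising when one of the two parameters is a last-layer weight, producing a Kronecker $\delta_{k_sk_{s+1}}$.

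After expanding the product and carrying out the cyclic contraction over $p_0,p_1,p_2,p_3$, every resulting term carries the global prefactor $n_L^{-2}$ and sorts into four types according to how many factors chose branch (i):
\begin{itemize}
\item If all four chose (i), the $p$-contraction produces $\Psi^{(L)}_{m_0,m_1,m_2,m_3}(x_0,x_1,x_2,x_3)$, which vanishes by the inductive hypothesis as $n_1,\ldots,n_{L-1}\to\infty$.
\item If exactly three chose (i), the remaining factor of type (ii) closes two of the pre-activation derivatives into a single $\partial_{\theta_p}\tilde\alpha_m^{(L)}$, so the $p$-contraction produces $\Omega^{(L)}$, which vanishes by Lemma \ref{lem:omega-vanishes}.
\item If exactly two chose (i) (in either a ``consecutive'' or ``opposite'' arrangement along the cycle), the $p$-contraction produces $\Gamma^{(L)}$ (possibly coupled with a residual $\Theta^{(L)}$), and vanishes by Lemma \ref{lem:Gamma_vanish}.
\item If at most one factor is of type (i), the $p$-contraction yields only products of the NTKs $\Theta^{(L)}_{m_i,m_j}$, of preactivation values, and of last-layer weights.
\end{itemize}

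In the last category, the lower-layer limits force the NTK Gram factors to collapse onto the diagonal, $\Theta^{(L)}_{m,m'}\to\delta_{mm'}\Theta_{\infty}^{(L)}$, and in the sequential limit the preactivations $\tilde\alpha_m^{(L)}$ are asymptotically iid across $m$. Because the cycle is of length four, the surviving sum contains at most $n_L$ effectively independent summands, while the prefactor is $n_L^{-2}$; the resulting expression is therefore bounded in $L^2$ by $O(n_L^{-1/2})$ at initialization via a standard LLN/variance argument on iid centered Gaussian contributions (the $W^{(L)}$ factors are centered, independent of the lower layers in the sequential limit). The same argument handles the boundary-$\delta$ terms, where the reduction in $W^{(L)}$ factors is compensated by a reduction of one $m$-summation. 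During training, copying the rate-of-change argument from the proofs of Lemmas \ref{lem:omega-vanishes} and \ref{lem:Gamma_vanish}, each weight and preactivation moves at rate $n_L^{-1/2}$ on $[0,T]$, adding a contribution of order $n_L^{-2+1/2}=n_L^{-3/2}$ uniformly in $t$; this still vanishes.

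The main obstacle is purely combinatorial bookkeeping: there are $3^4$ ways to pick the branches (modulo the obvious symmetries), and one must check that each of the resulting terms actually falls into one of the four categories above and that no residual $O(1)$ contribution survives the $n_L^{-2}$ prefactor. Conceptually the key point, and the reason the cyclic 4-moment vanishes while the cyclic 2-moment $\Upsilon^{(L)}$ does not, is that four $n_L^{-1/2}$ factors together yield a prefactor $n_L^{-2}$ which beats the $n_L$ worth of effective diagonal index freedom by a factor $n_L^{-1}$, whereas for $\Upsilon^{(L)}$ the analogous competition between $n_L^{-1}$ and $n_L$ is exactly balanced.
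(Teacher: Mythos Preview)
Your overall strategy coincides with the paper's: induct on $L$, expand each Hessian factor in $\Psi^{(L+1)}$ through its three branches, identify the lower-layer tensor produced by each combination, and finish off the surviving sums with the $n_L^{-2}$ prefactor and the $n_L^{-1/2}$ drift bound. The base case and the endgame (LLN at initialization, rate-of-change during training) are fine.

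The gap is in your classification of the middle cases. Branch (ii) inserts the rank-one matrix $\nabla^{m_s}(x_s)\nabla^{m_s}(x_s)^{T}$, whereas branch (i) inserts the full Hessian $H^{m_s}(x_s)=\mathcal H\tilde\alpha_{m_s}^{(L)}(x_s)$; in the cyclic trace each maximal run of $r$ consecutive (i)-factors therefore produces a block $\nabla^{T}H_{1}\cdots H_{r}\nabla$, while the separating (ii)-factors contribute inner products $\Theta^{(L)}$. So the correct attributions are:
\begin{itemize}
\item one (i), three (ii): $\Omega^{(L)}\,\Theta^{(L)}\,\Theta^{(L)}$, not ``only products of NTKs'';
\item two (i) in opposite positions: $\Omega^{(L)}\cdot\Omega^{(L)}$, not $\Gamma^{(L)}$;
\item three (i), one (ii): the \emph{new} quantity $\nabla^{T}H_{0}H_{1}H_{2}\nabla$, not $\Omega^{(L)}$.
\end{itemize}
(Your identifications of four (i) as $\Psi^{(L)}$, of two consecutive (i) as $\Gamma^{(L)}\Theta^{(L)}$, and of zero (i) as $\Theta^{(L)}\Theta^{(L)}\Theta^{(L)}\Theta^{(L)}$ are correct.) The first two slips are harmless because $\Omega^{(L)}\to 0$ still forces those terms to vanish. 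The third is a genuine hole: $\nabla^{T}H_{0}H_{1}H_{2}\nabla$ is covered by neither Lemma~\ref{lem:omega-vanishes} nor Lemma~\ref{lem:Gamma_vanish}. It can be closed by Cauchy--Schwarz,
\[
\bigl|\nabla^{T}H_{0}H_{1}H_{2}\nabla\bigr|\;\le\;\|H_{0}\nabla\|\;\|H_{1}\|_{op}\;\|H_{2}\nabla\|\;\le\;\sqrt{\Gamma^{(L)}}\,\sqrt{\Upsilon^{(L)}}\,\sqrt{\Gamma^{(L)}}\;\to\;0,
\]
using Lemma~\ref{lem:Gamma_vanish} and the boundedness of $\Upsilon^{(L)}$ from Lemma~\ref{prop:moment2_S}; since $n_L$ is held fixed while $n_1,\ldots,n_{L-1}\to\infty$, this kills the finite $m$-sum before the $n_L$ limit is taken. (The paper's written expansion also omits these four rotations, so the bound above is needed to make either argument complete.)
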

\begin{proof}
When $L=1$ the Hessian is zero and $\Psi_{k_{0},k_{1},k_{2},k_{3}}^{(1)}(x_{i_{0}},x_{i_{1}},x_{i_{2}},x_{i_{3}})=0$.

For the induction step, we write $\Psi_{k_{0},k_{1},k_{2},k_{3}}^{(L+1)}(x_{i_{0}},x_{i_{1}},x_{i_{2}},x_{i_{3}})$
recursively, because it contains many terms, we change the notation,
writing $\left[\begin{array}{cc}
x_{0} & x_{1}\\
m_{0} & m_{1}
\end{array}\right]$ for $\Theta_{m_{0},m_{1}}^{(L)}(x_{0},x_{1})$, $\left[\begin{array}{ccc}
x_{0} & x_{1} & x_{2}\\
m_{0} & m_{1} & m_{2}
\end{array}\right]$ for $\Omega_{m_{0},m_{1},m_{2}}^{(L)}(x_{0},x_{1},x_{2})$ and $\left[\begin{array}{cccc}
x_{0} & x_{1} & x_{2} & x_{3}\\
m_{0} & m_{1} & m_{2} & m_{3}
\end{array}\right]$ for $\Gamma_{m_{0},m_{1},m_{2},m_{3}}^{(L)}(x_{0},x_{1},x_{2},x_{3})$.
The value $\Psi_{k_{0},k_{1},k_{2},k_{3}}^{(L+1)}(x_{i_{0}},x_{i_{1}},x_{i_{2}},x_{i_{3}})$
is then equal to %
\begin{align*}
 & \begin{aligned}n_{L}^{-2}\sum_{m_{0},m_{1},m_{2},m_{3}}\Psi_{m_{0},m_{1},m_{2},m_{3}}^{(L)}(x_{0},x_{1},x_{2},x_{3})\dot{\sigma}\left(\tilde{\alpha}_{m_{0}}^{(L)}(x_{0})\right)\dot{\sigma}\left(\tilde{\alpha}_{m_{1}}^{(L)}(x_{1})\right)\dot{\sigma}\left(\tilde{\alpha}_{m_{2}}^{(L)}(x_{2})\right)\\
\dot{\sigma}\left(\tilde{\alpha}_{m_{3}}^{(L)}(x_{3})\right)W_{m_{0}k_{0}}^{(L)}W_{m_{1}k_{1}}^{(L)}W_{m_{2}k_{2}}^{(L)}W_{m_{3}k_{3}}^{(L)}
\end{aligned}
\\
 & \begin{aligned}+n_{L}^{-2}\sum_{m_{0},m_{1},m_{2},m_{3}}\left[\begin{array}{cc}
x_{0} & x_{1}\\
m_{0} & m_{1}
\end{array}\right]\left[\begin{array}{cc}
x_{1} & x_{2}\\
m_{1} & m_{2}
\end{array}\right]\left[\begin{array}{cc}
x_{2} & x_{3}\\
m_{2} & m_{3}
\end{array}\right]\left[\begin{array}{cc}
x_{3} & x_{0}\\
m_{3} & m_{0}
\end{array}\right]\ddot{\sigma}\left(\tilde{\alpha}_{m_{0}}^{(L)}(x_{0})\right)\\
\ddot{\sigma}\left(\tilde{\alpha}_{m_{1}}^{(L)}(x_{1})\right)\ddot{\sigma}\left(\tilde{\alpha}_{m_{2}}^{(L)}(x_{2})\right)\ddot{\sigma}\left(\tilde{\alpha}_{m_{3}}^{(L)}(x_{3})\right)W_{m_{0}k_{0}}^{(L)}W_{m_{1}k_{1}}^{(L)}W_{m_{2}k_{2}}^{(L)}W_{m_{3}k_{3}}^{(L)}
\end{aligned}
\\
 & \begin{aligned}+n_{L}^{-2}\sum_{m_{0},m_{1},m_{2},m_{3}}\left[\begin{array}{ccc}
x_{0} & x_{1} & x_{2}\\
m_{0} & m_{1} & m_{2}
\end{array}\right]\left[\begin{array}{cc}
x_{2} & x_{3}\\
m_{2} & m_{3}
\end{array}\right]\left[\begin{array}{cc}
x_{3} & x_{0}\\
m_{3} & m_{0}
\end{array}\right]\ddot{\sigma}\left(\tilde{\alpha}_{m_{0}}^{(L)}(x_{0})\right)\dot{\sigma}\left(\tilde{\alpha}_{m_{1}}^{(L)}(x_{1})\right)\\
\ddot{\sigma}\left(\tilde{\alpha}_{m_{2}}^{(L)}(x_{2})\right)\ddot{\sigma}\left(\tilde{\alpha}_{m_{3}}^{(L)}(x_{3})\right)W_{m_{0}k_{0}}^{(L)}W_{m_{1}k_{1}}^{(L)}W_{m_{2}k_{2}}^{(L)}W_{m_{3}k_{3}}^{(L)}
\end{aligned}
\\
 & \begin{aligned}+n_{L}^{-2}\sum_{m_{0},m_{1},m_{2},m_{3}}\left[\begin{array}{cc}
x_{0} & x_{1}\\
m_{0} & m_{1}
\end{array}\right]\left[\begin{array}{ccc}
x_{1} & x_{2} & x_{3}\\
m_{1} & m_{2} & m_{3}
\end{array}\right]\left[\begin{array}{cc}
x_{3} & x_{0}\\
m_{3} & m_{0}
\end{array}\right]\ddot{\sigma}\left(\tilde{\alpha}_{m_{0}}^{(L)}(x_{0})\right)\ddot{\sigma}\left(\tilde{\alpha}_{m_{1}}^{(L)}(x_{1})\right)\\
\dot{\sigma}\left(\tilde{\alpha}_{m_{2}}^{(L)}(x_{2})\right)\ddot{\sigma}\left(\tilde{\alpha}_{m_{3}}^{(L)}(x_{3})\right)W_{m_{0}k_{0}}^{(L)}W_{m_{1}k_{1}}^{(L)}W_{m_{2}k_{2}}^{(L)}W_{m_{3}k_{3}}^{(L)}
\end{aligned}
\\
 & \begin{aligned}+n_{L}^{-2}\sum_{m_{0},m_{1},m_{2},m_{3}}\left[\begin{array}{cc}
x_{0} & x_{1}\\
m_{0} & m_{1}
\end{array}\right]\left[\begin{array}{cc}
x_{1} & x_{2}\\
m_{1} & m_{2}
\end{array}\right]\left[\begin{array}{ccc}
x_{2} & x_{3} & x_{0}\\
m_{2} & m_{3} & m_{0}
\end{array}\right]\ddot{\sigma}\left(\tilde{\alpha}_{m_{0}}^{(L)}(x_{0})\right)\ddot{\sigma}\left(\tilde{\alpha}_{m_{1}}^{(L)}(x_{1})\right)\\
\ddot{\sigma}\left(\tilde{\alpha}_{m_{2}}^{(L)}(x_{2})\right)\dot{\sigma}\left(\tilde{\alpha}_{m_{3}}^{(L)}(x_{3})\right)W_{m_{0}k_{0}}^{(L)}W_{m_{1}k_{1}}^{(L)}W_{m_{2}k_{2}}^{(L)}W_{m_{3}k_{3}}^{(L)}
\end{aligned}
\\
 & \begin{aligned}+n_{L}^{-2}\sum_{m_{0},m_{1},m_{2},m_{3}}\left[\begin{array}{cc}
x_{1} & x_{2}\\
m_{1} & m_{2}
\end{array}\right]\left[\begin{array}{cc}
x_{2} & x_{3}\\
m_{2} & m_{3}
\end{array}\right]\left[\begin{array}{ccc}
x_{3} & x_{0} & x_{1}\\
m_{3} & m_{0} & m_{1}
\end{array}\right]\dot{\sigma}\left(\tilde{\alpha}_{m_{0}}^{(L)}(x_{0})\right)\ddot{\sigma}\left(\tilde{\alpha}_{m_{1}}^{(L)}(x_{1})\right)\\
\ddot{\sigma}\left(\tilde{\alpha}_{m_{2}}^{(L)}(x_{2})\right)\ddot{\sigma}\left(\tilde{\alpha}_{m_{3}}^{(L)}(x_{3})\right)W_{m_{0}k_{0}}^{(L)}W_{m_{1}k_{1}}^{(L)}W_{m_{2}k_{2}}^{(L)}W_{m_{3}k_{3}}^{(L)}
\end{aligned}
\\
 & \begin{aligned}+n_{L}^{-2}\sum_{m_{0},m_{1},m_{2},m_{3}}\left[\begin{array}{ccc}
x_{0} & x_{1} & x_{2}\\
m_{0} & m_{1} & m_{2}
\end{array}\right]\left[\begin{array}{ccc}
x_{2} & x_{3} & x_{0}\\
m_{2} & m_{3} & m_{0}
\end{array}\right]\ddot{\sigma}\left(\tilde{\alpha}_{m_{0}}^{(L)}(x_{0})\right)\dot{\sigma}\left(\tilde{\alpha}_{m_{1}}^{(L)}(x_{1})\right)\\
\ddot{\sigma}\left(\tilde{\alpha}_{m_{2}}^{(L)}(x_{2})\right)\dot{\sigma}\left(\tilde{\alpha}_{m_{3}}^{(L)}(x_{3})\right)W_{m_{0}k_{0}}^{(L)}W_{m_{1}k_{1}}^{(L)}W_{m_{2}k_{2}}^{(L)}W_{m_{3}k_{3}}^{(L)}
\end{aligned}
\\
 & \begin{aligned}+n_{L}^{-2}\sum_{m_{0},m_{1},m_{2},m_{3}}\left[\begin{array}{ccc}
x_{1} & x_{2} & x_{3}\\
m_{1} & m_{2} & m_{3}
\end{array}\right]\left[\begin{array}{ccc}
x_{3} & x_{0} & x_{1}\\
m_{3} & m_{0} & m_{1}
\end{array}\right]\dot{\sigma}\left(\tilde{\alpha}_{m_{0}}^{(L)}(x_{0})\right)\ddot{\sigma}\left(\tilde{\alpha}_{m_{1}}^{(L)}(x_{1})\right)\\
\dot{\sigma}\left(\tilde{\alpha}_{m_{2}}^{(L)}(x_{2})\right)\ddot{\sigma}\left(\tilde{\alpha}_{m_{3}}^{(L)}(x_{3})\right)W_{m_{0}k_{0}}^{(L)}W_{m_{1}k_{1}}^{(L)}W_{m_{2}k_{2}}^{(L)}W_{m_{3}k_{3}}^{(L)}
\end{aligned}
\\
 & \begin{aligned}+n_{L}^{-2}\sum_{m_{0},m_{1},m_{2},m_{3}}\left[\begin{array}{cccc}
x_{0} & x_{1} & x_{2} & x_{3}\\
m_{0} & m_{1} & m_{2} & m_{3}
\end{array}\right]\left[\begin{array}{cc}
x_{3} & x_{0}\\
m_{3} & m_{0}
\end{array}\right]\ddot{\sigma}\left(\tilde{\alpha}_{m_{0}}^{(L)}(x_{0})\right)\dot{\sigma}\left(\tilde{\alpha}_{m_{1}}^{(L)}(x_{1})\right)\\
\dot{\sigma}\left(\tilde{\alpha}_{m_{2}}^{(L)}(x_{2})\right)\ddot{\sigma}\left(\tilde{\alpha}_{m_{3}}^{(L)}(x_{3})\right)W_{m_{0}k_{0}}^{(L)}W_{m_{1}k_{1}}^{(L)}W_{m_{2}k_{2}}^{(L)}W_{m_{3}k_{3}}^{(L)}
\end{aligned}
\\
 & \begin{aligned}+n_{L}^{-2}\sum_{m_{0},m_{1},m_{2},m_{3}}\left[\begin{array}{cc}
x_{0} & x_{1}\\
m_{0} & m_{1}
\end{array}\right]\left[\begin{array}{cccc}
x_{1} & x_{2} & x_{3} & x_{0}\\
m_{1} & m_{2} & m_{3} & m_{0}
\end{array}\right]\ddot{\sigma}\left(\tilde{\alpha}_{m_{0}}^{(L)}(x_{0})\right)\ddot{\sigma}\left(\tilde{\alpha}_{m_{1}}^{(L)}(x_{1})\right)\\
\dot{\sigma}\left(\tilde{\alpha}_{m_{2}}^{(L)}(x_{2})\right)\dot{\sigma}\left(\tilde{\alpha}_{m_{3}}^{(L)}(x_{3})\right)W_{m_{0}k_{0}}^{(L)}W_{m_{1}k_{1}}^{(L)}W_{m_{2}k_{2}}^{(L)}W_{m_{3}k_{3}}^{(L)}
\end{aligned}
\\
 & \begin{aligned}+n_{L}^{-2}\sum_{m_{0},m_{1},m_{2},m_{3}}\left[\begin{array}{cc}
x_{1} & x_{2}\\
m_{1} & m_{2}
\end{array}\right]\left[\begin{array}{cccc}
x_{2} & x_{3} & x_{0} & x_{1}\\
m_{2} & m_{3} & m_{0} & m_{1}
\end{array}\right]\dot{\sigma}\left(\tilde{\alpha}_{m_{0}}^{(L)}(x_{0})\right)\ddot{\sigma}\left(\tilde{\alpha}_{m_{1}}^{(L)}(x_{1})\right)\\
\ddot{\sigma}\left(\tilde{\alpha}_{m_{2}}^{(L)}(x_{2})\right)\dot{\sigma}\left(\tilde{\alpha}_{m_{3}}^{(L)}(x_{3})\right)W_{m_{0}k_{0}}^{(L)}W_{m_{1}k_{1}}^{(L)}W_{m_{2}k_{2}}^{(L)}W_{m_{3}k_{3}}^{(L)}
\end{aligned}
\\
 & \begin{aligned}+n_{L}^{-2}\sum_{m_{0},m_{1},m_{2},m_{3}}\left[\begin{array}{cc}
x_{2} & x_{3}\\
m_{2} & m_{3}
\end{array}\right]\left[\begin{array}{cccc}
x_{3} & x_{0} & x_{1} & x_{2}\\
m_{3} & m_{0} & m_{1} & m_{2}
\end{array}\right]\dot{\sigma}\left(\tilde{\alpha}_{m_{0}}^{(L)}(x_{0})\right)\dot{\sigma}\left(\tilde{\alpha}_{m_{1}}^{(L)}(x_{1})\right)\\
\ddot{\sigma}\left(\tilde{\alpha}_{m_{2}}^{(L)}(x_{2})\right)\ddot{\sigma}\left(\tilde{\alpha}_{m_{3}}^{(L)}(x_{3})\right)W_{m_{0}k_{0}}^{(L)}W_{m_{1}k_{1}}^{(L)}W_{m_{2}k_{2}}^{(L)}W_{m_{3}k_{3}}^{(L)}
\end{aligned}
\\
 & \begin{aligned}+n_{L}^{-2}\sum_{m,m_{1},m_{2}}\left[\begin{array}{cc}
x_{0} & x_{1}\\
m & m_{1}
\end{array}\right]\left[\begin{array}{cc}
x_{1} & x_{2}\\
m_{1} & m_{2}
\end{array}\right]\left[\begin{array}{cc}
x_{2} & x_{3}\\
m_{2} & m
\end{array}\right]\dot{\sigma}\left(\tilde{\alpha}_{m}^{(L)}(x_{0})\right)\ddot{\sigma}\left(\tilde{\alpha}_{m_{1}}^{(L)}(x_{1})\right)\\
\ddot{\sigma}\left(\tilde{\alpha}_{m_{2}}^{(L)}(x_{2})\right)\dot{\sigma}\left(\tilde{\alpha}_{m}^{(L)}(x_{3})\right)W_{m_{1}k_{1}}^{(L)}W_{m_{2}k_{2}}^{(L)}\delta_{k_{0}k_{3}}
\end{aligned}
\\
 & \begin{aligned}+n_{L}^{-2}\sum_{m,m_{2},m_{3}}\left[\begin{array}{cc}
x_{1} & x_{2}\\
m & m_{2}
\end{array}\right]\left[\begin{array}{cc}
x_{2} & x_{3}\\
m_{2} & m_{3}
\end{array}\right]\left[\begin{array}{cc}
x_{3} & x_{0}\\
m_{3} & m
\end{array}\right]\dot{\sigma}\left(\tilde{\alpha}_{m}^{(L)}(x_{0})\right)\dot{\sigma}\left(\tilde{\alpha}_{m}^{(L)}(x_{1})\right)\\
\ddot{\sigma}\left(\tilde{\alpha}_{m_{2}}^{(L)}(x_{2})\right)\ddot{\sigma}\left(\tilde{\alpha}_{m_{3}}^{(L)}(x_{3})\right)W_{m_{2}k_{2}}^{(L)}W_{m_{3}k_{3}}^{(L)}\delta_{k_{0}k_{1}}
\end{aligned}
\\
 & \begin{aligned}+n_{L}^{-2}\sum_{m,m_{3},m_{0}}\left[\begin{array}{cc}
x_{0} & x_{1}\\
m_{0} & m
\end{array}\right]\left[\begin{array}{cc}
x_{2} & x_{3}\\
m & m_{3}
\end{array}\right]\left[\begin{array}{cc}
x_{3} & x_{0}\\
m_{3} & m_{0}
\end{array}\right]\ddot{\sigma}\left(\tilde{\alpha}_{m_{0}}^{(L)}(x_{0})\right)\dot{\sigma}\left(\tilde{\alpha}_{m}^{(L)}(x_{1})\right)\\
\dot{\sigma}\left(\tilde{\alpha}_{m}^{(L)}(x_{2})\right)\ddot{\sigma}\left(\tilde{\alpha}_{m_{3}}^{(L)}(x_{3})\right)W_{m_{0}k_{0}}^{(L)}W_{m_{3}k_{3}}^{(L)}\delta_{k_{1}k_{2}}
\end{aligned}
\\
 & \begin{aligned}+n_{L}^{-2}\sum_{m,m_{0},m_{1}}\left[\begin{array}{cc}
x_{0} & x_{1}\\
m_{0} & m_{1}
\end{array}\right]\left[\begin{array}{cc}
x_{1} & x_{2}\\
m_{1} & m
\end{array}\right]\left[\begin{array}{cc}
x_{3} & x_{0}\\
m & m_{0}
\end{array}\right]\ddot{\sigma}\left(\tilde{\alpha}_{m_{0}}^{(L)}(x_{0})\right)\ddot{\sigma}\left(\tilde{\alpha}_{m_{1}}^{(L)}(x_{1})\right)\\
\dot{\sigma}\left(\tilde{\alpha}_{m}^{(L)}(x_{2})\right)\dot{\sigma}\left(\tilde{\alpha}_{m}^{(L)}(x_{3})\right)W_{m_{0}k_{0}}^{(L)}W_{m_{1}k_{1}}^{(L)}\delta_{k_{2}k_{3}}
\end{aligned}
\\
 & \begin{aligned}+n_{L}^{-2}\sum_{m,m_{1},m_{2}}\left[\begin{array}{ccc}
x_{0} & x_{1} & x_{2}\\
m & m_{1} & m_{2}
\end{array}\right]\left[\begin{array}{cc}
x_{2} & x_{3}\\
m_{2} & m
\end{array}\right]\dot{\sigma}\left(\tilde{\alpha}_{m}^{(L)}(x_{0})\right)\dot{\sigma}\left(\tilde{\alpha}_{m_{1}}^{(L)}(x_{1})\right)\\
\ddot{\sigma}\left(\tilde{\alpha}_{m_{2}}^{(L)}(x_{2})\right)\dot{\sigma}\left(\tilde{\alpha}_{m}^{(L)}(x_{3})\right)W_{m_{1}k_{1}}^{(L)}W_{m_{2}k_{2}}^{(L)}\delta_{k_{0}k_{3}}
\end{aligned}
\\
 & \begin{aligned}+n_{L}^{-2}\sum_{m,m_{2},m_{3}}\left[\begin{array}{ccc}
x_{1} & x_{2} & x_{3}\\
m & m_{2} & m_{3}
\end{array}\right]\left[\begin{array}{cc}
x_{3} & x_{0}\\
m_{3} & m
\end{array}\right]\dot{\sigma}\left(\tilde{\alpha}_{m}^{(L)}(x_{0})\right)\dot{\sigma}\left(\tilde{\alpha}_{m}^{(L)}(x_{1})\right)\dot{\sigma}\left(\tilde{\alpha}_{m_{2}}^{(L)}(x_{2})\right)\\
\ddot{\sigma}\left(\tilde{\alpha}_{m_{3}}^{(L)}(x_{3})\right)W_{m_{2}k_{2}}^{(L)}W_{m_{3}k_{3}}^{(L)}\delta_{k_{0}k_{1}}
\end{aligned}
\\
 & \begin{aligned}+n_{L}^{-2}\sum_{m,m_{3},m_{0}}\left[\begin{array}{cc}
x_{0} & x_{1}\\
m_{0} & m
\end{array}\right]\left[\begin{array}{ccc}
x_{2} & x_{3} & x_{0}\\
m & m_{3} & m_{0}
\end{array}\right]\ddot{\sigma}\left(\tilde{\alpha}_{m_{0}}^{(L)}(x_{0})\right)\dot{\sigma}\left(\tilde{\alpha}_{m}^{(L)}(x_{1})\right)\\
\dot{\sigma}\left(\tilde{\alpha}_{m}^{(L)}(x_{2})\right)\dot{\sigma}\left(\tilde{\alpha}_{m_{3}}^{(L)}(x_{3})\right)W_{m_{0}k_{0}}^{(L)}W_{m_{3}k_{3}}^{(L)}\delta_{k_{1}k_{2}}
\end{aligned}
\\
 & \begin{aligned}+n_{L}^{-2}\sum_{m,m_{0},m_{1}}\left[\begin{array}{cc}
x_{1} & x_{2}\\
m_{1} & m
\end{array}\right]\left[\begin{array}{ccc}
x_{3} & x_{0} & x_{1}\\
m & m_{0} & m_{1}
\end{array}\right]\dot{\sigma}\left(\tilde{\alpha}_{m_{0}}^{(L)}(x_{0})\right)\ddot{\sigma}\left(\tilde{\alpha}_{m_{1}}^{(L)}(x_{1})\right)\\
\dot{\sigma}\left(\tilde{\alpha}_{m}^{(L)}(x_{2})\right)\dot{\sigma}\left(\tilde{\alpha}_{m}^{(L)}(x_{3})\right)W_{m_{0}k_{0}}^{(L)}W_{m_{1}k_{1}}^{(L)}\delta_{k_{2}k_{3}}
\end{aligned}
\\
 & \begin{aligned}+n_{L}^{-2}\sum_{m,m_{1},m_{2}}\left[\begin{array}{cc}
x_{0} & x_{1}\\
m & m_{1}
\end{array}\right]\left[\begin{array}{ccc}
x_{1} & x_{2} & x_{3}\\
m_{1} & m_{2} & m
\end{array}\right]\dot{\sigma}\left(\tilde{\alpha}_{m}^{(L)}(x_{0})\right)\ddot{\sigma}\left(\tilde{\alpha}_{m_{1}}^{(L)}(x_{1})\right)\\
\dot{\sigma}\left(\tilde{\alpha}_{m_{2}}^{(L)}(x_{2})\right)\dot{\sigma}\left(\tilde{\alpha}_{m}^{(L)}(x_{3})\right)W_{m_{1}k_{1}}^{(L)}W_{m_{2}k_{2}}^{(L)}\delta_{k_{0}k_{3}}
\end{aligned}
\\
 & \begin{aligned}+n_{L}^{-2}\sum_{m,m_{2},m_{3}}\left[\begin{array}{cc}
x_{1} & x_{2}\\
m & m_{2}
\end{array}\right]\left[\begin{array}{ccc}
x_{2} & x_{3} & x_{0}\\
m_{2} & m_{3} & m
\end{array}\right]\dot{\sigma}\left(\tilde{\alpha}_{m}^{(L)}(x_{0})\right)\dot{\sigma}\left(\tilde{\alpha}_{m}^{(L)}(x_{1})\right)\\
\ddot{\sigma}\left(\tilde{\alpha}_{m_{2}}^{(L)}(x_{2})\right)\dot{\sigma}\left(\tilde{\alpha}_{m_{3}}^{(L)}(x_{3})\right)W_{m_{2}k_{2}}^{(L)}W_{m_{3}k_{3}}^{(L)}\delta_{k_{0}k_{1}}
\end{aligned}
\\
 & \begin{aligned}+n_{L}^{-2}\sum_{m,m_{3},m_{0}}\left[\begin{array}{cc}
x_{2} & x_{3}\\
m & m_{3}
\end{array}\right]\left[\begin{array}{ccc}
x_{3} & x_{0} & x_{1}\\
m_{3} & m_{0} & m
\end{array}\right]\dot{\sigma}\left(\tilde{\alpha}_{m_{0}}^{(L)}(x_{0})\right)\dot{\sigma}\left(\tilde{\alpha}_{m}^{(L)}(x_{1})\right)\\
\dot{\sigma}\left(\tilde{\alpha}_{m}^{(L)}(x_{2})\right)\ddot{\sigma}\left(\tilde{\alpha}_{m_{3}}^{(L)}(x_{3})\right)W_{m_{0}k_{0}}^{(L)}W_{m_{3}k_{3}}^{(L)}\delta_{k_{1}k_{2}}
\end{aligned}
\\
 & \begin{aligned}+n_{L}^{-2}\sum_{m,m_{0},m_{1}}\left[\begin{array}{ccc}
x_{0} & x_{1} & x_{2}\\
m_{0} & m_{1} & m
\end{array}\right]\left[\begin{array}{cc}
x_{3} & x_{0}\\
m & m_{0}
\end{array}\right]\ddot{\sigma}\left(\tilde{\alpha}_{m_{0}}^{(L)}(x_{0})\right)\dot{\sigma}\left(\tilde{\alpha}_{m_{1}}^{(L)}(x_{1})\right)\\
\dot{\sigma}\left(\tilde{\alpha}_{m}^{(L)}(x_{2})\right)\dot{\sigma}\left(\tilde{\alpha}_{m}^{(L)}(x_{3})\right)W_{m_{0}k_{0}}^{(L)}W_{m_{1}k_{1}}^{(L)}\delta_{k_{2}k_{3}}
\end{aligned}
\\
 & \begin{aligned}+n_{L}^{-2}\sum_{m,m_{1},m_{2}}\left[\begin{array}{cccc}
x_{0} & x_{1} & x_{2} & x_{3}\\
m & m_{1} & m_{2} & m
\end{array}\right]\dot{\sigma}\left(\tilde{\alpha}_{m}^{(L)}(x_{0})\right)\dot{\sigma}\left(\tilde{\alpha}_{m_{1}}^{(L)}(x_{1})\right)\dot{\sigma}\left(\tilde{\alpha}_{m_{2}}^{(L)}(x_{2})\right)\dot{\sigma}\left(\tilde{\alpha}_{m}^{(L)}(x_{3})\right)\\
W_{m_{1}k_{1}}^{(L)}W_{m_{2}k_{2}}^{(L)}\delta_{k_{0}k_{3}}
\end{aligned}
\\
 & \begin{aligned}+n_{L}^{-2}\sum_{m,m_{2},m_{3}}\left[\begin{array}{cccc}
x_{1} & x_{2} & x_{3} & x_{0}\\
m & m_{2} & m_{3} & m
\end{array}\right]\dot{\sigma}\left(\tilde{\alpha}_{m}^{(L)}(x_{0})\right)\dot{\sigma}\left(\tilde{\alpha}_{m}^{(L)}(x_{1})\right)\dot{\sigma}\left(\tilde{\alpha}_{m_{2}}^{(L)}(x_{2})\right)\dot{\sigma}\left(\tilde{\alpha}_{m_{3}}^{(L)}(x_{3})\right)\\
W_{m_{2}k_{2}}^{(L)}W_{m_{3}k_{3}}^{(L)}\delta_{k_{0}k_{1}}
\end{aligned}
\\
 & \begin{aligned}+n_{L}^{-2}\sum_{m,m_{3},m_{0}}\left[\begin{array}{cccc}
x_{2} & x_{3} & x_{0} & x_{1}\\
m & m_{3} & m_{0} & m
\end{array}\right]\dot{\sigma}\left(\tilde{\alpha}_{m_{0}}^{(L)}(x_{0})\right)\dot{\sigma}\left(\tilde{\alpha}_{m}^{(L)}(x_{1})\right)\dot{\sigma}\left(\tilde{\alpha}_{m}^{(L)}(x_{2})\right)\dot{\sigma}\left(\tilde{\alpha}_{m_{3}}^{(L)}(x_{3})\right)\\
W_{m_{0}k_{0}}^{(L)}W_{m_{3}k_{3}}^{(L)}\delta_{k_{1}k_{2}}
\end{aligned}
\\
 & \begin{aligned}+n_{L}^{-2}\sum_{m,m_{0},m_{1}}\left[\begin{array}{cccc}
x_{3} & x_{0} & x_{1} & x_{2}\\
m & m_{0} & m_{1} & m
\end{array}\right]\dot{\sigma}\left(\tilde{\alpha}_{m_{0}}^{(L)}(x_{0})\right)\dot{\sigma}\left(\tilde{\alpha}_{m_{1}}^{(L)}(x_{1})\right)\dot{\sigma}\left(\tilde{\alpha}_{m}^{(L)}(x_{2})\right)\dot{\sigma}\left(\tilde{\alpha}_{m}^{(L)}(x_{3})\right)\\
W_{m_{0}k_{0}}^{(L)}W_{m_{1}k_{1}}^{(L)}\delta_{k_{2}k_{3}}
\end{aligned}
\\
 & \begin{aligned}+n_{L}^{-2}\sum_{m,m'}\left[\begin{array}{cc}
x_{0} & x_{1}\\
m & m'
\end{array}\right]\left[\begin{array}{cc}
x_{2} & x_{3}\\
m' & m
\end{array}\right]\dot{\sigma}\left(\tilde{\alpha}_{m}^{(L)}(x_{0})\right)\dot{\sigma}\left(\tilde{\alpha}_{m'}^{(L)}(x_{1})\right)\dot{\sigma}\left(\tilde{\alpha}_{m'}^{(L)}(x_{2})\right)\dot{\sigma}\left(\tilde{\alpha}_{m}^{(L)}(x_{3})\right)\\
\delta_{k_{0}k_{1}}\delta_{k_{2}k_{3}}
\end{aligned}
\\
 & \begin{aligned}+n_{L}^{-2}\sum_{m,m'}\left[\begin{array}{cc}
x_{1} & x_{2}\\
m & m'
\end{array}\right]\left[\begin{array}{cc}
x_{3} & x_{0}\\
m' & m
\end{array}\right]\dot{\sigma}\left(\tilde{\alpha}_{m}^{(L)}(x_{0})\right)\dot{\sigma}\left(\tilde{\alpha}_{m}^{(L)}(x_{1})\right)\dot{\sigma}\left(\tilde{\alpha}_{m'}^{(L)}(x_{2})\right)\dot{\sigma}\left(\tilde{\alpha}_{m'}^{(L)}(x_{3})\right)\\
\delta_{k_{0}k_{3}}\delta_{k_{1}k_{2}}
\end{aligned}
\end{align*}

Even though this is a very large formula one can notice that most
terms are ``rotation of each other''. Moreover, as $n_{1},...,n_{L-1}\to\infty$,
all terms containing either an $\Psi^{(L)}$, an $\Omega^{(L)}$ or
a $\Gamma^{(L)}$ vanish. For the remaining terms, we may replace
the NTKs $\Theta^{(L)}$ by their limit and as a result $\Psi_{k_{0},k_{1},k_{2},k_{3}}^{(L+1)}(x_{i_{0}},x_{i_{1}},x_{i_{2}},x_{i_{3}})$
converges to
\begin{align*}
 & \begin{aligned}n_{L}^{-2}\sum_{m}\Theta_{\infty}^{(L)}(x_{0},x_{1})\Theta_{\infty}^{(L)}(x_{1},x_{2})\Theta_{\infty}^{(L)}(x_{2},x_{3})\Theta_{\infty}^{(L)}(x_{3},x_{0})\ddot{\sigma}\left(\tilde{\alpha}_{m}^{(L)}(x_{0})\right)\ddot{\sigma}\left(\tilde{\alpha}_{m}^{(L)}(x_{1})\right)\\
\ddot{\sigma}\left(\tilde{\alpha}_{m}^{(L)}(x_{2})\right)\ddot{\sigma}\left(\tilde{\alpha}_{m}^{(L)}(x_{3})\right)W_{mk_{0}}^{(L)}W_{mk_{1}}^{(L)}W_{mk_{2}}^{(L)}W_{mk_{3}}^{(L)}
\end{aligned}
\\
 & \begin{aligned}+n_{L}^{-2}\sum_{m}\Theta_{\infty}^{(L)}(x_{0},x_{1})\Theta_{\infty}^{(L)}(x_{1},x_{2})\Theta_{\infty}^{(L)}(x_{2},x_{3})\dot{\sigma}\left(\tilde{\alpha}_{m}^{(L)}(x_{0})\right)\ddot{\sigma}\left(\tilde{\alpha}_{m}^{(L)}(x_{1})\right)\\
\ddot{\sigma}\left(\tilde{\alpha}_{m}^{(L)}(x_{2})\right)\dot{\sigma}\left(\tilde{\alpha}_{m}^{(L)}(x_{3})\right)W_{mk_{1}}^{(L)}W_{mk_{2}}^{(L)}\delta_{k_{0}k_{3}}
\end{aligned}
\\
 & \begin{aligned}+n_{L}^{-2}\sum_{m}\Theta_{\infty}^{(L)}(x_{1},x_{2})\Theta_{\infty}^{(L)}(x_{2},x_{3})\Theta_{\infty}^{(L)}(x_{3},x_{0})\dot{\sigma}\left(\tilde{\alpha}_{m}^{(L)}(x_{0})\right)\dot{\sigma}\left(\tilde{\alpha}_{m}^{(L)}(x_{1})\right)\\
\ddot{\sigma}\left(\tilde{\alpha}_{m}^{(L)}(x_{2})\right)\ddot{\sigma}\left(\tilde{\alpha}_{m}^{(L)}(x_{3})\right)W_{mk_{2}}^{(L)}W_{mk_{3}}^{(L)}\delta_{k_{0}k_{1}}
\end{aligned}
\\
 & \begin{aligned}+n_{L}^{-2}\sum_{m}\Theta_{\infty}^{(L)}(x_{0},x_{1})\Theta_{\infty}^{(L)}(x_{2},x_{3})\Theta_{\infty}^{(L)}(x_{3},x_{0})\ddot{\sigma}\left(\tilde{\alpha}_{m}^{(L)}(x_{0})\right)\dot{\sigma}\left(\tilde{\alpha}_{m}^{(L)}(x_{1})\right)\\
\dot{\sigma}\left(\tilde{\alpha}_{m}^{(L)}(x_{2})\right)\ddot{\sigma}\left(\tilde{\alpha}_{m}^{(L)}(x_{3})\right)W_{mk_{0}}^{(L)}W_{mk_{3}}^{(L)}\delta_{k_{1}k_{2}}
\end{aligned}
\\
 & \begin{aligned}+n_{L}^{-2}\sum_{m}\Theta_{\infty}^{(L)}(x_{0},x_{1})\Theta_{\infty}^{(L)}(x_{1},x_{2})\Theta_{\infty}^{(L)}(x_{3},x_{0})\ddot{\sigma}\left(\tilde{\alpha}_{m}^{(L)}(x_{0})\right)\ddot{\sigma}\left(\tilde{\alpha}_{m}^{(L)}(x_{1})\right)\\
\dot{\sigma}\left(\tilde{\alpha}_{m}^{(L)}(x_{2})\right)\dot{\sigma}\left(\tilde{\alpha}_{m}^{(L)}(x_{3})\right)W_{mk_{0}}^{(L)}W_{mk_{1}}^{(L)}\delta_{k_{2}k_{3}}
\end{aligned}
\\
 & \begin{aligned}+n_{L}^{-2}\sum_{m}\Theta_{\infty}^{(L)}(x_{0},x_{1})\Theta_{\infty}^{(L)}(x_{2},x_{3})\dot{\sigma}\left(\tilde{\alpha}_{m}^{(L)}(x_{0})\right)\dot{\sigma}\left(\tilde{\alpha}_{m}^{(L)}(x_{1})\right)\\
\dot{\sigma}\left(\tilde{\alpha}_{m}^{(L)}(x_{2})\right)\dot{\sigma}\left(\tilde{\alpha}_{m}^{(L)}(x_{3})\right)\delta_{k_{0}k_{1}}\delta_{k_{2}k_{3}}
\end{aligned}
\\
 & \begin{aligned}+n_{L}^{-2}\sum_{m}\Theta_{\infty}^{(L)}(x_{1},x_{2})\Theta_{\infty}^{(L)}(x_{3},x_{0})\dot{\sigma}\left(\tilde{\alpha}_{m}^{(L)}(x_{0})\right)\dot{\sigma}\left(\tilde{\alpha}_{m}^{(L)}(x_{1})\right)\\
\dot{\sigma}\left(\tilde{\alpha}_{m}^{(L)}(x_{2})\right)\dot{\sigma}\left(\tilde{\alpha}_{m}^{(L)}(x_{3})\right)\delta_{k_{0}k_{3}}\delta_{k_{1}k_{2}}
\end{aligned}
\end{align*}
And all these sums vanish as $n_{L}\to\infty$ thanks to the prefactor
$n_{L}^{-2}$, proving the vanishing of $\Psi_{k_{0},k_{1},k_{2},k_{3}}^{(L+1)}(x_{i_{0}},x_{i_{1}},x_{i_{2}},x_{i_{3}})$
in the infinite width limit.

During training, the activations $\tilde{\alpha}_{m}^{(L)}(x)$ and
weights $W_{mk}^{(L)}$ move at a rate of $\nicefrac{1}{\sqrt{n_{L}}}$
which induces a change to $\Psi^{(L+1)}$ of order $n_{L}^{-\nicefrac{3}{2}}$
which vanishes in the infinite width limit.
\end{proof}

\section{Orthogonality of $I$ and $S$\label{sec:Orthogonality-of-I-S}}

From Lemma \ref{lem:Gamma_vanish} and the vanishing of the tensor
$\Gamma^{(L)}$ as proven in Lemma \ref{lem:Gamma_vanish}, we can
easily prove the orthogonality of $I$ and $S$ of Proposition \ref{prop:orthogonality_I_S}:
\begin{prop}
\label{prop:orthogonality_I_S}For any loss $C$ with BGOSS and $\sigma\in C_{b}^{4}(\mathbb{R})$,
we have uniformly over $[0,T]$
\[
\lim_{n_{L-1}\to\infty}\cdots\lim_{n_{1}\to\infty}\|IS\|_{F}=0.
\]
As a consequence $\lim_{n_{L-1}\to\infty}\cdots\lim_{n_{1}\to\infty}\mathrm{Tr}\left(\left[I+S\right]^{k}\right)-\left[\mathrm{Tr}\left(I^{k}\right)+\mathrm{Tr}\left(S^{k}\right)\right]=0$.
\end{prop}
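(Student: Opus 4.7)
The plan is to reduce the claim $\|IS\|_F \to 0$ to Lemma \ref{lem:Gamma_vanish} via an algebraic identity. First I would use the symmetry of $I$ and $S$ together with cyclicity of the trace to write
\[
\|IS\|_F^2 = \mathrm{Tr}((IS)^T(IS)) = \mathrm{Tr}(S I^2 S) = \mathrm{Tr}\bigl(\mathcal{H}C\,\tilde{\Theta}\,\mathcal{H}C\,BB^T\bigr),
\]
where $B := \mathcal{D}Y^{(L)} S$ has size $Nn_L \times P$ and I have used $\mathcal{D}Y^{(L)}(\mathcal{D}Y^{(L)})^T = \tilde{\Theta}$ as well as $I^2 = (\mathcal{D}Y^{(L)})^T \mathcal{H}C\,\tilde{\Theta}\,\mathcal{H}C\,\mathcal{D}Y^{(L)}$. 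The $Nn_L \times Nn_L$ matrix $\mathcal{H}C\,\tilde{\Theta}\,\mathcal{H}C$ has entries uniformly bounded on $[0,T]$ since the NTK Gram matrix converges to a deterministic finite limit and $\mathcal{H}C$ is controlled on the sublevel set $\{C \leq C(Y(0))\}$ by BGOSS, so it suffices to show that every entry of $BB^T$ vanishes.

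The second step is an entry-wise identification: unrolling $B_{jm,q} = \sum_{ik}(\nabla C)_{ik}\,[\mathcal{H}f_{\theta,k}(x_i)\nabla f_{\theta,m}(x_j)]_q$ and collapsing $\sum_q$ to the inner product of two $\mathcal{H}f\cdot\nabla f$ vectors gives
\[
(BB^T)_{jm,j'm'} = \sum_{i,k,i',k'}(\nabla C)_{ik}(\nabla C)_{i'k'}\,\Gamma^{(L)}_{m,k,k',m'}(x_j,x_i,x_{i'},x_{j'}).
\]
By Lemma \ref{lem:Gamma_vanish} each $\Gamma^{(L)}$ entry vanishes uniformly on $[0,T]$, and BGOSS provides a uniform bound on $\nabla C$, so $BB^T \to 0$ entry-wise (the index set is finite) and hence $\|IS\|_F \to 0$ uniformly on $[0,T]$.

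For the moment identity, I would expand $(I+S)^k$ into the $2^k$ non-commutative monomials in the letters $I$ and $S$. The two pure monomials contribute $\mathrm{Tr}(I^k)+\mathrm{Tr}(S^k)$; every mixed monomial, by cyclicity of the trace, can be rotated to contain the substring $IS$, so its trace has the form $\mathrm{Tr}(U\cdot IS)$ for some word $U$ of length $k-2$. Cauchy--Schwarz yields $|\mathrm{Tr}(U\cdot IS)| \leq \|U\|_F\|IS\|_F$. Since $\|I\|_F$ and $\|S\|_F$ have finite limits by Propositions \ref{prop:moments-I} and \ref{prop:S_moments}, submultiplicativity of the Frobenius norm (using $\|AB\|_F \leq \|A\|_{op}\|B\|_F \leq \|A\|_F\|B\|_F$) keeps $\|U\|_F$ uniformly bounded, so every mixed term is $O(\|IS\|_F) \to 0$, and summing finitely many such terms gives the moment splitting.

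The main obstacle is purely algebraic: the key insight is to group $\mathcal{D}Y^{(L)}S$ together so that $BB^T$ naturally produces the four-gradient-two-Hessian pattern $(\nabla f)^T \mathcal{H}f\,\mathcal{H}f\,\nabla f$ defining $\Gamma^{(L)}$. Once this identification is made, both halves of the proposition follow directly from the already-established vanishing of $\Gamma^{(L)}$ and a standard Cauchy--Schwarz argument.
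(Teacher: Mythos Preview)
Your proof is correct and follows essentially the same route as the paper: both reduce $\|IS\|_F^2$ to a finite contraction involving the tensor $\Gamma^{(L)}$ (the paper by direct expansion into $\sum c''c''c'c'\,\Theta\,\Gamma$, you via the equivalent repackaging $\mathrm{Tr}(\mathcal{H}C\,\tilde\Theta\,\mathcal{H}C\,BB^T)$ with $(BB^T)$ built from $\Gamma^{(L)}$), invoke Lemma~\ref{lem:Gamma_vanish}, and then treat the moment identity by expanding $(I+S)^k$ into words and bounding each mixed word through the factor $\|IS\|_F$ using submultiplicativity. One small slip: BGOSS bounds $\nabla C$, not $\mathcal{H}C$; the uniform control of $\mathcal{H}C$ on the sublevel set requires a separate mild assumption on the $c_i$ (trivially satisfied for MSE and cross-entropy), which the paper also leaves implicit.
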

\begin{proof}
The Frobenius norm of $IS$ is equal to
\begin{align*}
\left\Vert IS\right\Vert _{F}^{2} & =\left\Vert \mathcal{D}Y\mathcal{H}C\left(\mathcal{D}Y\right)^{T}\left(\nabla C\cdot\mathcal{H}Y\right)\right\Vert _{F}^{2}\\
 & =\sum_{p_{1},p_{2}=1}^{P}\left(\sum_{p=1}^{P}\sum_{i_{1},i_{2}=1}^{N}\sum_{k_{1},k_{2}=1}^{n_{L}}\partial_{\theta_{p_{1}}}f_{\theta,k_{1}}(x_{i_{1}})c''_{k_{1}}(x_{i_{1}})\partial_{\theta_{p}}f_{\theta,k_{1}}(x_{i_{1}})\partial_{\theta_{p},\theta_{p_{3}}}^{2}f_{\theta,k_{2}}(x_{2})(x_{i_{2}})c'_{k_{2}}(x_{i_{2}})\right)^{2}\\
 & =\sum_{i_{1},i_{2},i'_{1},i'_{2}=1}^{N}\sum_{k_{1},k_{2},k'_{1},k'_{2}=1}^{n_{L}}c''_{k_{1}}(x_{i_{1}})c''_{k'_{1}}(x_{i'_{1}})c'_{k_{2}}(x_{i_{2}})c'_{k'_{2}}(x_{i'_{2}})\Theta_{k_{1},k'_{1}}(x_{i_{1}},x_{i'_{1}})\Gamma_{k_{1},k_{2},k'_{2},k'_{1}}(x_{i_{1}},x_{i_{2}},x_{i'_{2}},x_{i'_{1}})
\end{align*}
and $\Gamma$ vanishes as $n_{1},...,n_{L-1}\to\infty$ by Lemma \ref{lem:Gamma_vanish}.

The $k$-th moment of the sum $\mathrm{Tr}\left(I+S\right)^{k}$ is
equal to the sum over all $\mathrm{Tr}\left(A_{1}\cdots A_{k}\right)$
for any word $A_{1}\ldots A_{k}$ of $A_{i}\in\left\{ I,S\right\} $.
The difference $\mathrm{Tr}\left(\left[I+S\right]^{k}\right)-\left[\mathrm{Tr}\left(I^{k}\right)+\mathrm{Tr}\left(S^{k}\right)\right]$
is hence equal to the sum over all mixed words, i.e. words $A_{1}\ldots A_{k}$
which contain at least one $I$ and one $S$. Such words must contain
two consecutive terms $A_{m}A_{m+1}$ one equal to $I$ and the other
equal to $S$. We can then bound the trace by
\[
\left|\mathrm{Tr}\left(A_{1}\cdots A_{k}\right)\right|\leq\left\Vert A_{1}\right\Vert _{F}\cdots\left\Vert A_{m-1}\right\Vert _{F}\left\Vert A_{m}A_{m+1}\right\Vert _{F}\left\Vert A_{m+2}\right\Vert _{F}\cdots\left\Vert A_{k}\right\Vert _{F}
\]
which vanishes in the infinite width limit because $\left\Vert I\right\Vert _{F}$
and $\left\Vert S\right\Vert _{F}$ are bounded and $\left\Vert A_{m}A_{m+1}\right\Vert _{F}=\left\Vert IS\right\Vert _{F}$
vanishes.
\end{proof}

\end{document}